\documentclass{article}
\usepackage{microtype}
\usepackage{mathtools}
\usepackage{mathrsfs}
\usepackage{algorithm}
\usepackage{algorithmic}
\usepackage{amssymb}
\usepackage{graphicx}
\usepackage[numbers,sort&compress]{natbib}
\usepackage[colorlinks=true, allcolors=blue]{hyperref}
\usepackage{subcaption}
\usepackage{bm}
\usepackage{amsmath}
\numberwithin{equation}{section}
\usepackage{booktabs}
\usepackage{tabularx}
\usepackage{multirow}
\usepackage{enumerate}
\usepackage{amsthm}
\usepackage{authblk}

\usepackage{fancyhdr}
\newtheorem{theorem}{Theorem}[section]
\newtheorem{lemma}[theorem]{Lemma}

\newtheorem{corollary}[theorem]{Corollary}

\theoremstyle{definition}
\newtheorem{definition}[theorem]{Definition}

\newtheorem{remark}[theorem]{Remark}
\allowdisplaybreaks

\def\e{\mathbb{E}}

\def\\ell{\mathcal{L}}
\def\rbb{\mathbb{R}}

\newcommand{\by}{\mathbf{y}}
\newcommand{\bz}{\mathbf{z}}

\newcommand{\bw}{\mathbf{w}}
\newcommand{\bfm}{\mathbf{m}}

\newcommand{\xcal}{\mathcal{X}}
\newcommand{\wcal}{\mathcal{W}}

\newcommand{\bx}{\mathbf{x}}

\newcommand{\bu}{\mathbf{u}}

\newcommand{\zcal}{\mathcal{Z}}

\newcommand{\ycal}{\mathcal{Y}}

\newcommand{\ebb}{\mathbb{E}}
\newcommand{\bg}{\mathbf{g}}

\newcommand{\bv}{\mathbf{v}}

\newcommand{\pbb}{\mathbb{P}}

\title{Stochastic Gradient Descent with Momentum is Algorithmically Stable}
\author[1]{Yunwen Lei}
\author[2]{Zimeng Wang}
\author[1]{Xiaoming Yuan}
\affil[1]{Department of Mathematics, The University of Hong Kong, Hong Kong, China\\
\texttt{leiyw@hku.hk, xmyuan@hku.hk}}
\affil[2]{Department of Mathematics and Mathematical Statistics, Ume\r{a} University, Sweden\\
\texttt{zimeng.wang@umu.se}}

\date{Jan 08, 2026}

\begin{document}

\maketitle
\begin{abstract}
    Stochastic gradient descent with momentum (SGDM) is one of the most widely used optimization algorithms in machine learning. While optimization properties of SGDM have been extensively studied in the literature, it remains insufficiently understood whether and when SGDM can generalize well to unseen data. In particular, it has been conjectured that while momentum accelerates training, it may degrade generalization. In this paper, we close this gap by developing a comprehensive generalization analysis of SGDM through the lens of algorithmic stability. More specifically, we introduce a generalized SGDM framework that encompasses both Polyak’s and Nesterov’s momentum schemes, and establish tight on-average model stability bounds for smooth and convex problems. Notably, the obtained bounds exploit small optimization error bounds along the trajectory, apply to any momentum parameter in the interval $[0, 1)$, and do not require the commonly assumed Lipschitzness of loss functions. We further derive optimization error bounds for the generalized SGDM, and combine them with our generalization analyses to obtain optimal excess population risk bounds for SGDM with both Polyak’s and Nesterov’s momentum.
\end{abstract}

\noindent\textbf{Keywords:} Algorithmic stability, generalization analysis, momentum, stochastic gradient descent, excess population risk.

\section{Introduction}
The rapid growth of model complexity and data scale in modern machine learning has made stochastic gradient descent (SGD) \citep{robbins1951stochastic} the dominant optimization method for training large-scale models. Unlike traditional gradient descent, which requires computing full gradients over the entire dataset at each iteration, SGD leverages random sampling to drastically reduce per-iteration computational costs, enabling scalability to massive datasets and high-dimensional parameter spaces \citep{bottou2018optimization,lecun2015deep,nguyen2019new,zhang2004solving}. To further accelerate convergence and improve practical performance, momentum techniques such as Polyak's momentum \citep{polyak1964some} (a.k.a., heavy-ball momentum) and Nesterov's momentum \citep{nesterov1983method} have been introduced, leading to the widely adopted stochastic gradient descent with momentum (SGDM) method. By incorporating a momentum term that accumulates past gradients, SGDM often exhibits faster convergence than vanilla SGD. The momentum mechanism also helps navigate flat regions of the loss landscape and can potentially avoid local minima \citep{ochs2015ipiasco}. Consequently, SGDM and its variants have become a cornerstone of deep learning optimization~\citep{kingma2015adam, qian1999momentum, sutskever2013importance, yan2018unified} and are ubiquitously deployed in training state-of-the-art deep neural networks \citep{he2016deep, simonyan2014very}.

SGDM has been extensively studied from an optimization perspective, with convergence guarantees in both convex and nonconvex settings (see, e.g., \citet{loizou2020momentum, sebbouh2021almost, wang2023generalized, yang2016unified}). However, in machine learning, the goal extends beyond minimizing training loss; one also needs to ensure that the learned model generalizes well to unseen data~\citep{bousquet2002stability,hardt2016train,neu2021information}. Despite its practical importance, most existing analyses focus primarily on optimization error (i.e., the gap between the training loss of the algorithm's output and that of the optimal model), while leaving the generalization behavior (the difference between test performance and training performance) largely unexplored. Recent work has established sharp generalization bounds for SGD and its variants \citep{hardt2016train,lei2020fine} by leveraging the concept of algorithmic stability \citep{bousquet2002stability}, which quantifies how sensitive an algorithm's output is to perturbations in the training data. It was conjectured in \citet{hardt2016train} that while momentum may accelerate training, it might simultaneously degrade generalization. Nevertheless, the stability and generalization properties of SGDM remain relatively underexplored, with only a handful of recent studies \citep{attia2021algorithmic,chen2018stability,dang2025algorithmic,ramezani2024generalization} starting to investigate this important problem. These existing analyses are limited: they often address restricted scenarios (e.g., deterministic settings or quadratic objectives), provide only lower bounds demonstrating potential instability, or rely on additional structural assumptions such as Lipschitzness, strong convexity, or specific noise models.
As a result, a comprehensive understanding of whether and when SGDM generalizes well is still lacking, particularly for the general class of smooth and convex problems commonly encountered in machine learning.

In this paper, we aim to bridge this gap by providing a comprehensive stability and generalization analysis of a generalized SGDM framework that encompasses both SGD with Polyak's momentum and SGD with Nesterov's momentum. Our main contributions are summarized below. Although we focus on convex loss functions, our results can be readily extended to strongly convex settings by combining our arguments with those in \citet{hardt2016train, lei2020fine}.
\begin{itemize}
    \item[1).] We introduce a generalized SGDM framework and establish comprehensive on-average model stability \citep{lei2020fine} bounds for smooth convex objectives. Our bounds reveal a fundamental trade-off between momentum and stability: \emph{the momentum parameter $\beta$ maintains stability but may worsen it by a constant factor of order $O(1/(1-\beta)^{3/2})$}. Notably, our analysis does not require the commonly assumed Lipschitzness of loss functions; instead, it exploits the self-bounding property of smooth functions to control the gradient norms.
    \item[2).] Building on the general stability analysis of SGDM, we derive explicit stability bounds for both SGD with Polyak's momentum and SGD with Nesterov's momentum. The resulting bounds are tight in the sense that they match the known bounds for SGD \citep{hardt2016train, lei2020fine} up to a factor of $1/(1-\beta)^{3/2}$. To the best of our knowledge, these are the first tight stability guarantees for both momentum variants under comparable assumptions.
    \item[3).] We establish optimization error bounds for the generalized SGDM framework and combine them with the stability results to obtain excess population risk (EPR) bounds that characterize the overall learning performance. With appropriately chosen step sizes, we further obtain optimal EPR rates of order $O(1/\sqrt{n})$ for both momentum variants ($n$ is the sample size), which match the minimax lower bounds for statistical guarantees \citep{agarwal2009information}.
    \item[4).] Our analysis develops several techniques that may be of independent interest. To handle the complexities introduced by momentum terms in the stability analysis, we carefully bound these terms using weighted summations of gradient-based quantities. In addition, we construct an auxiliary sequence for the generalized SGDM framework and apply a self-bounding property to manage gradient-norm terms, substantially simplifying the optimization analysis. These tools may facilitate sharper analyses of other momentum-based algorithms.
\end{itemize}
The remainder of the paper is organized as follows. Section~\ref{sec:rw} reviews the related literature. In Section~\ref{sec:setup}, we introduce the problem setup and the generalized SGDM framework. Our main stability and optimization error bounds are presented in Section~\ref{sec:gen-opt}, and the corresponding EPR bounds are established in Section~\ref{sec:epr}. Technical proofs are deferred to Section~\ref{sec:proofs}. Section~\ref{sec:experiments} reports numerical experiments, and Section~\ref{sec:conclusion} concludes the paper.

\section{Related Work}\label{sec:rw}
In this section, we review related work on both the optimization and generalization analyses of the stochastic gradient descent with momentum (SGDM) method.
\subsection{Optimization Analysis}
Momentum methods have a rich history in optimization, dating back to the pioneering work of \citet{polyak1964some}, who introduced the heavy-ball method that accumulates past gradient information to accelerate deterministic gradient descent. Subsequently, \citet{nesterov1983method} proposed a different momentum scheme by evaluating the gradient at a look-ahead point. The resulting accelerated gradient method achieves the optimal convergence rate of $O(1/t^2)$ for smooth convex functions, where $t$ is the number of iterations. These deterministic momentum methods have been extensively studied and extended to various settings (see, e.g., \citet{beck2009fast, nesterov2013gradient, ochs2014ipiano}).

The success of momentum methods in deterministic optimization naturally motivated their extension to stochastic settings. The SGDM method seeks to combine the computational efficiency of SGD with the acceleration benefits of momentum. For SGD with Polyak's momentum, \citet{loizou2020momentum} established global linear convergence rates for convex quadratic objectives. When the initial point lies sufficiently close to an optimal solution, \citet{gitman2019understanding} showed that the method converges linearly for strongly convex and smooth functions. For convex and smooth objectives, \citet{sebbouh2021almost} proved almost sure convergence on the last iterate of SGD with Polyak's momentum under time-varying step sizes and momentum parameters.
A unified convergence analysis of SGDM that includes both SGD with Polyak's momentum and SGD with Nesterov's momentum for nonconvex smooth loss functions was given in \citet{yan2018unified, yang2016unified} under a bounded gradient assumption, and \citet{yu2019linear} extended these results to distributed settings. For stochastic composite optimization problems involving both smooth and nonsmooth components, \citet{lan2012optimal} introduced an accelerated stochastic gradient method based on Nesterov's momentum and proved its optimality for convex objectives; this approach was later extended to nonconvex problems in \citet{ghadimi2016accelerated}.

\subsection{Stability and Generalization Analysis}
Algorithmic stability is a fundamental concept in statistical learning theory for understanding the generalization behavior of learning algorithms. One of the most popular stability concepts is the uniform stability~\citep{bousquet2002stability,elisseeff2005stability,villa2013learnability}, which can imply high-probability generalization bounds~\citep{bousquet2020sharper,feldman2019high,zhang2023mathematical}. To derive generalization bounds in expectation, some weaker stability concepts have been introduced, including the on-average stability~\citep{shalev2010learnability} and the on-average model stability~\citep{lei2020fine}. The seminal work~\citep{hardt2016train} pioneered the stability analysis of SGD, which motivated a surge of interest in generalization analyses of stochastic optimization algorithms via algorithmic stability~\citep{charles2018stability,chen2024three,kuzborskij2018data,nikolakakis2022beyond,schliserman2022stability,zeng2026stochastic,lei2026convergence}. While stability analysis has been primarily used to study convex and smooth learning problems, recent progress shows that it can also imply optimal rates for nonconvex models such as neural networks~\citep{deora2024optimization,richards2021stability,taheri2024generalization} and nonsmooth problems~\citep{bassily2020stability,lei2020fine}. %

We now review representative work on stability analysis specifically for SGDM.
The work~\citep{chen2018stability} considered convex, Lipschitz, and quadratic objective functions, and derived uniform stability bounds for the Heavy Ball (HB) method and Nesterov Accelerated Gradient (NAG) method in the deterministic setting. Later, the work~\citep{attia2021algorithmic} showed that NAG is unstable for general convex functions by demonstrating that the uniform stability parameter grows at least exponentially with the number of iterations.
The work~\citep{ramezani2024generalization} provided an upper bound on the uniform stability of SGD with Polyak's momentum for strongly convex problems, and a lower bound for convex problems. They also introduced a variant called SGD with early momentum to promote stability, which only includes the momentum term in the early stage.
The recent work~\citep{dang2025algorithmic} studied the stability of SGDM from the perspective of stochastic differential equations (SDEs). Their analysis requires injecting noise into the learning process, and measures the stability by the Wasserstein distance between distributions of these SDE solutions.

\section{Problem Setup}\label{sec:setup}
In this section, we introduce the optimization and generalization framework, define the algorithmic stability concepts used in our analysis, and present the generalized SGDM algorithm.
\subsection{Optimization and Generalization}
Let $\pbb$ be a probability measure defined on a sample space $\zcal := \xcal \times \ycal$, where $\xcal$ is an input space and $\ycal \subseteq \rbb$ is an output space. Let $S = \{\bz_1, \ldots, \bz_n\}$ be $n$ training examples drawn independently from $\pbb$, based on which we aim to construct a prediction model $h : \xcal \to \ycal$. We consider parameterized models indexed by a parameter $\bw \in \wcal$, where $\wcal := \rbb^d$ is the model space. We quantify the performance of a model $\bw$ on an example $\bz$ by a nonnegative loss function $\ell(\bw;\bz)$, where $\ell:\wcal\times\zcal\to\rbb_+$. Then, the behavior of the model $\bw$ on training examples and testing examples is quantified by the empirical risk $L_S(\bw)$ and the population risk $L(\bw)$ respectively, which are given by
\[
  L_S(\bw)=\frac{1}{n}\sum_{i=1}^{n}\ell(\bw;\bz_i)\quad\text{and}\quad L(\bw)=\ebb_{\bz}[\ell(\bw;\bz)].
\]
Here $\ebb_{\bz}[\cdot]$ denotes the expectation taken with respect to (w.r.t.) $\bz$.
To build a model, we typically apply a (stochastic) optimization algorithm $A$ to approximately minimize the empirical risk, and denote by $A(S)$ the resulting model. The relative performance of $A(S)$ compared with the optimal population-risk minimizer $\bw^* := \arg\min_{\bw \in \wcal} L(\bw)$ is measured by the \emph{excess population risk} $L(A(S))-L(\bw^*)$.

An effective way to control the excess risk is to decompose it into two components:
\begin{equation}\label{err-decomposition}
\ebb\big[L(A(S))-L(\bw^*)\big]=\ebb\big[L(A(S))-L_S(A(S))\big]+\ebb[L_S(A(S))-L_S(\bw^*)],
\end{equation}
where $\ebb[\cdot]$ denotes the expectation w.r.t. both $A$ and $S$. To derive Eq.~\eqref{err-decomposition}, we used the fact that $\ebb[L_S(\bw^*)] = L(\bw^*)$ because $\bw^*$ is independent of $S$. We refer to $L(A(S)) - L_S(A(S))$ as the \emph{generalization gap}, which quantifies the discrepancy between the test and training performance of the output model. The term $L_S(A(S)) - L_S(\bw^*)$ is the \emph{optimization error}, measuring how far the algorithm’s output is from the empirical risk minimizer. The generalization gap is central in statistical learning theory, while the optimization error is a fundamental quantity in optimization. Eq.~\eqref{err-decomposition} highlights that achieving small test error requires simultaneously controlling both the generalization gap and the optimization error.

\subsection{Algorithmic Stability}
We now introduce the stability concepts that will be used to analyze the generalization behavior of SGDM. Intuitively, an algorithm $A$ is considered stable if the output model $A(S)$ is insensitive to small perturbations in the training dataset. Several notions of stability have been proposed in the literature to derive different types of generalization guarantees. In this paper, we focus primarily on two of them: the uniform stability~\citep{bousquet2002stability} and the on-average model stability~\citep{lei2020fine}.
We write $S\sim \widetilde{S}$ to indicate that $S$ and $\widetilde{S}$ are neighboring datasets, meaning they differ in at most one example. Throughout, $\|\cdot\|$ denotes the Euclidean norm.
\begin{definition}[Uniform stability]
Let $\epsilon\geq0$. An algorithm $A$ is said to be $\epsilon$-uniformly stable if
\begin{equation}\label{unif-stab}
\sup_{S\sim \widetilde{S}}\sup_{\bz\in\zcal}\big|\ell(A(S);\bz)-\ell(A(\widetilde{S});\bz)\big|\leq\epsilon.
\end{equation}
\end{definition}
\begin{definition}[On-average model stability\label{def:model-stable}]
Let $S=\{\bz_1,\ldots,\bz_n\}$ and $S'=\{\bz_1',\ldots,\bz_n'\}$ be two datasets drawn independently from $\pbb$. For each $i\in[n]$, we construct $S^{(i)}$ by replacing the $i$-th element in S with $\bz_i'$, i.e.,
\[
  S^{(i)}:=\big\{\bz_1,\ldots,\bz_{i-1},\bz_i',\bz_{i+1},\ldots,\bz_n\big\}.
\]
An algorithm $A$ is said to be on-average $\epsilon$-model stable for $\epsilon\ge 0$ if
\[
\frac{1}{n}\sum_{i=1}^{n}\ebb\big[\|A(S)-A(S^{(i)})\|^2\big]\leq \epsilon^2.
\]
\end{definition}
According to these definitions, uniform stability is a relatively strong notion, since Eq.~\eqref{unif-stab} requires the bound to hold for all neighboring datasets and $\bz\in\zcal$. In contrast, on-average model stability is weaker, as it takes expectations over $S$ and $S'$ and captures the average effect of perturbing a single training example in $S$. Furthermore, uniform stability quantifies sensitivity in terms of the loss functions, whereas on-average model stability measures sensitivity through the distance between the output models.

Before presenting the connection between stability and generalization, we first recall the definitions of Lipschitz continuity, smoothness, and convexity.
\begin{definition}[Lipschitzness, smoothness and convexity]
Let $G,\alpha\geq0$ and $g:\wcal\mapsto\rbb$ be differentiable.
\begin{itemize}
  \item We say $g$ is $G$-Lipschitz continuous if $|g(\bw)-g(\bw')|\leq G\|\bw-\bw'\|,\forall \bw,\bw'\in\wcal$.
  \item We say $g$ is $\alpha$-smooth if $\|\nabla g(\bw)-\nabla g(\bw')\|\leq \alpha\|\bw-\bw'\|,\forall \bw,\bw'\in\wcal$.
  \item We say $g$ is convex if
  \[
  g(\bw)\geq g(\bw')+\langle\nabla g(\bw'),\bw-\bw'\rangle,\quad \forall\bw,\bw'\in\wcal.
  \]
\end{itemize}
\end{definition}
The following lemma establishes a connection between on-average model stability and generalization for smooth problems, which will be instrumental in our analysis.
\begin{lemma}[\citealt{lei2020fine}]\label{lem:general-stablity}
    Let $S, S^{\prime}$, and $S^{(i)}$ be constructed as in Definition \ref{def:model-stable}. If for any $\bz\in\zcal$, the function $\mathbf{w} \mapsto \ell(\mathbf{w} ; \bz)$ is nonnegative and $\alpha$-smooth, then for any $\rho>0$ it holds that
        \begin{equation*}
            \ebb_{S, A}[L(A(S))-L_S(A(S))] \leq \frac{\alpha}{\rho} \ebb_{S, A}[L_S(A(S))]+\frac{\alpha+\rho}{2 n} \sum_{i=1}^n \ebb_{S, S^{\prime}, A}[\|A(S^{(i)})-A(S)\|^2].
        \end{equation*}
\end{lemma}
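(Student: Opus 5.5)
The plan is the standard symmetrization argument combined with the self-bounding property of nonnegative smooth functions. First I would rewrite the generalization gap by exchanging $\bz_i$ and $\bz_i'$. Since $\bz_i'$ is independent of $S$, and $S^{(i)}$ has the same distribution as $S$ with $\bz_i$ playing the role of a fresh test point, we have
\[
\ebb[L(A(S))]=\frac1n\sum_{i=1}^n\ebb[\ell(A(S^{(i)});\bz_i)].
\]
Also $\ebb[L_S(A(S))]=\frac1n\sum_i\ebb[\ell(A(S);\bz_i)]$. Hence
\[
\ebb[L(A(S))-L_S(A(S))]=\frac1n\sum_{i=1}^n\ebb\big[\ell(A(S^{(i)});\bz_i)-\ell(A(S);\bz_i)\big].
\]

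Next I would bound each difference using smoothness. The descent inequality gives
\[
\ell(A(S^{(i)});\bz_i)-\ell(A(S);\bz_i)\le \langle\nabla\ell(A(S);\bz_i),A(S^{(i)})-A(S)\rangle+\frac{\alpha}{2}\|A(S^{(i)})-A(S)\|^2.
\]
For the inner product I would apply Young's inequality $\langle a,b\rangle\le \frac{1}{2\rho}\|a\|^2+\frac{\rho}{2}\|b\|^2$, which yields
\[
\frac{1}{2\rho}\|\nabla\ell(A(S);\bz_i)\|^2+\frac{\rho}{2}\|A(S^{(i)})-A(S)\|^2.
\]

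The key remaining step is the self-bounding property: for a nonnegative $\alpha$-smooth function $g$ we have $\|\nabla g(\bw)\|^2\le 2\alpha g(\bw)$. I would prove it directly. Applying the descent lemma at $\bw-\frac1\alpha\nabla g(\bw)$ gives
\[
0\le g\big(\bw-\tfrac1\alpha\nabla g(\bw)\big)\le g(\bw)-\frac{1}{2\alpha}\|\nabla g(\bw)\|^2.
\]
This converts the gradient term into $\frac{\alpha}{\rho}\ell(A(S);\bz_i)$.

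Finally I would sum over $i$, divide by $n$, and take expectations. This produces $\frac{\alpha}{\rho}\ebb[L_S(A(S))]$ plus $\frac{\alpha+\rho}{2n}\sum_i\ebb\|A(S^{(i)})-A(S)\|^2$, which is exactly the claimed bound.

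There is no real obstacle. The only points needing care are the symmetrization identity, which requires $(S,S')$ to be i.i.d. so that the roles of $\bz_i$ and $\bz_i'$ can be swapped, and the nonnegativity of $\ell$, which is needed for the self-bounding step.
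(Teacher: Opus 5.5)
Your proof is correct and follows essentially the same route as the original argument in \citet{lei2020fine}, which the paper cites without reproducing. That argument also uses symmetrization to write the gap as $\frac1n\sum_i\ebb[\ell(A(S^{(i)});\bz_i)-\ell(A(S);\bz_i)]$, then the smoothness upper bound, Young's inequality with parameter $\rho$, and the self-bounding property (Lemma~\ref{lem:self-bounding}) to turn $\frac{1}{2\rho}\|\nabla\ell(A(S);\bz_i)\|^2$ into $\frac{\alpha}{\rho}\ell(A(S);\bz_i)$; the only implicit assumption worth stating is that the internal randomness of $A$ is independent of $(S,S')$, which the symmetrization identity needs.
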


To establish tight stability bounds without the Lipschitzness assumption, we rely heavily on the following two useful lemmas.
The first one gives the self-bounding property of smooth and nonnegative functions, which controls the gradient of functions by their function values. This self-bounding property has been widely used to derive optimistic rates for learning algorithms~\citep{lei2020fine,zhao2024adaptivity}.
\begin{lemma}[Self-bounding property~\citep{srebro2010smoothness}\label{lem:self-bounding}]
  Let $\alpha>0$. If $g:\wcal\mapsto\rbb$ is $\alpha$-smooth and nonnegative, then $\|\nabla g(\bw)\|^2\leq 2\alpha g(\bw)$ for any $\bw\in\wcal$.
\end{lemma}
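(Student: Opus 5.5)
The statement to prove is Lemma~\ref{lem:self-bounding}: if $g$ is $\alpha$-smooth and nonnegative, then $\|\nabla g(\bw)\|^2\le 2\alpha g(\bw)$. The plan is to combine the descent lemma with nonnegativity, evaluated at a single gradient step away from $\bw$.

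First, I will derive the quadratic upper bound implied by $\alpha$-smoothness. For any $\bw,\bw'\in\wcal$, write
\[
g(\bw')-g(\bw)-\langle\nabla g(\bw),\bw'-\bw\rangle=\int_0^1\big\langle\nabla g(\bw+t(\bw'-\bw))-\nabla g(\bw),\bw'-\bw\big\rangle\,dt .
\]
By Cauchy--Schwarz and the Lipschitz continuity of $\nabla g$, the integrand is at most $\alpha t\|\bw'-\bw\|^2$. Integrating over $t\in[0,1]$ gives
\[
g(\bw')\le g(\bw)+\langle\nabla g(\bw),\bw'-\bw\rangle+\frac{\alpha}{2}\|\bw'-\bw\|^2 .
\]

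Next, I will plug in the gradient step $\bw'=\bw-\frac{1}{\alpha}\nabla g(\bw)$. The right-hand side becomes
\[
g(\bw)-\frac1\alpha\|\nabla g(\bw)\|^2+\frac{1}{2\alpha}\|\nabla g(\bw)\|^2=g(\bw)-\frac{1}{2\alpha}\|\nabla g(\bw)\|^2 .
\]

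Finally, since $g$ is nonnegative everywhere, in particular $g(\bw')\ge0$. Therefore
\[
0\le g(\bw)-\frac{1}{2\alpha}\|\nabla g(\bw)\|^2 ,
\]
and rearranging gives $\|\nabla g(\bw)\|^2\le 2\alpha g(\bw)$. This uses $\alpha>0$, which the statement assumes.

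No step is a real obstacle. The only point needing care is the first step: the quadratic upper bound must come from Lipschitz continuity of the gradient alone, without invoking convexity, since the lemma does not assume it. The integral representation above achieves this. Notably, nonnegativity is used only at the single point $\bw'$, and the global domain $\wcal=\rbb^d$ is what guarantees that $\bw'$ is admissible.
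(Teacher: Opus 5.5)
Your proof is correct and complete. The quadratic upper bound comes from the Lipschitz gradient alone, it is evaluated at $\bw-\frac{1}{\alpha}\nabla g(\bw)$, and nonnegativity at that point then gives exactly $\|\nabla g(\bw)\|^2\le 2\alpha g(\bw)$. The paper does not prove this lemma itself but cites it from \citet{srebro2010smoothness}, and your argument is the standard proof of the statement with the constant $2\alpha$ as stated.
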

The next lemma establishes the co-coercivity for convex and smooth functions.
\begin{lemma}[Co-coercivity\label{lem:coercivity}~\citep{nesterov2013introductory}]
Let $g:\wcal\to\rbb$ be $\alpha$-smooth and convex. Then
\begin{gather*}
\langle\bw-\bw',\nabla g(\bw)-\nabla g(\bw')\rangle\geq \frac{1}{\alpha}\|\nabla g(\bw)-\nabla g(\bw')\|^2,~\forall\bw,\bw'\in\wcal.\label{coercivity-2}
\end{gather*}
\end{lemma}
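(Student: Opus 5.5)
We prove the co-coercivity inequality of Lemma~\ref{lem:coercivity} by the standard auxiliary-function argument. Fix $\bw'\in\wcal$ and define $\phi(\bv):=g(\bv)-\langle\nabla g(\bw'),\bv\rangle$. Then $\phi$ is convex and $\alpha$-smooth, since it differs from $g$ by a linear term. Its gradient $\nabla\phi(\bv)=\nabla g(\bv)-\nabla g(\bw')$ vanishes at $\bv=\bw'$. By convexity, $\bw'$ is therefore a global minimizer of $\phi$.

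The key intermediate bound is
\[
g(\bw)\ge g(\bw')+\langle\nabla g(\bw'),\bw-\bw'\rangle+\frac{1}{2\alpha}\|\nabla g(\bw)-\nabla g(\bw')\|^2 .
\]
To obtain it, start from the descent lemma for $\alpha$-smooth functions, $\phi(\bu)\le\phi(\bv)+\langle\nabla\phi(\bv),\bu-\bv\rangle+\frac{\alpha}{2}\|\bu-\bv\|^2$. This follows by integrating $\nabla\phi$ along the segment from $\bv$ to $\bu$ and applying the Lipschitz bound on the gradient. Take $\bv=\bw$ and $\bu=\bw-\frac1\alpha\nabla\phi(\bw)$. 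Since $\bw'$ minimizes $\phi$, this gives
\[
\phi(\bw')\le\phi(\bu)\le\phi(\bw)-\frac{1}{2\alpha}\|\nabla\phi(\bw)\|^2 .
\]
Expanding $\phi$ in this inequality yields exactly the displayed bound.

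Exchange the roles of $\bw$ and $\bw'$ to get the symmetric inequality
\[
g(\bw')\ge g(\bw)+\langle\nabla g(\bw),\bw'-\bw\rangle+\frac{1}{2\alpha}\|\nabla g(\bw)-\nabla g(\bw')\|^2 .
\]
Add the two inequalities. The function values cancel, and the two inner-product terms combine into $-\langle\nabla g(\bw)-\nabla g(\bw'),\bw-\bw'\rangle$. Rearranging gives
\[
\langle\bw-\bw',\nabla g(\bw)-\nabla g(\bw')\rangle\ge\frac1\alpha\|\nabla g(\bw)-\nabla g(\bw')\|^2 ,
\]
which is the claim.

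The main subtlety is the choice of auxiliary function in the first step. Applying the descent lemma directly to $g$ would only produce a bound involving the unknown minimum of $g$. Tilting $g$ by the linear term $\langle\nabla g(\bw'),\cdot\rangle$ makes $\bw'$ the minimizer, and this is what produces the quadratic gradient-difference term.
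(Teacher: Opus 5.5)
Your proof is correct, and it is the standard argument from the cited source: tilt $g$ by the linear term $\langle\nabla g(\bw'),\cdot\rangle$ so that $\bw'$ becomes a minimizer, apply the descent lemma at the gradient step $\bu=\bw-\frac{1}{\alpha}\nabla\phi(\bw)$ to get the strengthened first-order inequality with the extra $\frac{1}{2\alpha}\|\nabla g(\bw)-\nabla g(\bw')\|^2$ term, and add the two symmetric copies. The paper gives no proof of its own and only cites Nesterov; both that lemma and your step $\bu=\bw-\frac{1}{\alpha}\nabla\phi(\bw)$ tacitly require $\alpha>0$.
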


\subsection{Stochastic Gradient Descent with Momentum}
We consider a generalized version of stochastic gradient descent with momentum (SGDM), which extends standard SGD by incorporating a momentum term to accelerate optimization. Let $\bw_1\in\wcal$ be an initial point and set $\bfm_0=\mathbf{0}$. Denote by $\bg_t:=\nabla \ell(\bw_t;\bz_{i_t})$ the stochastic gradient at iteration $t$, where $i_t$ is drawn uniformly from $[n]:=\{1,\dots,n\}$. The generalized SGDM method updates the iterates via
\begin{align}
&\bfm_t=\beta \bfm_{t-1}+\bg_t, \label{eq:m-update}\\
&\bw_{t+1}=\bw_t-\gamma \bg_t-\eta \bfm_t,\label{eq:w-update}
\end{align}
where $\gamma\ge0,~\eta>0$ denote step sizes and $\beta\in[0,1)$ is the momentum parameter. This formulation is highly general and encompasses several important special cases. When $\beta=0$, the generalized SGDM method reduces to standard SGD with constant step size $\gamma+\eta>0$. Moreover, it includes both SGD with Polyak's momentum and SGD with Nesterov's momentum as special cases. In particular, if $\gamma=0$, the method reduces to SGD with Polyak’s (heavy-ball) momentum:
\begin{equation}\label{alg:sgd-polyak}
\tag{SGD-HB}
    \begin{aligned}
        &\bfm_t=\beta \bfm_{t-1}+\bg_t,\\
        &\bw_{t+1}=\bw_t-\eta \bfm_t.
    \end{aligned}
\end{equation}
Additionally, by setting $\eta = \beta \gamma$, the generalized SGDM method Eq.~\eqref{eq:m-update}-\eqref{eq:w-update} can be rewritten as
\begin{align*}
& \bfm_t=\beta \bfm_{t-1}+\bg_t, \\
& \bv_t=\beta \bfm_t+\bg_t, \\
& \bw_{t+1}=\bw_t-\gamma \bv_t.
\end{align*}
As shown in \citet[Section 2.3.1]{yu2019linear}, the above iterative scheme is equivalent to
\begin{equation}\label{alg:sgd-nesterov}
\tag{SGD-N}
    \begin{aligned}
        &\bu_t =\bw_t-\gamma \bg_t, \\
        &\bw_{t+1} =\bu_t+\beta(\bu_t-\bu_{t-1}),
    \end{aligned}
\end{equation}
which corresponds to SGD with Nesterov's momentum. Consequently, our analysis applies directly to standard SGD, SGD with Polyak’s momentum, and SGD with Nesterov’s momentum.

\section{Generalization and Optimization Analysis of the Generalized SGDM}\label{sec:gen-opt}
In this section, we present a comprehensive analysis of the generalized SGDM algorithm for convex and smooth problems, where we always assume the loss function is nonnegative. We establish both stability and optimization error bounds for the general SGDM framework. As special cases, we obtain the corresponding results for both SGD with Polyak's momentum and SGD with Nesterov's momentum. The omitted proofs of the results established in Sections \ref{sec:stab} and \ref{sec:opt} can be found in Sections \ref{sec:proof-stab} and \ref{sec:proof-opt}, respectively.

\subsection{Stability Bounds}\label{sec:stab}
We begin by analyzing the on-average model stability of the generalized SGDM, which is essential for deriving generalization bounds. Due to symmetry, we fix an index $i\in[n]$ and estimate $\ebb[\|\bw_t-\bw_t^{(i)}\|^2]$, where $\bw_t^{(i)}$ is derived by the generalized SGDM applied to $S^{(i)}$. Let $\bg_t^{(i)}$ and $\bfm_t^{(i)}$ be defined in the same way as $\bg_t$ and $\bfm_t$ except that they are derived based on $S^{(i)}$ instead of $S$. For notational convenience, we denote
\begin{equation}\label{eq:Gt-def}
G_t:= \bw_t-\gamma \bg_t,\quad G_t^{(i)}:=\bw_t^{(i)}-\gamma \bg_t^{(i)},
\end{equation}
which can be interpreted as SGD-type updates. Initializing with $\bw_1=\bw_1^{(i)}\in\wcal$ and $\bfm_0=\bfm_0^{(i)}=\mathbf{0}$, then it follows from Eq.~\eqref{eq:w-update} that
\begin{equation}\label{eq:cvx-stab-n1}
\bw_{t+1}=G_t-\eta \bfm_t,\quad \bw_{t+1}^{(i)}=G_t^{(i)}-\eta \bfm_t^{(i)}.
\end{equation}
This reformulation serves as a starting point for our stability analysis.

Before establishing the stability bounds of the generalized SGDM, we first introduce the following lemma. It decomposes the terms involving momentum into expressions based on gradients. %
\begin{lemma}\label{lem:ip-w-m}
Let $S, S^{(i)}$ be defined in Definition~\ref{def:model-stable}. Let $\{\bw_t\}_{t\ge 1}$ and $\{\bw_t^{(i)}\}_{t\ge 1}$ be the sequences produced by the generalized SGDM Eq.~\eqref{eq:m-update}-\eqref{eq:w-update} with $\beta\in [0,1)$ applied to $S$ and $S^{(i)}$, respectively. Then for any $t\ge 1$ and $\delta>0$, we have
\begin{align}
    \|\bfm_t-\bfm_t^{(i)}\|^2 \leq &(1+1/\delta) \sum_{k=1}^t \big((1+\delta)\beta^2\big)^{t-k}\|\bg_k-\bg_k^{(i)}\|^2,\label{eq:m-square}\\
    \langle \bw_t-\bw_t^{(i)}, \bfm_t-\bfm_t^{(i)}\rangle \ge & -(1+1 / \delta)\Big(\frac{\gamma}{2}+\eta\Big) \sum_{k=1}^{t-1}\Big(\sum_{l=k}^{t-1} \beta^{t-l}((1+\delta) \beta^2)^{l-k}\Big)\|\bg_k-\bg_k^{(i)}\|^2\notag\\
    &-\frac{\gamma}{2} \sum_{k=1}^{t-1} \beta^{t-k}\|\bg_k-\bg_k^{(i)}\|^2 + \sum_{k=1}^t \beta^{t-k}\langle \bw_k-\bw_k^{(i)}, \bg_k-\bg_k^{(i)}\rangle\label{eq:cvx-stab-5}.
\end{align}
\end{lemma}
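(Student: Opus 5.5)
Write $\Delta_t:=\bw_t-\bw_t^{(i)}$, $D_t:=\bfm_t-\bfm_t^{(i)}$ and $e_k:=\bg_k-\bg_k^{(i)}$. Since $\bfm_0=\bfm_0^{(i)}=\mathbf 0$, the update \eqref{eq:m-update} unrolls to $D_t=\sum_{k=1}^t\beta^{t-k}e_k$, and also $D_t=\beta D_{t-1}+e_t$.

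\emph{Proof of \eqref{eq:m-square}.} We argue by induction using the recursion. Young's inequality gives $\|a+b\|^2\le(1+\delta)\|a\|^2+(1+1/\delta)\|b\|^2$, so
$\|D_t\|^2\le(1+\delta)\beta^2\|D_{t-1}\|^2+(1+1/\delta)\|e_t\|^2$.
Unrolling from $D_0=\mathbf 0$ yields $\|D_t\|^2\le(1+1/\delta)\sum_{k=1}^t((1+\delta)\beta^2)^{t-k}\|e_k\|^2$, which is \eqref{eq:m-square}.

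\emph{Proof of \eqref{eq:cvx-stab-5}.} We again aim for a one-step recursion, this time for $P_t:=\langle\Delta_t,D_t\rangle$. By the recursion for $D_t$,
$P_t=\beta\langle\Delta_t,D_{t-1}\rangle+\langle\Delta_t,e_t\rangle$.
From \eqref{eq:w-update} we have $\Delta_t=\Delta_{t-1}-\gamma e_{t-1}-\eta D_{t-1}$, hence
\[
\langle\Delta_t,D_{t-1}\rangle=P_{t-1}-\gamma\langle e_{t-1},D_{t-1}\rangle-\eta\|D_{t-1}\|^2 .
\]
For the middle term we use $-\gamma\langle e_{t-1},D_{t-1}\rangle\ge-\frac\gamma2\|e_{t-1}\|^2-\frac\gamma2\|D_{t-1}\|^2$. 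This gives
\[
P_t\ge\beta P_{t-1}-\beta\Big(\frac\gamma2+\eta\Big)\|D_{t-1}\|^2-\frac{\gamma\beta}{2}\|e_{t-1}\|^2+\langle\Delta_t,e_t\rangle .
\]

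We then unroll this recursion down to $P_1=\langle\Delta_1,e_1\rangle$; note $\Delta_1=0$, but keeping the term is harmless. The result is
\[
P_t\ge\sum_{k=1}^t\beta^{t-k}\langle\Delta_k,e_k\rangle-\frac\gamma2\sum_{l=1}^{t-1}\beta^{t-l}\|e_l\|^2-\Big(\frac\gamma2+\eta\Big)\sum_{l=1}^{t-1}\beta^{t-l}\|D_l\|^2 .
\]
The first two sums are exactly the last two terms of \eqref{eq:cvx-stab-5}. In the third sum we substitute \eqref{eq:m-square} for each $\|D_l\|^2$ and swap the order of summation over $1\le k\le l\le t-1$. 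This produces
\[
(1+1/\delta)\sum_{k=1}^{t-1}\Big(\sum_{l=k}^{t-1}\beta^{t-l}((1+\delta)\beta^2)^{l-k}\Big)\|e_k\|^2,
\]
which is the first term of the claim.

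The main obstacle is the bookkeeping of indices. We must check that the factor $\beta$ multiplying $\|D_{t-1}\|^2$ propagates to $\beta^{t-l}$ rather than $\beta^{t-1-l}$, and that the ranges $k\le l\le t-1$ are correct after the swap. The remaining care point is the choice of splitting the cross term $\langle e_{t-1},D_{t-1}\rangle$ with equal weights $\gamma/2$, which is precisely what produces the coefficient $\gamma/2+\eta$.
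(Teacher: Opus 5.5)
Your proof is correct and is essentially the paper's argument. For the first bound, you use the Young-inequality recursion for $\|\bfm_t-\bfm_t^{(i)}\|^2$, unrolled from $\bfm_0=\bfm_0^{(i)}=\mathbf{0}$. For the inner product, you use $\bw_t-\bw_t^{(i)}=(\bw_{t-1}-\bw_{t-1}^{(i)})-\gamma(\bg_{t-1}-\bg_{t-1}^{(i)})-\eta(\bfm_{t-1}-\bfm_{t-1}^{(i)})$ (the paper writes this via $G_{t-1}-G_{t-1}^{(i)}$), the equal-weight split of the cross term, unrolling, substitution of the first bound, and a swap of summations. Your index bookkeeping, including $\beta^{t-l}$ and the range $k\le l\le t-1$, also matches.
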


Now we are ready to present the on-average model stability bounds for the generalized SGDM in the upcoming theorem. The obtained upper bound is data-dependent in the sense that it involves an accumulated summation of training errors during the optimization process, which can be further bounded by optimization error analysis. Importantly, this demonstrates the benefit of optimization in improving stability.
\begin{theorem}[Stability bounds for generalized SGDM]\label{thm:stab-cvx}
Let $S, S^{(i)}$ be defined in Definition~\ref{def:model-stable}. Let $\{\bw_t\}_{t\ge 1}$ and $\{\bw_t^{(i)}\}_{t\ge 1}$ be the sequences produced by the generalized SGDM Eq.~\eqref{eq:m-update}-\eqref{eq:w-update} with $\beta\in [0,1)$ applied to $S$ and $S^{(i)}$, respectively.
Assume that for any $\bz\in\zcal$, the map $\bw\mapsto\ell(\bw;\bz)$ is nonnegative, convex, and $\alpha$-smooth. If the step sizes $\gamma\ge 0$ and $\eta>0$ are chosen such that
\begin{equation}\label{eq:stab-lr}
    \frac{(1+\beta)(3-\beta)}{(1-\beta)^2}\eta+\frac{\beta^2+3}{2(1-\beta)^2}\gamma\leq \frac{1}{\alpha},
\end{equation}
then for any $t\ge 1$, we have
\begin{align}
&\ebb[\|\bw_{t+1}-\bw_{t+1}^{(i)}\|^2] \notag \\
\leq &\Big(\frac{2(1+\beta)^2}{(1-\beta)^3}\eta^2+2\gamma^2+\frac{\beta(\beta^2+3)}{(1-\beta)^3}\gamma\eta+\frac{(\gamma+\eta)((1-\beta)\gamma+\eta)t}{(1-\beta)^2 n}\Big)\frac{8 \alpha e}{n} \sum_{k=1}^t \ebb[L_S(\bw_k)],\label{eq:cvx-stab-sgdm}
\end{align}
where $e$ denotes Euler's number.
\end{theorem}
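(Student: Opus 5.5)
We will unroll the squared distance using the representation \eqref{eq:cvx-stab-n1}. Write $\Delta_t:=\bw_t-\bw_t^{(i)}$, $\delta_t^g:=\bg_t-\bg_t^{(i)}$ and $\delta_t^m:=\bfm_t-\bfm_t^{(i)}$. Expanding gives
\[
\|\Delta_{t+1}\|^2=\|\Delta_t\|^2-2\gamma\langle\Delta_t,\delta^g_t\rangle+\gamma^2\|\delta^g_t\|^2-2\eta\langle\Delta_t,\delta^m_t\rangle+2\gamma\eta\langle\delta^g_t,\delta^m_t\rangle+\eta^2\|\delta^m_t\|^2 .
\]
We lower bound the momentum inner product $\langle\Delta_t,\delta^m_t\rangle$ by \eqref{eq:cvx-stab-5}. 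The cross term $\langle\delta^g_t,\delta^m_t\rangle$ is handled by Young's inequality, and $\|\delta^m_t\|^2$ by \eqref{eq:m-square}. After these substitutions, $\|\Delta_{t+1}\|^2-\|\Delta_t\|^2$ is bounded by three pieces: $-2\sum_{k\le t} c_{t,k}\langle\Delta_k,\delta^g_k\rangle$ with $c_{t,t}=\gamma+\eta$ and $c_{t,k}=\eta\beta^{t-k}$ for $k<t$; a weighted sum of $\|\delta^g_k\|^2$ with geometric weights; and the $\gamma^2$ term. We fix $\delta$ so that $(1+\delta)\beta^2\le\beta$, for instance $(1+\delta)\beta=1$. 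Then the double sum $\sum_{l}\beta^{t-l}((1+\delta)\beta^2)^{l-k}$ is at most $(t-k)\beta^{t-k}$, or with a slightly smaller $\delta$ it sums geometrically. Summing over $t$ then yields total weights of order $1/(1-\beta)^2$ and $1/(1-\beta)^3$. This is where the constants in \eqref{eq:stab-lr} and \eqref{eq:cvx-stab-sgdm} come from.

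Next we telescope over $t$ and split according to whether $i_k=i$. When $i_k\neq i$, the two sampled losses coincide. Co-coercivity (Lemma~\ref{lem:coercivity}) then gives $\langle\Delta_k,\delta^g_k\rangle\ge\frac1\alpha\|\delta^g_k\|^2$. After summing, each such index $k$ carries a negative weight $-\frac{2}{\alpha}\sum_{t\ge k}c_{t,k}\approx-\frac{2}{\alpha}\big(\gamma+\frac{\eta}{1-\beta}\big)$. This has to dominate the positive weight on $\|\delta^g_k\|^2$, which is of order $\gamma^2+\frac{\gamma\eta}{(1-\beta)^3}+\frac{\eta^2}{(1-\beta)^3}$. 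Matching the two is exactly the step-size condition \eqref{eq:stab-lr}, so the ``good'' terms vanish.

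When $i_k=i$, which happens with probability $1/n$, we cannot use co-coercivity. Instead we bound $|\langle\Delta_k,\delta^g_k\rangle|\le\|\Delta_k\|\|\delta^g_k\|$ and $\|\delta^g_k\|^2\le 2\|\nabla\ell(\bw_k;\bz_i)\|^2+2\|\nabla\ell(\bw_k^{(i)};\bz_i')\|^2$. By the self-bounding Lemma~\ref{lem:self-bounding}, this is at most $4\alpha(\ell(\bw_k;\bz_i)+\ell(\bw_k^{(i)};\bz_i'))$. Taking expectations uses the symmetry $\ebb[\ell(\bw_k^{(i)};\bz_i')]=\ebb[\ell(\bw_k;\bz_i)]$ and the average over $i$ from Definition~\ref{def:model-stable}; per fixed $i$ we use the exchangeability argument of \citet{lei2020fine}, where $\ebb[\ell(\bw_k;\bz_i)]$ averages to $\ebb[L_S(\bw_k)]$. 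This yields the factor $\frac{\alpha}{n}\sum_k\ebb[L_S(\bw_k)]$ multiplying $\eta^2/(1-\beta)^3$, $\gamma^2$ and $\gamma\eta/(1-\beta)^3$.

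The main obstacle is the cross term $\frac{2}{n}\sum_k c\,\|\Delta_k\|\|\delta^g_k\|$. We plan to use Young's inequality with parameter $p$: bound it by $\frac{c}{np}\|\Delta_k\|^2+\frac{cp}{n}\|\delta^g_k\|^2$, with $c\sim\gamma+\eta/(1-\beta)$. The resulting recursion has the form $\ebb\|\Delta_{t+1}\|^2\le(1+\frac{1}{t})\max_{k\le t}\ebb\|\Delta_k\|^2+\dots$, which we obtain by choosing $p=t\,c/n$. This produces the term $\frac{(\gamma+\eta)((1-\beta)\gamma+\eta)t}{(1-\beta)^2n}$, and iterating gives the growth factor $(1+1/t)^t\le e$. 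Some care is needed in the telescoping, because momentum makes the bound at time $t+1$ depend on all earlier $\|\Delta_k\|$; we will work with $\max_{k\le t+1}\ebb\|\Delta_k\|^2$, which is monotone, to absorb this. The constant $8$ collects the factors $2\cdot 4$ from the self-bounding step.
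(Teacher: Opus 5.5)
Your ingredients match the paper's: the expansion of $\|\Delta_{t+1}\|^2$, Lemma~\ref{lem:ip-w-m}, co-coercivity for $i_k\neq i$, self-bounding plus Young for $i_k=i$, and an $e$ from $(1+\frac1t)^t$. However, the negativity step and the recursion you describe are incompatible as stated. The recursion $\ebb\|\Delta_{t+1}\|^2\le(1+\frac1t)\max_{k\le t}\ebb\|\Delta_k\|^2+\cdots$ comes from the one-step inequality, so the good terms must be nonpositive at every single step. Writing $a_{t,k}$ for the weight of $\|\delta^g_k\|^2$ in the one-step bound, you need $a_{t,k}\le c_{t,k}/\alpha$ for all $k\le t$, not merely $\sum_{t\ge k}a_{t,k}\le\frac1\alpha\sum_{t\ge k}c_{t,k}$. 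With your choice $(1+\delta)\beta=1$ this fails. For $k<t$, $a_{t,k}$ contains $\frac{\eta(\gamma/2+\eta)}{1-\beta}(t-k)\beta^{t-k}$, while $c_{t,k}=\eta\beta^{t-k}$. Hence $a_{t,k}>c_{t,k}/\alpha$ as soon as $t-k>\frac{1-\beta}{\alpha(\gamma/2+\eta)}$, and those positive terms cannot be dropped.

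Conversely, your summed comparison is legitimate only for the telescoped inequality. There the feedback from the $i_k=i$ steps enters as $\frac1T\sum_{k\le t}\ebb\|\Delta_k\|^2$ (with $p$ fixed by the horizon $T$), and there is no separate $\ebb\|\Delta_t\|^2$ term. Bounding the sum by $t\,\xi_t$ leaves $\ebb\|\Delta_{t+1}\|^2\le R_t+\frac tT\xi_t$, where $R_t$ is the loss term; this does not contract near the horizon. Moreover, in the telescoped form $\sum_{t=k}^{T}c_{t,k}$ is only about $\gamma+\eta$ for $k$ near $T$. So your ``$\approx\gamma+\frac{\eta}{1-\beta}$'' comparison must be checked for every truncation length, which you do not do.

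Either of two repairs closes the argument.

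The paper's repair is to take $\delta=\frac12(\frac1\beta-1)$. Then $(1+\delta)\beta=\frac{1+\beta}{2}<1$, $1+\frac1\delta=\frac{1+\beta}{1-\beta}$, and $(1+\delta)\beta^2\le\beta$, so the inner sum becomes geometric. This gives $a_{t,k}\le\big[\frac{(1+\beta)(3-\beta)}{(1-\beta)^2}\eta+\frac{\beta^2+3}{2(1-\beta)^2}\gamma\big]\eta\beta^{t-k}$ for $k<t$, and $a_{t,t}=\frac{1+\beta}{1-\beta}\eta^2+\gamma^2$. Thus \eqref{eq:stab-lr} is exactly the per-step condition $a_{t,k}\le c_{t,k}/\alpha$. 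The $\xi_t$ recursion, with Young normalized by $\gamma+\eta$ and $p=(1-\beta)n/t$, then gives precisely the stated constants.

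Alternatively, keep your telescoped scheme with $(1+\delta)\beta=1$. For fixed $k$, the increments $a_{t,k}-c_{t,k}/\alpha$ change sign at most once in $t$, from nonpositive to positive. So summed negativity at every truncation reduces to the case $t=k$ and the full series, and both are implied by \eqref{eq:stab-lr}. Then replace the max argument by Gronwall on partial sums. Let $R_T$ be the loss part of the telescoped bound at horizon $T$, which dominates that part at every shorter horizon, and set $u_s:=R_T+\frac1T\sum_{k\le s}\ebb\|\Delta_k\|^2$. This satisfies $\ebb\|\Delta_{s+1}\|^2\le u_s$ and $u_{s+1}\le(1+\frac1T)u_s$, hence $\ebb\|\Delta_{T+1}\|^2\le u_T\le eR_T$. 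This variant even yields slightly smaller constants, for example $\frac{2}{(1-\beta)^3}$ instead of $\frac{2(1+\beta)^2}{(1-\beta)^3}$ on $\eta^2$. With your normalization by $c$, the last term becomes $\frac{((1-\beta)\gamma+\eta)^2t}{(1-\beta)^2n}$. The gain comes from the fact that, with $(1+\delta)\beta=1$, your bound on $\|\delta^m_t\|^2$ is the Cauchy--Schwarz bound $\frac{1}{1-\beta}\sum_k\beta^{t-k}\|\delta^g_k\|^2$ and avoids the $(1+\beta)$ losses.
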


Having established the general stability bounds for SGDM, we now apply this result to two specific algorithms: SGD with Polyak's momentum \eqref{alg:sgd-polyak} and SGD with Nesterov's momentum \eqref{alg:sgd-nesterov}.
Recall that $e$ denotes Euler's number.
\begin{corollary}[Stability bounds for \eqref{alg:sgd-polyak}]\label{thm:stab-cvx-polyak}
Let $S, S^{(i)}$ be defined in Definition~\ref{def:model-stable}. Let $\{\bw_t\}_{t\ge 1}$ and $\{\bw_t^{(i)}\}_{t\ge 1}$ be the sequences produced by SGD with Polyak's momentum \eqref{alg:sgd-polyak} with $\beta\in[0,1)$ applied to $S$ and $S^{(i)}$, respectively.
Assume that for any $\bz\in\zcal$, the map $\bw\mapsto\ell(\bw;\bz)$ is nonnegative, convex, and $\alpha$-smooth. If the step size $\eta>0$ is chosen such that
\[
\eta \leq \frac{(1-\beta)^2}{\alpha(1+\beta)(3-\beta)},
\]
then for any $t\ge 1$, we have that
\begin{equation}\label{eq:cvx-stab-polyak}
    \ebb[\|\bw_{t+1}-\bw_{t+1}^{(i)}\|^2]\leq \Big(\frac{2(1+\beta)^2}{(1-\beta)^3}+\frac{t}{(1-\beta)^2 n}\Big)\frac{8 \alpha e\eta^2}{n} \sum_{k=1}^t \ebb[L_S(\bw_k)].
\end{equation}
\end{corollary}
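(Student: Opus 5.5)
This corollary is a direct specialization of Theorem~\ref{thm:stab-cvx}. The plan is to set $\gamma=0$, since with $\gamma=0$ the generalized SGDM Eq.~\eqref{eq:m-update}-\eqref{eq:w-update} reduces exactly to \eqref{alg:sgd-polyak}. The sequences $\{\bw_t\}$ and $\{\bw_t^{(i)}\}$ in the corollary are therefore those of the theorem with $\gamma=0$ and the same $\eta,\beta$. The loss assumptions (nonnegative, convex, $\alpha$-smooth) are identical, so the only things to check are the step-size condition and the form of the resulting bound.

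First, the step-size condition. Setting $\gamma=0$ in Eq.~\eqref{eq:stab-lr} gives
\[
\frac{(1+\beta)(3-\beta)}{(1-\beta)^2}\eta\le\frac{1}{\alpha},
\]
which is equivalent to the assumed condition $\eta\le (1-\beta)^2/(\alpha(1+\beta)(3-\beta))$. Here we use that $(1+\beta)(3-\beta)>0$ for $\beta\in[0,1)$. Hence the hypotheses of Theorem~\ref{thm:stab-cvx} hold with $\gamma=0$ and $\eta>0$.

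Second, the bound. Substituting $\gamma=0$ into the prefactor of Eq.~\eqref{eq:cvx-stab-sgdm} removes the terms $2\gamma^2$ and $\frac{\beta(\beta^2+3)}{(1-\beta)^3}\gamma\eta$. The last term becomes $\frac{(0+\eta)(0+\eta)t}{(1-\beta)^2n}=\frac{\eta^2 t}{(1-\beta)^2 n}$. Factoring out $\eta^2$ from the remaining prefactor
\[
\frac{2(1+\beta)^2}{(1-\beta)^3}\eta^2+\frac{\eta^2 t}{(1-\beta)^2 n}
\]
then yields exactly Eq.~\eqref{eq:cvx-stab-polyak}.

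There is no real obstacle. The only point needing care is confirming that the theorem allows $\gamma=0$ (it does, since it requires $\gamma\ge0$) and that no step in its proof divides by $\gamma$. Since the corollary is stated as a consequence, we take the theorem as given for all $\gamma\ge0$.
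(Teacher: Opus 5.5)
Your proposal is correct and is essentially the paper's own argument: set $\gamma=0$ in Theorem~\ref{thm:stab-cvx}, note that Eq.~\eqref{eq:stab-lr} becomes the stated step-size condition, and simplify the prefactor of Eq.~\eqref{eq:cvx-stab-sgdm}. You also check that the theorem's proof never divides by $\gamma$ alone, only by $\gamma+\eta>0$; this is a sensible sanity check that the paper leaves implicit.
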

\begin{proof}
    Setting $\gamma=0$ in Eq.~\eqref{eq:cvx-stab-sgdm} in Theorem \ref{thm:stab-cvx} directly gives Eq.~\eqref{eq:cvx-stab-polyak}. Additionally, plugging $\gamma=0$ into Eq.~\eqref{eq:stab-lr} leads to $\eta \leq \frac{(1-\beta)^2}{\alpha(1+\beta)(3-\beta)}$, which coincides with the given step size condition, hence completes the proof.
\end{proof}

\begin{remark}\label{rm:cvx-sgd}\normalfont
According to Eq.~\eqref{eq:cvx-stab-polyak}, we know that the stability bound is an increasing function of $\beta$. Therefore, increasing $\beta$ worsens the stability of SGD with Polyak's momentum.
If $\beta=0$, then SGD with Polyak's momentum reduces to the vanilla SGD. In this case, Corollary~\ref{thm:stab-cvx-polyak} implies
\begin{equation}\label{sgd-stability-bound}
\ebb[\|\bw_{t+1}-\bw_{t+1}^{(i)}\|^2]\leq \Big(2+\frac{t}{n}\Big)\frac{8 \alpha e\eta^2}{n} \sum_{k=1}^t \ebb[L_S(\bw_k)].
\end{equation}
This matches the existing stability analysis of SGD in \citet{lei2020fine}, which validates the effectiveness of our general analysis of SGDM, as it recovers the existing result as a special case. It is also clear that the bounds in Eq.~\eqref{eq:cvx-stab-polyak} and Eq.~\eqref{sgd-stability-bound} match up to a factor of $1/(1-\beta)^3$. That means, our analysis reveals that introducing the momentum worsens the stability by at most a factor of $1/(1-\beta)^{3/2}$.
\end{remark}

The next corollary specializes our general result to SGD with Nesterov's momentum by setting $\eta=\beta\gamma$ in Theorem \ref{thm:stab-cvx}.
\begin{corollary}[Stability bounds for \eqref{alg:sgd-nesterov}]\label{thm:stab-cvx-nesterov}
Let $S, S^{(i)}$ be defined in Definition~\ref{def:model-stable}. Let $\{\bw_t\}_{t\ge 1}$ and $\{\bw_t^{(i)}\}_{t\ge 1}$ be the sequences produced by SGD with Nesterov's momentum \eqref{alg:sgd-nesterov} with $\beta\in(0, 1)$ applied to $S$ and $S^{(i)}$, respectively.
Assume that for any $\bz\in\zcal$, the map $\bw\mapsto\ell(\bw;\bz)$ is nonnegative, convex, and $\alpha$-smooth. If the step size $\gamma>0$ is chosen such that
\[\gamma\leq\frac{2(1-\beta)^2}{\alpha(3+6\beta+5\beta^2-2\beta^3)},\]
then for any $t\ge 1$, we have
\begin{equation}\label{eq:cvx-stab-nesterov}
\ebb[\|\bw_{t+1}-\bw_{t+1}^{(i)}\|^2]\leq \Big(2+\frac{\beta^2(3\beta^2+4\beta+5)}{(1-\beta)^3}+\frac{(1+\beta) t}{(1-\beta)^2 n}\Big)\frac{8 \alpha e\gamma^2}{n} \sum_{k=1}^t \ebb[L_S(\bw_k)].
\end{equation}
\end{corollary}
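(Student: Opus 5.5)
The plan is to obtain Corollary~\ref{thm:stab-cvx-nesterov} as a direct specialization of Theorem~\ref{thm:stab-cvx}, in the same way Corollary~\ref{thm:stab-cvx-polyak} was obtained with $\gamma=0$. As recalled in Section~\ref{sec:setup} (following \citet[Section 2.3.1]{yu2019linear}), running \eqref{alg:sgd-nesterov} with parameters $(\gamma,\beta)$ yields the same sequence $\{\bw_t\}$ as the generalized SGDM Eq.~\eqref{eq:m-update}-\eqref{eq:w-update} with step sizes $\gamma$ and $\eta=\beta\gamma$. The same holds on $S^{(i)}$, since the sampled indices $i_t$ are shared. So it suffices to apply Theorem~\ref{thm:stab-cvx} with $\eta=\beta\gamma$. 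Since $\beta\in(0,1)$ and $\gamma>0$, this choice indeed gives $\eta>0$, as the theorem requires.

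\emph{Step-size condition.} Substituting $\eta=\beta\gamma$ into Eq.~\eqref{eq:stab-lr} turns its left-hand side into
\[
\frac{2\beta(1+\beta)(3-\beta)+\beta^2+3}{2(1-\beta)^2}\,\gamma .
\]
Expanding, $2\beta(3+2\beta-\beta^2)+\beta^2+3=3+6\beta+5\beta^2-2\beta^3$. Hence Eq.~\eqref{eq:stab-lr} is exactly the assumed condition
\[
\gamma\le \frac{2(1-\beta)^2}{\alpha(3+6\beta+5\beta^2-2\beta^3)}.
\]

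\emph{Bound.} Next, substitute $\eta=\beta\gamma$ into the prefactor of Eq.~\eqref{eq:cvx-stab-sgdm} and factor out $\gamma^2$. The three $t$-independent terms become
\[
2+\frac{\beta^2\big(2(1+\beta)^2+\beta^2+3\big)}{(1-\beta)^3}
=2+\frac{\beta^2(3\beta^2+4\beta+5)}{(1-\beta)^3}.
\]
The $t$-dependent term becomes
\[
\frac{(1+\beta)\gamma\cdot\big((1-\beta)+\beta\big)\gamma\, t}{(1-\beta)^2 n}=\frac{(1+\beta)\gamma^2 t}{(1-\beta)^2 n}.
\]
Multiplying by $\frac{8\alpha e}{n}\sum_{k=1}^t\ebb[L_S(\bw_k)]$ then gives Eq.~\eqref{eq:cvx-stab-nesterov}.

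The only point needing care is the identification of the \eqref{alg:sgd-nesterov} iterates with the generalized SGDM iterates. This includes matching the initialization, i.e., choosing $\bu_0$ consistent with $\bfm_0=\mathbf{0}$, so that the stability quantity $\|\bw_{t+1}-\bw^{(i)}_{t+1}\|$ refers to the same sequence in both formulations. Everything else is routine algebra.
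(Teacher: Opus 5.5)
Your proposal is correct and follows the paper's proof: substitute $\eta=\beta\gamma$ into Theorem~\ref{thm:stab-cvx}, use $(1-\beta)\gamma+\eta=\gamma$, and check that Eq.~\eqref{eq:stab-lr} reduces to the stated condition on $\gamma$; your algebra for both the constant $3\beta^2+4\beta+5$ and the polynomial $3+6\beta+5\beta^2-2\beta^3$ is right. Your remark on matching initializations is a detail the paper leaves implicit; the consistent choice is $\bu_0=\bw_1$, which reproduces $\bfm_0=\mathbf{0}$ at the first step of \eqref{alg:sgd-nesterov}.
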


\begin{proof}
    Setting $\eta=\beta\gamma$ in Eq.~\eqref{eq:cvx-stab-sgdm} in Theorem \ref{thm:stab-cvx} directly gives Eq.~\eqref{eq:cvx-stab-nesterov} (note $(1-\beta)\gamma+\eta=\gamma$). Then it remains to verify the step size condition Eq.~\eqref{eq:stab-lr}. Plugging $\eta=\beta\gamma$ into Eq.~\eqref{eq:stab-lr} gives
    \begin{align*}
        \frac{1}{\alpha} &\ge \frac{\beta(1+\beta)(3-\beta)}{(1-\beta)^2}\gamma+\frac{\beta^2+3}{2(1-\beta)^2}\gamma
        =\frac{3+6\beta+5\beta^2-2\beta^3}{2(1-\beta)^2}\gamma,
    \end{align*}
    which is clearly satisfied with the given condition on $\gamma$. The proof is thus complete.
\end{proof}

\subsubsection{Discussions with Existing Work}
In this subsection, we compare our results with existing work on the stability of momentum-based methods. The work \citep{chen2018stability} analyzed the stability of the Heavy Ball (HB) method and Nesterov's Accelerated Gradient (NAG) method, which are deterministic versions of SGD with Polyak's momentum and SGD with Nesterov's momentum, respectively (i.e., replacing $\bg_t$ with $\nabla L_S(\bw_t)$ in \eqref{alg:sgd-polyak} and \eqref{alg:sgd-nesterov}).
Under the assumptions of $G$-Lipschitzness, $\alpha$-smoothness, and convex quadratic losses, \citet{chen2018stability} established that NAG is $\epsilon_N$-uniformly stable and HB is $\epsilon_H$-uniformly stable, with
  \begin{equation}\label{chen2018}
    \epsilon_N\leq \frac{4\gamma G^2t^2}{n}\quad\text{and}\quad \epsilon_H\leq \frac{4\eta G^2t}{(1-\sqrt{\alpha})n}.
  \end{equation}
These bounds imply that NAG has a stability rate of order $O(\gamma t^2 / n)$, whereas HB enjoys a rate of order $O(\eta t / n)$.
Because the bound on $\epsilon_N$ grows quadratically in $t$ while the bound on $\epsilon_H$ grows linearly, NAG appears less stable than HB in this deterministic quadratic setting.
A key limitation of \citet{chen2018stability} is that its stability analysis applies only to quadratic loss functions. The authors conjectured that the bounds in Eq.~\eqref{chen2018} extend to general convex objectives, but this conjecture was later refuted by \citet{attia2021algorithmic}, which demonstrated that for NAG with momentum $\beta_t = (t-1)/(t+2)$, the uniform stability parameter can grow at least exponentially in $t$ on a carefully constructed convex, Lipschitz, and smooth problem.

Our stability analysis does not contradict the instability of NAG established in \citet{attia2021algorithmic}. Several key differences explain this discrepancy. First, \citet{attia2021algorithmic} analyzed deterministic gradient methods, whereas our work focuses on stochastic gradient methods. The behavior of NAG can differ substantially between deterministic and stochastic regimes. For instance, although NAG accelerates deterministic gradient descent from a rate of $O(1/t)$ to $O(1/t^2)$ for smooth convex objectives, such acceleration is known not to extend to stochastic algorithms. Second, the lower bound in \citet{attia2021algorithmic} concerns uniform stability, which is significantly stronger than the on-average model stability considered in our analysis. Uniform stability measures the worst-case change in the output across all pairs of neighboring datasets and test examples, whereas on-average model stability evaluates the expected stability behavior. Thus, lower bounds for uniform stability do not directly translate to lower bounds for on-average stability. Our results show that the stability bounds of SGD with both Polyak's momentum and Nesterov's momentum grow at most linearly in $t$, improving upon the quadratic dependence observed for NAG in Eq.~\eqref{chen2018}.

We next compare our results with the stability analysis of SGD with Polyak's momentum \eqref{alg:sgd-polyak} established in \citet{ramezani2024generalization}. This prior work investigated the uniform stability of the method under the assumptions that the loss function is $\mu$-strongly convex, $G$-Lipschitz, and $\alpha$-smooth. Under the conditions
  $\frac{\eta\alpha\mu}{\alpha+\mu}-\frac{1}{2}\leq \beta<\frac{\eta\alpha\mu}{3(\alpha+\mu)}$ and $\eta\leq \frac{2}{\alpha+\mu}$, it was shown that \eqref{alg:sgd-polyak} is $\epsilon_u$-uniformly stable with
  \begin{equation*}\label{ramezani2024}
  \epsilon_u\leq \frac{2\eta G^2(\alpha+\mu)}{n(\eta\alpha\mu-3\beta(\alpha+\mu))}.
  \end{equation*}
  A main limitation of this result lies in the highly restrictive constraint on the momentum parameter: $\beta<\frac{\eta\alpha\mu}{3(\alpha+\mu)}$. In many practical machine learning settings, the strong convexity parameter $\mu$ is extremely small. Indeed, generalization theory (e.g., \citep{shalev2010learnability}) often motivates choosing $\mu \asymp 1/\sqrt{n}$ to achieve optimal excess risk rates. Moreover, the step size $\eta$ must typically be small to ensure convergence; for instance, \citet{ramezani2024generalization} suggests $\eta \asymp t^{-1/2}$. Under these realistic choices, the constraint above implies $\beta\lesssim \eta\mu\lesssim \frac{1}{(nt)^{1/2}}$, meaning that the stability analysis in \citet{ramezani2024generalization} applies only when the momentum parameter is extremely small, far below the values commonly used in practice. Consequently, their analysis cannot capture or explain the practical role of momentum. In contrast, our analysis applies to all $\beta\in[0, 1)$, thereby providing stability guarantees that genuinely capture the effect of momentum used in real-world applications. Beyond this, our results offer several additional advantages. We relax the strong convexity assumption in \citet{ramezani2024generalization} to mere convexity and eliminate the Lipschitzness requirement entirely. Moreover, our stability bounds involve summations of the training losses $L_S(\bw_k)$, which are typically small since we are optimizing the training errors. This explicitly highlights how optimization contributes to generalization. Finally, \citet{ramezani2024generalization} constructed a one-dimensional convex problem showing that the uniform stability of \eqref{alg:sgd-polyak} is at least $\frac{2\eta G^2}{n}\sum_{j=0}^{t-1}(t-j)\beta^j$. However, no corresponding upper bound was established for general convex problems. Our analysis fills this gap.

A different line of analysis was recently developed in \citet{dang2025algorithmic}, which studied the stability of SGDM through a stochastic differential equation (SDE) approach.
Their work considered SGDM with heavy-tailed gradient noise modeled by an $\alpha$-stable L\'evy process. Assuming dissipativity, Lipschitz continuity, and a pseudo-Lipschitz-like condition on the gradients, they derived stability bounds in terms of the Wasserstein distance between the distributions of the corresponding L\'evy-driven SDEs constructed from two neighboring datasets. In contrast, our analysis establishes stability bounds directly at the algorithmic level, without introducing additional noise models, and does not require the dissipativity condition.

Finally, unlike most existing studies on momentum-based methods~\citep{attia2021algorithmic,chen2018stability,ramezani2024generalization} which focus on uniform stability, we analyze the on-average model stability of SGDM. This weaker notion of stability enables us to remove the Lipschitz assumption and to incorporate the training errors along the optimization trajectory into the stability bounds.

\subsection{Optimization Analysis}\label{sec:opt}
We proceed to analyze the optimization error bounds of the generalized SGDM algorithm. To facilitate the analysis, we introduce an auxiliary sequence $\{\by_k\}_{k\ge 1}$ defined by
\begin{equation}\label{eq:y-def}
\by_k= \begin{cases}\bw_1, \quad \text{if } k=1 &  \\ \frac{1}{1-\beta} \bw_k-\frac{\beta}{1-\beta} \bw_{k-1}+\frac{\beta \gamma}{1-\beta} \bg_{k-1}, & \text{if } k \ge 2.\end{cases}
\end{equation}
The following lemma shows that this auxiliary sequence $\{\by_k\}_{k\ge 1}$ evolves according to a simple SGD update rule, which facilitates the optimization analysis significantly. The proofs of results in this section are given in Section~\ref{sec:proof-opt}.
\begin{lemma}\label{lem:y-gd}
Consider the sequence $\{\by_k\}_{k\ge 1}$ defined in Eq.~\eqref{eq:y-def}, where $\{\bw_k\}_{k\ge 1}$ is generated by the generalized SGDM Eq.~\eqref{eq:m-update}-\eqref{eq:w-update}. Then for any $k\ge 1$ and $\beta\in [0,1)$, it holds that
\[
\by_{k+1}=\by_k-\Big(\gamma+\frac{\eta}{1-\beta}\Big) \bg_k.
\]
\end{lemma}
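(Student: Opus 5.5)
We prove Lemma~\ref{lem:y-gd} by direct algebraic manipulation of the update rules Eq.~\eqref{eq:m-update}-\eqref{eq:w-update}, treating the base case $k=1$ separately from $k\ge 2$.

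\emph{Case $k\ge 2$.} The idea is to rewrite the update for $\bw_{k+1}$ so that the momentum vector $\bfm_k$ is eliminated and only iterates and gradients remain. From Eq.~\eqref{eq:w-update} at step $k-1$ we have
\[
\eta\bfm_{k-1}=\bw_{k-1}-\bw_k-\gamma\bg_{k-1}.
\]
Substituting $\bfm_k=\beta\bfm_{k-1}+\bg_k$ into the update for $\bw_{k+1}$ then gives
\[
\bw_{k+1}=\bw_k-(\gamma+\eta)\bg_k+\beta(\bw_k-\bw_{k-1})+\beta\gamma\bg_{k-1}.
\]
Next we compute
\[
(1-\beta)\by_{k+1}=\bw_{k+1}-\beta\bw_k+\beta\gamma\bg_k
\]
and insert the expression above. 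This yields
\[
(1-\beta)\by_{k+1}=\bw_k-\beta\bw_{k-1}+\beta\gamma\bg_{k-1}-\big((1-\beta)\gamma+\eta\big)\bg_k .
\]
The first three terms equal $(1-\beta)\by_k$ by Eq.~\eqref{eq:y-def}. Dividing by $1-\beta$ then gives
\[
\by_{k+1}=\by_k-\Big(\gamma+\frac{\eta}{1-\beta}\Big)\bg_k,
\]
which is the claim for $k\ge 2$.

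\emph{Case $k=1$.} Here $\by_1=\bw_1$ is defined separately, so the identity is checked directly. Since $\bfm_0=\mathbf 0$, we have $\bfm_1=\bg_1$ and hence $\bw_2=\bw_1-(\gamma+\eta)\bg_1$. By Eq.~\eqref{eq:y-def},
\[
\by_2=\frac{\bw_2-\beta\bw_1+\beta\gamma\bg_1}{1-\beta}=\bw_1-\frac{(1-\beta)\gamma+\eta}{1-\beta}\bg_1,
\]
which agrees with the claimed formula.

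The argument is purely mechanical. The only point needing care is eliminating $\bfm_{k-1}$ through the previous $\bw$-update, which requires $\eta>0$. This holds by assumption. Alternatively, one can avoid dividing by $\eta$ by working with $\eta\bfm$ throughout.
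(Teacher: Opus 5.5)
Your proof is correct and follows essentially the same route as the paper: for $k\ge 2$ you substitute the $\bw$-updates and use $\bfm_k-\beta\bfm_{k-1}=\bg_k$ (eliminating $\eta\bfm_{k-1}$ via the previous step), and for $k=1$ you check directly using $\bfm_1=\bg_1$. Your caveat about needing $\eta>0$ is unnecessary, since you only ever use the identity $\eta\bfm_{k-1}=\bw_{k-1}-\bw_k-\gamma\bg_{k-1}$ and never divide by $\eta$.
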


The next lemma provides an upper bound on the distance between $\by_k$ and $\bw_k$, which will be used in the optimization analysis.
\begin{lemma}\label{lem:dist-yw}
    Consider the sequence $\{\by_k\}_{k\ge 1}$ defined in Eq.~\eqref{eq:y-def}, where $\{\bw_k\}_{k\ge 1}$ is generated by the generalized SGDM Eq.~\eqref{eq:m-update}-\eqref{eq:w-update}. Then for any $k\ge 1$ and $\beta\in[0,1)$, it holds that
    \[
    \|\by_k-\bw_k\|^2 \leq \frac{\beta^2}{(1-\beta)^3} \eta^2 \sum_{j=1}^{k-1} \beta^{k-1-j}\|\bg_j\|^2.
    \]
\end{lemma}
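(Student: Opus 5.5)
Since $\by_1=\bw_1$, the case $k=1$ is trivial: the right-hand side is an empty sum. For $k\ge 2$ we start from the definition in Eq.~\eqref{eq:y-def}:
\[
\by_k-\bw_k=\frac{\beta}{1-\beta}\big(\bw_k-\bw_{k-1}+\gamma\bg_{k-1}\big).
\]
The update rule Eq.~\eqref{eq:w-update} at step $k-1$ gives $\bw_k-\bw_{k-1}=-\gamma\bg_{k-1}-\eta\bfm_{k-1}$. The $\gamma$-terms therefore cancel, and we obtain the key identity
\[
\by_k-\bw_k=-\frac{\beta\eta}{1-\beta}\,\bfm_{k-1},
\qquad\text{so}\qquad
\|\by_k-\bw_k\|^2=\frac{\beta^2\eta^2}{(1-\beta)^2}\|\bfm_{k-1}\|^2 .
\]

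It remains to show $\|\bfm_{k-1}\|^2\le \frac{1}{1-\beta}\sum_{j=1}^{k-1}\beta^{k-1-j}\|\bg_j\|^2$. Unrolling Eq.~\eqref{eq:m-update} with $\bfm_0=\mathbf{0}$ gives
\[
\bfm_{k-1}=\sum_{j=1}^{k-1}\beta^{k-1-j}\bg_j .
\]
We write each coefficient as $\beta^{k-1-j}=\sqrt{\beta^{k-1-j}}\cdot\sqrt{\beta^{k-1-j}}$ and apply Cauchy--Schwarz (equivalently, Jensen's inequality with weights proportional to $\beta^{k-1-j}$). This yields
\[
\|\bfm_{k-1}\|^2\le\Big(\sum_{j=1}^{k-1}\beta^{k-1-j}\Big)\Big(\sum_{j=1}^{k-1}\beta^{k-1-j}\|\bg_j\|^2\Big)\le \frac{1}{1-\beta}\sum_{j=1}^{k-1}\beta^{k-1-j}\|\bg_j\|^2,
\]
using the geometric series bound $\sum_{j\ge0}\beta^j=1/(1-\beta)$. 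Combining this with the identity above gives exactly the factor $\beta^2\eta^2/(1-\beta)^3$ in the statement.

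The only step that needs care is the cancellation of the $\gamma\bg_{k-1}$ terms, which reduces $\by_k-\bw_k$ to a pure multiple of $\bfm_{k-1}$; this is precisely why the term $\frac{\beta\gamma}{1-\beta}\bg_{k-1}$ appears in the definition of $\by_k$. After that, the remainder is a routine weighted Cauchy--Schwarz estimate. As a consistency check, when $\beta=0$ both sides vanish, as expected since then $\by_k=\bw_k$.
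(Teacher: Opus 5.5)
Your proof is correct and follows the paper's argument essentially step for step. The paper also first proves $\|\by_k-\bw_k\|^2=\frac{\beta^2}{(1-\beta)^2}\eta^2\|\bfm_{k-1}\|^2$ via the cancellation of the $\gamma\bg_{k-1}$ terms, then unrolls $\bfm_{k-1}=\sum_{j=1}^{k-1}\beta^{k-1-j}\bg_j$ and applies Jensen's inequality with weights proportional to $\beta^{k-1-j}$ (equivalent to your weighted Cauchy--Schwarz), together with the bound $\sum_{j=1}^{k-1}\beta^{k-1-j}\le 1/(1-\beta)$.
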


Leveraging the above two lemmas, we can establish the optimization error bounds for the generalized SGDM algorithm, as summarized in the following theorem. Note that $\ebb_A[\cdot]$ denotes expectation taken only with respect to the stochasticity of the algorithm, rather than the randomness of the training set $S$.

\begin{theorem}[Optimization error bounds for generalized SGDM]\label{thm:cvx-opt}
Assume for any $\bz\in\zcal$, the map $\bw\mapsto\ell(\bw;\bz)$ is nonnegative, convex, and $\alpha$-smooth.
Let $\{\bw_t\}_{t\ge 1}$ be the sequence produced by the generalized SGDM Eq.~\eqref{eq:m-update}-\eqref{eq:w-update} with $\beta\in [0, 1)$ applied to $S$ and denote  $\bar{\bw}_t:=\frac{1}{t}\sum_{k=1}^t \bw_k$. If the step sizes $\gamma\ge 0$ and $\eta>0$ are chosen such that
\begin{equation}\label{eq:opt-lr}
    \gamma^2+\frac{2}{1-\beta}\gamma\eta+\frac{1+\beta}{(1-\beta)^2}\eta^2\leq \frac{2(1-\beta)\gamma+2\eta}{3\alpha},
\end{equation}
then for any $\bw\in\wcal$ and $t\ge 1$, we have
\begin{align*}
   \ebb_A[L_S(\bar{\bw}_t)]-L_S(\bw)
   \leq\frac{3(1-\beta)\|\bw_1-\bw\|^2}{2\big((1-\beta)\gamma+\eta\big) t}+ \Big((1-\beta)\gamma+\eta +\frac{\beta\eta^2}{(1-\beta)\gamma+\eta}\Big)\frac{3\alpha L_S(\bw)}{(1-\beta)^2}.
\end{align*}
\end{theorem}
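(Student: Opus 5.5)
We follow the standard SGD analysis, applied to the auxiliary sequence $\{\by_k\}$ of Lemma~\ref{lem:y-gd} rather than to $\bw_k$ itself. Write $\lambda:=\gamma+\frac{\eta}{1-\beta}=\frac{(1-\beta)\gamma+\eta}{1-\beta}$. Lemma~\ref{lem:y-gd} gives $\by_{k+1}=\by_k-\lambda\bg_k$. Expanding the square,
\[
\|\by_{k+1}-\bw\|^2=\|\by_k-\bw\|^2-2\lambda\langle \by_k-\bw,\bg_k\rangle+\lambda^2\|\bg_k\|^2 .
\]
We then split $\langle \by_k-\bw,\bg_k\rangle=\langle \bw_k-\bw,\bg_k\rangle+\langle \by_k-\bw_k,\bg_k\rangle$. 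Since $i_k$ is independent of $\bw_k,\by_k$, the conditional expectation of the first piece, by convexity, is at least $L_S(\bw_k)-L_S(\bw)$. For the second piece we use Young's inequality:
\[
-2\lambda\langle \by_k-\bw_k,\bg_k\rangle\le \lambda\rho^{-1}\|\by_k-\bw_k\|^2+\lambda\rho\|\bg_k\|^2
\]
for a parameter $\rho>0$ chosen later. Gradient norms are controlled by the self-bounding property (Lemma~\ref{lem:self-bounding}): $\ebb_{i_k}\|\bg_k\|^2\le 2\alpha L_S(\bw_k)$. Thus every gradient term becomes a multiple of the training loss along the trajectory.

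Next, we use Lemma~\ref{lem:dist-yw} to bound $\|\by_k-\bw_k\|^2$ by $\frac{\beta^2\eta^2}{(1-\beta)^3}\sum_{j<k}\beta^{k-1-j}\|\bg_j\|^2$. Summing over $k=1,\dots,t$ and exchanging the order of summation, the geometric weights contribute a factor of at most $1/(1-\beta)$. Hence
\[
\sum_{k\le t}\ebb\|\by_k-\bw_k\|^2\le \frac{\beta^2\eta^2}{(1-\beta)^4}\,2\alpha\sum_{k\le t}\ebb L_S(\bw_k).
\]
Telescoping the recursion from $k=1$ to $t$, with $\by_1=\bw_1$, then gives
\[
2\lambda\sum_{k}\ebb[L_S(\bw_k)-L_S(\bw)]\le \|\bw_1-\bw\|^2+2\alpha\Big(\lambda^2+\lambda\rho+\frac{\lambda\beta^2\eta^2}{\rho(1-\beta)^4}\Big)\sum_k\ebb L_S(\bw_k).
\]

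Write $L_S(\bw_k)=(L_S(\bw_k)-L_S(\bw))+L_S(\bw)$ and move the coefficient of the loss sum to the left-hand side. The step-size condition Eq.~\eqref{eq:opt-lr} is precisely what guarantees that this coefficient is at most $\frac{2}{3}\cdot 2\lambda$, leaving at least $\frac{2\lambda}{3}$ on the left. Indeed, multiplying Eq.~\eqref{eq:opt-lr} by $(1-\beta)^{-2}$, the left-hand side becomes $\lambda^2+\frac{\beta\eta^2}{(1-\beta)^3}\cdot\frac{1}{1-\beta}$-type terms. This suggests choosing $\rho=\frac{\beta\eta}{(1-\beta)^2}$ (or a multiple of it), so that $\rho+\frac{\beta^2\eta^2}{\rho(1-\beta)^4}=\frac{2\beta\eta}{(1-\beta)^2}$. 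One checks that
\[
\lambda^2+\frac{2\beta\eta\lambda}{(1-\beta)^2}=\frac{\gamma^2(1-\beta)^2+2(1-\beta)\gamma\eta+(1+\beta)\eta^2}{(1-\beta)^2}\cdot(\text{matching form}),
\]
which reproduces the left-hand side of Eq.~\eqref{eq:opt-lr}. Dividing by $\frac{2\lambda t}{3}$ and applying Jensen's inequality, $L_S(\bar\bw_t)\le\frac1t\sum_k L_S(\bw_k)$, yields the first term $\frac{3\|\bw_1-\bw\|^2}{2\lambda t}$. The residual $L_S(\bw)$ term has coefficient
\[
\frac{3\alpha}{\lambda}\Big(\lambda^2+\frac{2\beta\eta\lambda}{(1-\beta)^2}\Big)\Big/\ldots,
\]
and simplifying with $\lambda=((1-\beta)\gamma+\eta)/(1-\beta)$ should give the stated $\big((1-\beta)\gamma+\eta+\frac{\beta\eta^2}{(1-\beta)\gamma+\eta}\big)\frac{3\alpha}{(1-\beta)^2}$.

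The main obstacle is the bookkeeping in this last step: choosing $\rho$ and arranging the absorption so that the constants match Eq.~\eqref{eq:opt-lr} and the final coefficient exactly. In particular, the $\frac{\beta\eta^2}{(1-\beta)\gamma+\eta}$ term indicates that $\rho$ should be tuned relative to $\lambda$, roughly $\rho\propto\beta\eta/(1-\beta)^2$, and possibly that the Young split should be weighted differently. We would first try the above choice and adjust if the constants do not close.
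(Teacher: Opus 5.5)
Your proposal is correct and follows the paper's proof essentially step for step: the SGD recursion for $\by_k$, Young's inequality on $\langle \by_k-\bw_k,\bg_k\rangle$, Lemma~\ref{lem:dist-yw} with self-bounding, and absorption with the factor $2/3$. The only difference is the Young weight: the paper takes $\rho=\beta\lambda/(1-\beta)$ (weights $\frac{1-\beta}{\beta}$ and $\frac{\beta}{1-\beta}\lambda^2$), whereas your $\rho=\beta\eta/(1-\beta)^2$ is the optimal choice, and the two coincide when $\gamma=0$. The bookkeeping you left open does close, but as an inequality rather than an identity: $\lambda^2+\frac{2\beta\eta\lambda}{(1-\beta)^2}=\frac{1}{1-\beta}\big(\gamma^2+\frac{2\gamma\eta}{1-\beta}+\frac{(1+\beta)\eta^2}{(1-\beta)^2}\big)-\frac{\beta\gamma^2}{1-\beta}$, so Eq.~\eqref{eq:opt-lr} multiplied by $(1-\beta)^{-1}$ (not $(1-\beta)^{-2}$) gives an absorption coefficient $\le 4\lambda/3$, and your final $L_S(\bw)$ coefficient is smaller than the stated one by $\frac{3\alpha\beta\gamma^2}{(1-\beta)\gamma+\eta}$, which still implies the theorem. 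The edge cases are trivial: for $\beta=0$ one has $\by_k=\bw_k$ (so no Young step is needed), and if $\sum_k\ebb_A[L_S(\bw_k)-L_S(\bw)]<0$ the bound holds outright.
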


By applying Theorem \ref{thm:cvx-opt}, we obtain optimization error bounds for both SGD with Polyak's momentum (by setting $\gamma=0$) and SGD with Nesterov's momentum (by setting $\eta=\beta\gamma$). We summarize these results in the following corollaries and omit their proofs for brevity.
\begin{corollary}[Optimization error bounds for \eqref{alg:sgd-polyak}]
Assume for any $\bz\in\zcal$, the map $\bw\mapsto\ell(\bw;\bz)$ is nonnegative, convex, and $\alpha$-smooth.
Let $\{\bw_t\}_{t\ge 1}$ be the sequence produced by SGD with Polyak's momentum \eqref{alg:sgd-polyak} with $\beta\in [0, 1)$ applied to $S$ and denote $\bar{\bw}_t:=\frac{1}{t}\sum_{k=1}^t \bw_k$. If we choose the step size
\[
\eta\leq \frac{2(1-\beta)^2}{3\alpha(1+\beta)},
\]
then for any $\bw\in \wcal$ and $t\ge 1$, we have
\[
\ebb_A[L_S(\bar{\bw}_t)]-L_S(\bw)\leq\frac{3(1-\beta)\|\bw_1-\bw\|^2}{2\eta t}+\frac{3\alpha(1+\beta)}{(1-\beta)^2}\eta L_S(\bw).
\]
\end{corollary}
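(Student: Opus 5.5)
This corollary should follow by specializing Theorem~\ref{thm:cvx-opt} to $\gamma=0$, exactly as Corollary~\ref{thm:stab-cvx-polyak} was obtained from Theorem~\ref{thm:stab-cvx}. With $\gamma=0$, the generalized SGDM Eq.~\eqref{eq:m-update}-\eqref{eq:w-update} coincides with \eqref{alg:sgd-polyak}, so the iterates and the averaged iterate $\bar{\bw}_t$ are the same objects. It therefore suffices to check two things: that the corollary's step size condition implies Eq.~\eqref{eq:opt-lr} at $\gamma=0$, and that the bound of Theorem~\ref{thm:cvx-opt} reduces to the stated one.

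First, the step size condition. Substituting $\gamma=0$ into Eq.~\eqref{eq:opt-lr} gives
\[
\frac{1+\beta}{(1-\beta)^2}\eta^2\le \frac{2\eta}{3\alpha}.
\]
Since $\eta>0$, we may divide by $\eta$, and this becomes $\eta\le \frac{2(1-\beta)^2}{3\alpha(1+\beta)}$. That is precisely the assumed condition.

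Second, the bound. At $\gamma=0$ we have $(1-\beta)\gamma+\eta=\eta$, so the first term of Theorem~\ref{thm:cvx-opt} becomes $\frac{3(1-\beta)\|\bw_1-\bw\|^2}{2\eta t}$. In the second term, the bracket becomes $\eta+\beta\eta^2/\eta=(1+\beta)\eta$, and multiplying by $\frac{3\alpha L_S(\bw)}{(1-\beta)^2}$ gives $\frac{3\alpha(1+\beta)}{(1-\beta)^2}\eta L_S(\bw)$.

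There is no real obstacle. The only points needing care are dividing by $\eta>0$ in the step size condition, and simplifying $\beta\eta^2/\eta$ correctly, which is legitimate since the denominator $(1-\beta)\gamma+\eta=\eta$ is positive.
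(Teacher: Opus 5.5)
Your proposal is correct and matches the paper's approach: the paper omits this proof, stating only that it follows from Theorem~\ref{thm:cvx-opt} by setting $\gamma=0$. Your check that Eq.~\eqref{eq:opt-lr} reduces to $\eta\le \frac{2(1-\beta)^2}{3\alpha(1+\beta)}$, and that the bracket becomes $(1+\beta)\eta$, is exactly that specialization.
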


\begin{corollary}[Optimization error bounds for \eqref{alg:sgd-nesterov}]
Assume for any $\bz\in\zcal$, the map $\bw\mapsto\ell(\bw;\bz)$ is nonnegative, convex, and $\alpha$-smooth.
Let $\{\bw_t\}_{t\ge 1}$ be the sequence produced by SGD with Nesterov's momentum \eqref{alg:sgd-nesterov} with $\beta\in (0, 1)$ applied to $S$ and denote $\bar{\bw}_t:=\frac{1}{t}\sum_{k=1}^t \bw_k$. If we choose the step size
\[
\gamma \leq \frac{2(1-\beta)^2}{3\alpha(1+\beta^3)},
\]
then for any $\bw\in\wcal$ and $t\ge 1$, we have
\[
\ebb_A[L_S(\bar{\bw}_t)]-L_S(\bw)\leq\frac{3(1-\beta)\|\bw_1-\bw\|^2}{2\gamma t}+ \frac{3\alpha(\beta^3+1)}{(1-\beta)^2}\gamma L_S(\bw).
\]
\end{corollary}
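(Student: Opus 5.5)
The statement immediately above is the corollary for \eqref{alg:sgd-nesterov}. It should follow from Theorem~\ref{thm:cvx-opt} by substituting $\eta=\beta\gamma$, which is how the generalized SGDM reduces to SGD with Nesterov's momentum (Section~3.3). The proof then has two parts. First, simplify the right-hand side of the bound. Second, check that the stated step size condition implies Eq.~\eqref{eq:opt-lr}.

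For the bound, set $\eta=\beta\gamma$. The key simplification is $(1-\beta)\gamma+\eta=\gamma$. The first term becomes $\frac{3(1-\beta)\|\bw_1-\bw\|^2}{2\gamma t}$, which is the stated one. In the second term, the bracket is
\[
(1-\beta)\gamma+\eta+\frac{\beta\eta^2}{(1-\beta)\gamma+\eta}=\gamma+\frac{\beta\cdot\beta^2\gamma^2}{\gamma}=(1+\beta^3)\gamma .
\]
Multiplying by $\frac{3\alpha L_S(\bw)}{(1-\beta)^2}$ gives exactly $\frac{3\alpha(1+\beta^3)}{(1-\beta)^2}\gamma L_S(\bw)$.

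For the step size condition, substitute $\eta=\beta\gamma$ into Eq.~\eqref{eq:opt-lr}. The left-hand side is
\[
\gamma^2\Big(1+\frac{2\beta}{1-\beta}+\frac{\beta^2(1+\beta)}{(1-\beta)^2}\Big)
=\gamma^2\,\frac{(1-\beta)^2+2\beta(1-\beta)+\beta^2+\beta^3}{(1-\beta)^2}.
\]
The numerator is $(1-\beta)(1+\beta)+\beta^2+\beta^3=1+\beta^3$. The right-hand side is $\frac{2\gamma}{3\alpha}$, again using $(1-\beta)\gamma+\eta=\gamma$. Dividing by $\gamma>0$, condition Eq.~\eqref{eq:opt-lr} is equivalent to
\[
\gamma\le\frac{2(1-\beta)^2}{3\alpha(1+\beta^3)},
\]
which is precisely the hypothesis. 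The conclusion then follows directly from Theorem~\ref{thm:cvx-opt}.

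There is no real obstacle. The only step needing care is the algebraic simplification of the numerator to $1+\beta^3$, which I would check by expanding: $1-\beta^2+\beta^2+\beta^3=1+\beta^3$.
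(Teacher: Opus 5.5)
Your proposal is correct and follows the same route as the paper: set $\eta=\beta\gamma$ in Theorem~\ref{thm:cvx-opt} and use $(1-\beta)\gamma+\eta=\gamma$ to reduce the bound. Your check that condition~\eqref{eq:opt-lr} becomes $\gamma\le\frac{2(1-\beta)^2}{3\alpha(1+\beta^3)}$ is the same computation the paper carries out in the proof of Theorem~\ref{thm:epr-nesterov}.
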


\section{Excess Population Risk of the Generalized SGDM}\label{sec:epr}
In this section, we combine the stability and optimization results from the previous section to derive excess population risk (EPR) bounds for the generalized SGDM algorithm. By specifying appropriate step sizes, we further obtain EPR bounds for SGD with Polyak's momentum and SGD with Nesterov's momentum. Throughout, we write $A\lesssim B$ to indicate that there exists a universal constant $c>0$ such that $A\leq cB$, and we denote $A\asymp B$ when both $A\lesssim B$ and $B\lesssim A$ hold. The omitted proofs of the results presented in this section can be found in Section \ref{sec:proof-epr}.

By combining the results established in Theorem \ref{thm:stab-cvx} and Theorem \ref{thm:cvx-opt} and leveraging the error decomposition in Eq.~\eqref{err-decomposition}, we characterize the EPR bounds of the generalized SGDM in the following theorem.
\begin{theorem}[Excess population risk of generalized SGDM\label{thm:epr-cvx}]
    Assume that for any $\bz\in\zcal$, the map $\bw\mapsto\ell(\bw;\bz)$ is nonnegative, convex, and $\alpha$-smooth.
    Let $\{\bw_t\}_{t\ge 1}$ be the sequence produced by the generalized SGDM Eq.~\eqref{eq:m-update}-\eqref{eq:w-update} with $\beta \in[0,1)$ applied to $S$ and denote $\bar{\bw}_t:=\frac{1}{t}\sum_{k=1}^t \bw_k$. If the step sizes $\gamma\ge 0$ and $\eta>0$ are chosen such that
\begin{align}
    & \frac{(1+\beta)(3-\beta)}{(1-\beta)^2}\eta+\frac{\beta^2+3}{2(1-\beta)^2}\gamma\leq \frac{1}{\alpha},\label{eq:epr-lr1}\\
    & \gamma^2+\frac{2}{1-\beta}\gamma\eta+\frac{1+\beta}{(1-\beta)^2}\eta^2\leq \frac{2(1-\beta)\gamma+2\eta}{3\alpha},\label{eq:epr-lr2}
\end{align}
    then for any $t\ge 1$ and $\rho>0$, we have
\begin{align*}
& \e[L(\bar{\bw}_t)]-L(\bw^*) \notag\\
\lesssim & \frac{1-\beta}{\big((1-\beta)\gamma+\eta\big) t}+ D(\beta, \gamma, \eta) L(\bw^*) + \frac{1}{\rho}\Big(L(\bw^*)+\frac{1-\beta}{\big((1-\beta)\gamma+\eta\big) t}+ D(\beta, \gamma, \eta) L(\bw^*)\Big) \\
&+ \frac{(1+\rho) C(\beta, \gamma, \eta, t)}{n} \Big(t L(\bw^*)+\frac{1-\beta}{(1-\beta)\gamma+\eta}+ t D(\beta, \gamma, \eta) L(\bw^*)\Big),
\end{align*}
where we define
\begin{align*}
    & C(\beta, \gamma, \eta, t):=\frac{\eta^2}{(1-\beta)^3}+\gamma^2+\frac{\beta \gamma\eta}{(1-\beta)^3}+\frac{(\gamma+\eta)((1-\beta)\gamma+\eta)t}{(1-\beta)^2 n},\\
    & D(\beta, \gamma, \eta):=\frac{(1-\beta)\gamma+\eta}{(1-\beta)^2} +\frac{\beta\eta^2}{(1-\beta)^2\big((1-\beta)\gamma+\eta\big)}.
\end{align*}
\end{theorem}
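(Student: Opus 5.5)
The plan is to start from the error decomposition Eq.~\eqref{err-decomposition} with $A(S)=\bar{\bw}_t$. This splits the excess population risk into a generalization gap $\ebb[L(\bar{\bw}_t)-L_S(\bar{\bw}_t)]$ and an optimization error $\ebb[L_S(\bar{\bw}_t)-L_S(\bw^*)]$. Each piece will be bounded by an earlier result.

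\textbf{Optimization error.} I apply Theorem~\ref{thm:cvx-opt} with $\bw=\bw^*$; this is valid under Eq.~\eqref{eq:epr-lr2}. Taking expectation over $S$ and using $\ebb[L_S(\bw^*)]=L(\bw^*)$ gives
\[
\ebb[L_S(\bar{\bw}_t)]-L(\bw^*)\lesssim \frac{1-\beta}{((1-\beta)\gamma+\eta)t}+D(\beta,\gamma,\eta)L(\bw^*).
\]
Here $\|\bw_1-\bw^*\|^2$ is treated as a constant. The bracket in Theorem~\ref{thm:cvx-opt}, divided by $(1-\beta)^2$, is exactly $D$. This yields the first two terms of the claim, and the same bound controls $\ebb[L_S(\bar{\bw}_t)]$ by adding $L(\bw^*)$.

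\textbf{Generalization gap.} I apply Lemma~\ref{lem:general-stablity} with $A(S)=\bar{\bw}_t$. The first term $\frac{\alpha}{\rho}\ebb[L_S(\bar{\bw}_t)]$ is bounded by the previous display plus $L(\bw^*)$, which produces the $\frac1\rho(\cdots)$ term.

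For the stability term, convexity of $\|\cdot\|^2$ gives
\[
\|\bar{\bw}_t-\bar{\bw}_t^{(i)}\|^2\le \frac1t\sum_{k=1}^t\|\bw_k-\bw_k^{(i)}\|^2 .
\]
Each summand is bounded by Theorem~\ref{thm:stab-cvx}, which is valid under Eq.~\eqref{eq:epr-lr1}. Since that bound is monotone in $t$, each term is at most the bound at horizon $t$, namely $C(\beta,\gamma,\eta,t)\frac{8\alpha e}{n}\sum_{k=1}^t\ebb[L_S(\bw_k)]$ up to constants. To match $C$, note that $\frac{2(1+\beta)^2}{(1-\beta)^3}\lesssim\frac{1}{(1-\beta)^3}$ and $\beta(\beta^2+3)\lesssim\beta$. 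Multiplying by $\frac{\alpha+\rho}{2}\lesssim 1+\rho$ then gives the factor $\frac{(1+\rho)C}{n}$ times $\sum_{k=1}^t\ebb[L_S(\bw_k)]$.

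\textbf{Main obstacle.} The remaining step is to bound $\sum_{k=1}^t\ebb[L_S(\bw_k)]$, the training loss summed along the trajectory, rather than at the average iterate. Theorem~\ref{thm:cvx-opt} as stated controls only $L_S(\bar{\bw}_t)$. Its proof, however, should actually bound $\frac1t\sum_{k=1}^t\ebb_A[L_S(\bw_k)]-L_S(\bw)$, with Jensen used only at the final step.

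So I would check that this stronger average-of-losses form comes out of the proof in Section~\ref{sec:proof-opt}. If it does not, I would redo that telescoping argument on the auxiliary sequence from Lemma~\ref{lem:y-gd}, using Lemma~\ref{lem:dist-yw} and the self-bounding Lemma~\ref{lem:self-bounding}. Either way this should give
\[
\sum_{k=1}^t\ebb[L_S(\bw_k)]\lesssim tL(\bw^*)+\frac{1-\beta}{(1-\beta)\gamma+\eta}+tD(\beta,\gamma,\eta)L(\bw^*).
\]
Substituting this completes the last term, and summing the three contributions proves the theorem.
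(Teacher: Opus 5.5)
Your proposal is correct and follows the paper's proof step by step: the decomposition \eqref{err-decomposition}, Jensen plus monotonicity in $t$ of the bound in Theorem~\ref{thm:stab-cvx} to handle $\bar{\bw}_t$, and Lemma~\ref{lem:general-stablity}, followed by the average-of-losses inequality \eqref{eq:opt-bound} with $\bw=\bw^*$ to control both $\ebb[L_S(\bar{\bw}_t)]$ and $\sum_{k=1}^t\ebb[L_S(\bw_k)]$. The stronger form you flagged as the ``main obstacle'' is exactly what the proof of Theorem~\ref{thm:cvx-opt} establishes before its final Jensen step (and it holds trivially in that proof's first case), so no further argument is needed.
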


Theorem~\ref{thm:epr-cvx} establishes a general result with various choices of the step sizes $\gamma\ge 0$ and $\eta> 0$. Next, we focus on SGD with Polyak's momentum \eqref{alg:sgd-polyak} and SGD with Nesterov's momentum \eqref{alg:sgd-nesterov} to obtain simplified results. We begin with the case of Polyak's momentum, stated in the next theorem. %
\begin{theorem}[Excess population risk of \eqref{alg:sgd-polyak}]\label{thm:epr-polyak}
    Assume that for any $\bz\in\zcal$, the map $\bw\mapsto\ell(\bw;\bz)$ is nonnegative, convex, and $\alpha$-smooth.
    Let $\{\bw_t\}_{t\ge 1}$ be the sequence produced by SGD with Polyak's momentum \eqref{alg:sgd-polyak} with $\beta\in [0,1)$ applied to $S$ and denote $\bar{\bw}_t:=\frac{1}{t}\sum_{k=1}^t \bw_k$. If the step size $\eta>0$ is chosen such that
    \[\eta\leq\frac{(1-\beta)^2}{\alpha(1+\beta)(3-\beta)},\]
    then for any $t\ge 1$ and $\rho>0$, we have
\begin{multline*}
\e[L(\bar{\bw}_t)]-L(\bw^*) \lesssim \frac{1-\beta}{\eta t}+ \frac{\eta}{(1-\beta)^2} L(\bw^*) + \frac{1}{\rho}\Big(L(\bw^*)+\frac{1-\beta}{\eta t}+ \frac{\eta}{(1-\beta)^2} L(\bw^*)\Big) \\
+ \frac{1+\rho}{n} \frac{\eta^2}{(1-\beta)^2}\Big(\frac{1}{1-\beta}+\frac{t}{n}\Big) \Big(t L(\bw^*)+\frac{1-\beta}{\eta}+ \frac{\eta t}{(1-\beta)^2}  L(\bw^*)\Big).
\end{multline*}
\end{theorem}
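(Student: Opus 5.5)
The plan is to specialize Theorem~\ref{thm:epr-cvx} to $\gamma=0$. Nothing new needs to be proved beyond checking that the step-size hypotheses are met and that the quantities $C$ and $D$ simplify to the form in the statement.

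\textbf{Step 1: step-size conditions.} With $\gamma=0$, condition \eqref{eq:epr-lr1} becomes exactly $\eta\le (1-\beta)^2/(\alpha(1+\beta)(3-\beta))$, which is the assumption. Condition \eqref{eq:epr-lr2} becomes $\frac{1+\beta}{(1-\beta)^2}\eta^2\le \frac{2\eta}{3\alpha}$, i.e. $\eta\le \frac{2(1-\beta)^2}{3\alpha(1+\beta)}$. I will show this is implied by the assumed bound. It suffices that $\frac{1}{3-\beta}\le \frac{2}{3}$, which holds since $3-\beta> 2\ge 3/2$. Thus both conditions hold.

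\textbf{Step 2: simplify $D$.} With $\gamma=0$ we have $(1-\beta)\gamma+\eta=\eta$. Hence
\[
D(\beta,0,\eta)=\frac{\eta}{(1-\beta)^2}+\frac{\beta\eta}{(1-\beta)^2}=\frac{(1+\beta)\eta}{(1-\beta)^2}\lesssim \frac{\eta}{(1-\beta)^2},
\]
using $1+\beta\le 2$. The term $\frac{1-\beta}{((1-\beta)\gamma+\eta)t}$ becomes $\frac{1-\beta}{\eta t}$.

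\textbf{Step 3: simplify $C$.} With $\gamma=0$,
\[
C(\beta,0,\eta,t)=\frac{\eta^2}{(1-\beta)^3}+\frac{\eta^2 t}{(1-\beta)^2n}=\frac{\eta^2}{(1-\beta)^2}\Big(\frac{1}{1-\beta}+\frac{t}{n}\Big).
\]

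\textbf{Step 4: substitute.} Plugging these into the bound of Theorem~\ref{thm:epr-cvx} gives the claimed inequality term by term. Each replacement is an upper bound by a universal constant factor, so it is absorbed into $\lesssim$. This includes $tD L(\bw^*)\lesssim \frac{\eta t}{(1-\beta)^2}L(\bw^*)$.

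The only point needing care is Step 1, namely confirming that the single stated step-size restriction implies the optimization condition \eqref{eq:epr-lr2}. The comparison of constants done there is elementary, so I expect no real obstacle.
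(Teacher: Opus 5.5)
Your proposal is correct and matches the paper's proof essentially step for step. Like the paper, you set $\gamma=0$ in Theorem~\ref{thm:epr-cvx}, compute $C(\beta,0,\eta,t)=\frac{\eta^2}{(1-\beta)^2}\big(\frac{1}{1-\beta}+\frac{t}{n}\big)$ and $D(\beta,0,\eta)=\frac{(1+\beta)\eta}{(1-\beta)^2}$, absorb the factor $1+\beta$, and check that the assumed step size is exactly \eqref{eq:epr-lr1} and implies \eqref{eq:epr-lr2} (the paper does this via $\frac{\eta}{\alpha(3-\beta)}\le\frac{\eta}{2\alpha}\le\frac{2\eta}{3\alpha}$).
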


Theorem~\ref{thm:epr-polyak} establishes general EPR bounds for \eqref{alg:sgd-polyak} with various choices of step size $\eta>0$. To further simplify the results, we specify the choices of $t,\rho$, and $\eta$ to derive Corollary~\ref{co:epr-polyak}.
\begin{corollary}[Excess population risk of \eqref{alg:sgd-polyak}]\label{co:epr-polyak}
Consider the same setting as in Theorem \ref{thm:epr-polyak} and $\beta\in [0,1)$.
\begin{itemize}
    \item[1.] If $L(\bw^*) \ge \frac{1}{n}$, we can take $t\asymp n/(1-\beta)$, $\rho = \sqrt{n L(\bw^*)}$, and $\eta\asymp\frac{(1-\beta)^2}{\sqrt{n L(\bw^*)}}$ to derive
    \[
    \ebb[L(\bar{\bw}_t)-L(\bw^*)]\lesssim \sqrt{\frac{L(\bw^*)}{n}}.
    \]
    \item[2.] If $L(\bw^*) < \frac{1}{n}$, we can take $t \asymp n/(1-\beta)$, $\rho = 1$, and $\eta\asymp (1-\beta)^2$ to derive
    \[
    \ebb[L(\bar{\bw}_t)-L(\bw^*)]\lesssim \frac{1}{n}.
    \]
\end{itemize}
\end{corollary}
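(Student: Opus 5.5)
The plan is to plug the stated choices of $t$, $\rho$, $\eta$ into Theorem~\ref{thm:epr-polyak} and check that every term is of the claimed order. First we confirm the step size condition $\eta\le (1-\beta)^2/(\alpha(1+\beta)(3-\beta))$. Since $(1+\beta)(3-\beta)\le 4$, the choice $\eta = c(1-\beta)^2\min\{1,1/\sqrt{nL(\bw^*)}\}$ with a small enough constant $c$ (depending on $\alpha$) satisfies it in both cases.

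Then we simplify the bound of Theorem~\ref{thm:epr-polyak} with $t\asymp n/(1-\beta)$. In both cases this gives $\eta t\asymp (1-\beta)n\min\{1,(nL(\bw^*))^{-1/2}\}$. Also $t/n\asymp 1/(1-\beta)$, so the factor $\frac{1}{1-\beta}+\frac tn\asymp \frac1{1-\beta}$.

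\textbf{Case 1} ($L(\bw^*)\ge 1/n$, $\eta\asymp(1-\beta)^2/\sqrt{nL^*}$, where $L^*:=L(\bw^*)$). The optimization terms are
\[
\frac{1-\beta}{\eta t}\asymp \frac{\sqrt{nL^*}}{n}=\sqrt{L^*/n},
\qquad
\frac{\eta}{(1-\beta)^2}L^*\asymp \sqrt{L^*/n}.
\]
The $\rho$-term is $\frac1\rho\big(L^*+O(\sqrt{L^*/n})\big)$. With $\rho=\sqrt{nL^*}$ this is $\sqrt{L^*/n}+O(1/n)$, and $1/n\le\sqrt{L^*/n}$ because $L^*\ge 1/n$.

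For the stability term, the prefactor is
\[
\frac{1+\rho}{n}\cdot\frac{\eta^2}{(1-\beta)^3}\asymp \frac{\sqrt{nL^*}}{n}\cdot\frac{(1-\beta)}{nL^*}.
\]
This uses $1+\rho\lesssim\rho$, which holds since $\rho\ge1$. The bracket is
\[
tL^*+\frac{1-\beta}{\eta}+\frac{\eta t}{(1-\beta)^2}L^*\asymp \frac{nL^*}{1-\beta}+\frac{\sqrt{nL^*}}{1-\beta}+\frac{\sqrt{nL^*}}{1-\beta}\cdot\frac{L^*\cdot n}{nL^*}.
\]
Using $\sqrt{nL^*}\le nL^*$, this is $\lesssim nL^*/(1-\beta)$. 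Multiplying gives $\lesssim \sqrt{nL^*}/n=\sqrt{L^*/n}$.

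\textbf{Case 2} ($L^*<1/n$, $\rho=1$, $\eta\asymp(1-\beta)^2$). Each optimization term is $\lesssim 1/n$, since $(1-\beta)/(\eta t)\asymp 1/n$ and $\eta L^*/(1-\beta)^2\asymp L^*<1/n$. The $\rho$-term is $L^*+O(1/n)\lesssim 1/n$. The stability prefactor is $\asymp (1-\beta)/n$. The bracket is
\[
\frac{nL^*}{1-\beta}+\frac{1}{1-\beta}+\frac{nL^*}{1-\beta}\lesssim\frac1{1-\beta},
\]
so the product is $\lesssim 1/n$.

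The only delicate step is the bookkeeping of the powers of $(1-\beta)$ in the stability term. Because $t\asymp n/(1-\beta)$, the factor $\frac{1}{1-\beta}+\frac tn$ has to be kept as $1/(1-\beta)$. The $(1-\beta)^4$ in $\eta^2$ then cancels the resulting $(1-\beta)^{-3}$ together with the $(1-\beta)^{-1}$ coming from the bracket. We verify this cancellation term by term, and treat $\alpha$ as a constant absorbed into $\lesssim$.
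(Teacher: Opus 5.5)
Your proposal is correct and follows the same route as the paper. You substitute $t\asymp n/(1-\beta)$ into Theorem~\ref{thm:epr-polyak}, then in each regime bound the bracket by $nL(\bw^*)/(1-\beta)$ (resp.\ $1/(1-\beta)$), so the powers of $(1-\beta)$ cancel and every term has the target order; your explicit check of the step-size condition via $(1+\beta)(3-\beta)\le 4$ is a small addition that the paper leaves implicit.
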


The following theorem provides analogous results for SGD with Nesterov's momentum.
\begin{theorem}[Excess population risk of \eqref{alg:sgd-nesterov}]\label{thm:epr-nesterov}
    Assume that for any $\bz\in\zcal$, the map $\bw\mapsto\ell(\bw;\bz)$ is nonnegative, convex, and $\alpha$-smooth.
    Let $\{\bw_t\}_{t\ge 1}$ be the sequence produced by SGD with Nesterov's momentum \eqref{alg:sgd-nesterov} with $\beta\in(0,1)$ applied to $S$ and denote $\bar{\bw}_t:=\frac{1}{t}\sum_{k=1}^t \bw_k$. Suppose the step size $\gamma>0$ is chosen such that
    \[
    \gamma\leq\frac{2(1-\beta)^2}{\alpha(3+6\beta+5\beta^2-2\beta^3)}.
    \]
    Then for any $t\ge 1$ and $\rho>0$, we have
    \begin{multline*}
\e[L(\bar{\bw}_t)]-L(\bw^*)\lesssim \frac{1-\beta}{\gamma t}+ \frac{\gamma}{(1-\beta)^2} L(\bw^*) + \frac{1}{\rho}\Big(L(\bw^*)+\frac{1-\beta}{\gamma t}+ \frac{\gamma}{(1-\beta)^2} L(\bw^*)\Big) \\
+ \frac{1+\rho}{n} \frac{\gamma^2}{(1-\beta)^2} \Big(\frac{1}{1-\beta}+\frac{t}{n}\Big) \Big(t L(\bw^*)+\frac{1-\beta}{\gamma}+ \frac{\gamma t}{(1-\beta)^2} L(\bw^*)\Big).
\end{multline*}
\end{theorem}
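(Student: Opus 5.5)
The plan is to specialize Theorem~\ref{thm:epr-cvx} to $\eta=\beta\gamma$, in the same way Corollary~\ref{thm:stab-cvx-nesterov} specializes Theorem~\ref{thm:stab-cvx}. The proof has two parts. First, check that the single step-size condition in the statement implies both hypotheses \eqref{eq:epr-lr1} and \eqref{eq:epr-lr2}. Second, simplify $C(\beta,\gamma,\eta,t)$ and $D(\beta,\gamma,\eta)$ up to universal constants.

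For the step-size conditions, the proof of Corollary~\ref{thm:stab-cvx-nesterov} already shows that $\gamma\le 2(1-\beta)^2/(\alpha(3+6\beta+5\beta^2-2\beta^3))$ is exactly \eqref{eq:epr-lr1} with $\eta=\beta\gamma$. For \eqref{eq:epr-lr2}, substitute $\eta=\beta\gamma$ and use $(1-\beta)\gamma+\eta=\gamma$. Multiplying the left side by $(1-\beta)^2/\gamma^2$ gives
\[
(1-\beta)^2+2\beta(1-\beta)+\beta^2(1+\beta)=1+\beta^3.
\]
So the condition becomes $\gamma\le 2(1-\beta)^2/(3\alpha(1+\beta^3))$, which matches the Nesterov optimization corollary. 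It then suffices to show
\[
3(1+\beta^3)\le 3+6\beta+5\beta^2-2\beta^3,
\]
that is, $5\beta^3\le 6\beta+5\beta^2$. This holds for $\beta\in(0,1)$ since $\beta^3\le\beta^2$. I expect this verification to be the only genuinely non-mechanical step, and it is a one-line polynomial inequality.

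Next come the simplifications. We have
\[
D=\frac{\gamma}{(1-\beta)^2}+\frac{\beta^3\gamma}{(1-\beta)^2}\asymp\frac{\gamma}{(1-\beta)^2}.
\]
The factor $\frac{1-\beta}{((1-\beta)\gamma+\eta)t}$ equals $\frac{1-\beta}{\gamma t}$, and likewise $\frac{1-\beta}{(1-\beta)\gamma+\eta}=\frac{1-\beta}{\gamma}$. For $C$, using $\gamma+\eta=(1+\beta)\gamma$,
\[
C=\frac{\beta^2\gamma^2}{(1-\beta)^3}+\gamma^2+\frac{\beta^2\gamma^2}{(1-\beta)^3}+\frac{(1+\beta)\gamma^2 t}{(1-\beta)^2n}\lesssim\frac{\gamma^2}{(1-\beta)^2}\Big(\frac{1}{1-\beta}+\frac{t}{n}\Big),
\]
since $\gamma^2\le\gamma^2/(1-\beta)^3$ and $1+\beta\le 2$. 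Substituting these into the bound of Theorem~\ref{thm:epr-cvx} term by term gives the stated inequality. All the replacements are upper bounds, so the direction of $\lesssim$ is preserved.
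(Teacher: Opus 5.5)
Your proposal is correct and matches the paper's proof. Both substitute $\eta=\beta\gamma$ into Theorem~\ref{thm:epr-cvx}, reduce $C$ and $D$ up to constants to $\frac{\gamma^2}{(1-\beta)^2}\big(\frac{1}{1-\beta}+\frac{t}{n}\big)$ and $\frac{\gamma}{(1-\beta)^2}$, and check that the stated step-size condition is exactly \eqref{eq:epr-lr1} and implies \eqref{eq:epr-lr2} through $3(1+\beta^3)\le 3+6\beta+5\beta^2-2\beta^3$.
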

Note that the EPR bound established in Theorem \ref{thm:epr-nesterov} has the same form as the one in Theorem \ref{thm:epr-polyak}, except for $\eta$ replaced by $\gamma$. Therefore, by imitating the proof of Corollary \ref{co:epr-polyak}, we can similarly obtain the following results for \eqref{alg:sgd-nesterov}. We omit its proof for brevity.
\begin{corollary}[Excess population risk of \eqref{alg:sgd-nesterov}]\label{co:epr-nesterov}
Consider the same setting as in Theorem \ref{thm:epr-nesterov} and $\beta\in (0,1)$.
\begin{itemize}
    \item[1.] If $L(\bw^*) \ge \frac{1}{n}$, we can take $t\asymp n/(1-\beta)$, $\rho = \sqrt{n L(\bw^*)}$, and $\gamma\asymp\frac{(1-\beta)^2}{\sqrt{n L(\bw^*)}}$ to derive
    \[
    \ebb[L(\bar{\bw}_t)-L(\bw^*)]\lesssim \sqrt{\frac{L(\bw^*)}{n}}.
    \]
    \item[2.] If $L(\bw^*) < \frac{1}{n}$, we can take $t \asymp n/(1-\beta)$, $\rho = 1$, and $\gamma\asymp (1-\beta)^2$ to derive
    \[
    \ebb[L(\bar{\bw}_t)-L(\bw^*)]\lesssim \frac{1}{n}.
    \]
\end{itemize}
\end{corollary}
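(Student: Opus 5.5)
The plan is to copy the proof of Corollary~\ref{co:epr-polyak}, since Theorem~\ref{thm:epr-nesterov} has the same form as Theorem~\ref{thm:epr-polyak} with $\eta$ replaced by $\gamma$. Fix $\beta\in(0,1)$. I will first check that the prescribed $\gamma$ is admissible. The condition $\gamma\le 2(1-\beta)^2/(\alpha(3+6\beta+5\beta^2-2\beta^3))$ holds for $\gamma=c(1-\beta)^2/\sqrt{nL(\bw^*)}$ in case 1 (where $nL(\bw^*)\ge1$) and for $\gamma=c(1-\beta)^2$ in case 2, provided the constant $c\lesssim 1/\alpha$ is small enough. This uses only the fact that $3+6\beta+5\beta^2-2\beta^3\le 12$ on $(0,1)$.

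Next I substitute $t\asymp n/(1-\beta)$ into the bound of Theorem~\ref{thm:epr-nesterov} and check each term. Write $L^*=L(\bw^*)$.
\begin{itemize}
\item The optimization term becomes $\frac{1-\beta}{\gamma t}\asymp\frac{(1-\beta)^2}{\gamma n}$.
\item The bias term is $\frac{\gamma}{(1-\beta)^2}L^*$.
\item The factor $\frac{1}{1-\beta}+\frac tn$ is $\asymp\frac{1}{1-\beta}$.
\item The last bracket is $tL^*+\frac{1-\beta}{\gamma}+\frac{\gamma t}{(1-\beta)^2}L^*$.
\end{itemize}

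In case 1, put $\gamma=(1-\beta)^2/\sqrt{nL^*}$. Then:
\begin{itemize}
\item $\frac{(1-\beta)^2}{\gamma n}=\sqrt{L^*/n}$.
\item $\frac{\gamma L^*}{(1-\beta)^2}=\sqrt{L^*/n}$.
\item The $\frac1\rho$ term, with $\rho=\sqrt{nL^*}$, is $\frac{1}{\rho}\big(L^*+O(\sqrt{L^*/n})\big)\lesssim\sqrt{L^*/n}$, using $L^*\ge 1/n$.
\item In the last term, $\frac{1+\rho}{n}\lesssim\frac{\sqrt{nL^*}}{n}$ and $\frac{\gamma^2}{(1-\beta)^3}=\frac{1-\beta}{nL^*}$. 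The bracket is $\lesssim \frac{nL^*}{1-\beta}+\frac{\sqrt{nL^*}}{1-\beta}+\frac{\sqrt{nL^*}}{(1-\beta)^3}\cdot\frac{(1-\beta)^2}{1}$, which is $\lesssim \frac{nL^*}{1-\beta}$ since $nL^*\ge1$. The product is therefore $\lesssim \frac{\sqrt{nL^*}}{n}\cdot\frac{1-\beta}{nL^*}\cdot\frac{nL^*}{1-\beta}=\sqrt{L^*/n}$.
\end{itemize}

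In case 2, put $\gamma=(1-\beta)^2$ and $\rho=1$. Then:
\begin{itemize}
\item The optimization term is $\asymp 1/n$.
\item The bias term is $L^*<1/n$.
\item The $\frac1\rho$ term is $\lesssim L^*+1/n\lesssim 1/n$.
\item The last term is $\frac{2}{n}(1-\beta)\big(\frac{nL^*}{1-\beta}+\frac{1}{1-\beta}+\frac{nL^*}{1-\beta}\big)\lesssim\frac1n$, using $nL^*<1$.
\end{itemize}

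The main point needing care is the $(1-\beta)$ bookkeeping in the last term. The powers $(1-\beta)^{-3}$ from $C$ must be cancelled exactly by $\gamma^2\propto(1-\beta)^4$ and $t\propto(1-\beta)^{-1}$. I will verify this factor by factor, keeping in mind that the constants are allowed to depend on $\alpha$ but not on $\beta$. Combining the terms gives both claims.
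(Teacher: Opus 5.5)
Your proposal is correct and follows the paper's route: the paper omits this proof, saying it is obtained by repeating the proof of Corollary~\ref{co:epr-polyak} with $\eta$ replaced by $\gamma$, and your term-by-term bookkeeping matches that argument in both cases. You also check that the prescribed $\gamma$ satisfies the step-size condition of Theorem~\ref{thm:epr-nesterov}, using $3+6\beta+5\beta^2-2\beta^3\le 12$ and a constant $c\lesssim 1/\alpha$; this is a detail the paper leaves implicit in its use of $\asymp$.
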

\begin{remark}\normalfont
    Corollaries~\ref{co:epr-polyak} and~\ref{co:epr-nesterov} specify conditions for selecting the parameters to obtain EPR bounds of order $O(\sqrt{L(\bw^*)/n})$ in the convex setting for both \eqref{alg:sgd-polyak} and \eqref{alg:sgd-nesterov}.
    In particular, the resulting $O(1/\sqrt{n})$ bound is minimax optimal in the general case \citep{agarwal2009information}. Thus, our stability and optimization analysis yield optimal EPR bounds.
    Moreover, under the low-noise condition $L(\bw^*)<1/n$, we obtain an improved bound of order $O(1/n)$. This implies that when there exists a model with vanishing risk, our analysis leverages this property to demonstrate enhanced generalization performance. This perspective is closely aligned with the optimistic stability analysis developed for SGD in~\citet{lei2020fine}.
\end{remark}

\section{Technical Proofs in Section \ref{sec:gen-opt} and \ref{sec:epr}}\label{sec:proofs}
\subsection{Proofs of Stability Bounds}\label{sec:proof-stab}
In this subsection, we provide the omitted proofs of the results established in Section \ref{sec:stab}. We start by proving Lemma \ref{lem:ip-w-m}.
\begin{proof}[of Lemma \ref{lem:ip-w-m}]
Firstly, note that $\bfm_1=\bg_1$ and $\bfm_1^{(i)}=\bg_1^{(i)}$ due to $\bfm_0=\bfm_0^{(i)}=\mathbf{0}$. Hence Eq.~\eqref{eq:m-square} and \eqref{eq:cvx-stab-5} hold for $t=1$. Now we assume $t\ge 2$ in what follows.
Note from the update rule Eq.~\eqref{eq:m-update} that for any $\delta>0$, we have
\[
\|\bfm_t-\bfm_t^{(i)}\|^2 \leq \beta^2(1+\delta)\|\bfm_{t-1}-\bfm_{t-1}^{(i)}\|^2+(1+1/\delta)\|\bg_t-\bg_t^{(i)}\|^2,
\]
where we have used $(a+b)^2\leq (1+\delta)a^2+(1+1/\delta)b^2$.
Applying the above inequality recursively and noting that $\bfm_0=\bfm_0^{(i)}$ gives the bound Eq.~\eqref{eq:m-square}.

Next, we estimate the inner product $\langle \bw_t-\bw_t^{(i)}, \bfm_t-\bfm_t^{(i)}\rangle$. From Eq.~\eqref{eq:m-update} and Eq.~\eqref{eq:cvx-stab-n1}, together with the definitions of $G_t$ and $G_t^{(i)}$ in Eq.~\eqref{eq:Gt-def}, we obtain
\begin{align*}
&\langle \bw_t-\bw_t^{(i)}, \bfm_t-\bfm_t^{(i)}\rangle\\
=&\beta\langle \bw_t-\bw_t^{(i)}, \bfm_{t-1}-\bfm_{t-1}^{(i)}\rangle+\langle \bw_t-\bw_t^{(i)}, \bg_t-\bg_t^{(i)}\rangle \\
= & \beta\langle G_{t-1}-G_{t-1}^{(i)}, \bfm_{t-1}-\bfm_{t-1}^{(i)}\rangle-\beta \eta\|\bfm_{t-1}-\bfm_{t-1}^{(i)}\|^2+\langle \bw_t-\bw_t^{(i)}, \bg_t-\bg_t^{(i)}\rangle \\
= & \beta\langle \bw_{t-1}-\bw_{t-1}^{(i)}, \bfm_{t-1}-\bfm_{t-1}^{(i)}\rangle-\beta \gamma\langle \bg_{t-1}-\bg_{t-1}^{(i)}, \bfm_{t-1}-\bfm_{t-1}^{(i)}\rangle\\
&-\beta \eta\|\bfm_{t-1}-\bfm_{t-1}^{(i)}\|^2+\langle \bw_t-\bw_t^{(i)}, \bg_t-\bg_t^{(i)}\rangle.
\end{align*}
Applying Young's inequality to the term $-\beta \gamma\langle \bg_{t-1}-\bg_{t-1}^{(i)}, \bfm_{t-1}-\bfm_{t-1}^{(i)}\rangle$ further gives
\begin{multline*}
    \langle \bw_t-\bw_t^{(i)}, \bfm_t-\bfm_t^{(i)}\rangle
    \ge  \beta\langle \bw_{t-1}-\bw_{t-1}^{(i)}, \bfm_{t-1}-\bfm_{t-1}^{(i)}\rangle\\-\frac{\beta\gamma}{2}\|\bg_{t-1}-\bg_{t-1}^{(i)}\|^2-\Big(\frac{\beta\gamma}{2}+\beta \eta\Big)\|\bfm_{t-1}-\bfm_{t-1}^{(i)}\|^2+\langle \bw_t-\bw_t^{(i)}, \bg_t-\bg_t^{(i)}\rangle.
\end{multline*}
We apply the above inequality recursively and use $\bw_1-\bw_1^{(i)}=0$ to get
\begin{multline}
    \langle \bw_t-\bw_t^{(i)}, \bfm_t-\bfm_t^{(i)}\rangle \geq -\frac{\beta\gamma}{2} \sum_{k=1}^{t-1} \beta^{t-1-k}\|\bg_k-\bg_k^{(i)}\|^2\\
    -\Big(\frac{\beta\gamma}{2}+\beta \eta\Big) \sum_{k=1}^{t-1}\beta^{t-1-k}\|\bfm_k-\bfm_k^{(i)}\|^2+\sum_{k=1}^t \beta^{t-k}\langle \bw_k-\bw_k^{(i)}, \bg_k-\bg_k^{(i)}\rangle.\label{eq:cvx-stab-4}
\end{multline}
Applying the bound Eq.~\eqref{eq:m-square} for each $\|\bfm_k-\bfm_k^{(i)}\|^2$ in Eq.~\eqref{eq:cvx-stab-4} leads to
\begin{align*}
\langle \bw_t-\bw_t^{(i)}, \bfm_t-\bfm_t^{(i)}\rangle \ge & -\Big(\frac{\gamma}{2}+\eta\Big) \sum_{k=1}^{t-1} \beta^{t-k}(1+1 / \delta) \sum_{l=1}^k((1+\delta) \beta^2)^{k-l}\|\bg_l-\bg_l^{(i)}\|^2 \notag\\
& -\frac{\gamma}{2} \sum_{k=1}^{t-1} \beta^{t-k}\|\bg_k-\bg_k^{(i)}\|^2+\sum_{k=1}^t \beta^{t-k}\langle \bw_k-\bw_k^{(i)}, \bg_k-\bg_k^{(i)}\rangle \notag\\
= & -(1+1 / \delta)\Big(\frac{\gamma}{2}+\eta\Big) \sum_{l=1}^{t-1}\Big(\sum_{k=l}^{t-1} \beta^{t-k}((1+\delta) \beta^2)^{k-l}\Big)\|\bg_l-\bg_l^{(i)}\|^2 \notag\\
&-\frac{\gamma}{2} \sum_{k=1}^{t-1} \beta^{t-k}\|\bg_k-\bg_k^{(i)}\|^2 + \sum_{k=1}^t \beta^{t-k}\langle \bw_k-\bw_k^{(i)}, \bg_k-\bg_k^{(i)}\rangle \notag\\
= & -(1+1 / \delta)\Big(\frac{\gamma}{2}+\eta\Big) \sum_{k=1}^{t-1}\Big(\sum_{l=k}^{t-1} \beta^{t-l}((1+\delta) \beta^2)^{l-k}\Big)\|\bg_k-\bg_k^{(i)}\|^2 \notag\\
&-\frac{\gamma}{2} \sum_{k=1}^{t-1} \beta^{t-k}\|\bg_k-\bg_k^{(i)}\|^2 + \sum_{k=1}^t \beta^{t-k}\langle \bw_k-\bw_k^{(i)}, \bg_k-\bg_k^{(i)}\rangle,
\end{align*}
where the first equality follows by exchanging the order of summations, and the second one holds from exchanging the notations $k$ and $l$ in the double summation term. The proof is now complete.
\end{proof}

Then we proceed to provide the proof of Theorem \ref{thm:stab-cvx}, which establishes on-average model stability bounds for the generalized SGDM.
\begin{proof}[of Theorem \ref{thm:stab-cvx}]
We start by noting from Eq.~\eqref{eq:cvx-stab-n1} that
\begin{equation}\label{eq:cvx-stab-1}
\|\bw_{t+1}-\bw_{t+1}^{(i)}\|^2=\|G_t-G_t^{(i)}\|^2+\eta^2\|\bfm_t-\bfm_t^{(i)}\|^2-2 \eta\langle G_t-G_t^{(i)}, \bfm_t-\bfm_t^{(i)}\rangle.
\end{equation}
From the definition of $G_t$ and $G_t^{(i)}$ in Eq.~\eqref{eq:Gt-def}, we know that
\begin{equation}\label{eq:cvx-stab-G}
    \|G_t-G_t^{(i)}\|^2=\|\bw_t-\bw_t^{(i)}\|^2+\gamma^2\|\bg_t-\bg_t^{(i)}\|^2-2 \gamma\langle \bw_t-\bw_t^{(i)}, \bg_t-\bg_t^{(i)}\rangle
\end{equation}
and
\begin{align}
  &-2\eta\langle G_t-G_t^{(i)}, \bfm_t-\bfm_t^{(i)}\rangle \notag\\
  = &-2 \eta\langle \bw_t-\bw_t^{(i)}, \bfm_t-\bfm_t^{(i)}\rangle+2 \gamma \eta\langle \bg_t-\bg_t^{(i)}, \bfm_t-\bfm_t^{(i)}\rangle\notag\\
  \leq & -2 \eta\langle \bw_t-\bw_t^{(i)}, \bfm_t-\bfm_t^{(i)}\rangle + \eta^2\|\bfm_t-\bfm_t^{(i)}\|^2+\gamma^2\|\bg_t-\bg_t^{(i)}\|^2.\label{eq:cvx-stab-2}
\end{align}
Plugging Eq.~\eqref{eq:cvx-stab-G} and Eq.~\eqref{eq:cvx-stab-2} into Eq.~\eqref{eq:cvx-stab-1} gives
\begin{multline}\label{eq:cvx-stab-3}
    \|\bw_{t+1}-\bw_{t+1}^{(i)}\|^2\leq \|\bw_t-\bw_t^{(i)}\|^2+2\eta^2\|\bfm_t-\bfm_t^{(i)}\|^2+2\gamma^2\|\bg_t-\bg_t^{(i)}\|^2\\-2 \gamma\langle \bw_t-\bw_t^{(i)}, \bg_t-\bg_t^{(i)}\rangle-2 \eta\langle \bw_t-\bw_t^{(i)}, \bfm_t-\bfm_t^{(i)}\rangle.
\end{multline}

We first consider the simple case $\beta=0$, which implies that $\bfm_t=\bg_t$ and $\bfm_t^{(i)}=\bg_t^{(i)}$. Then, it follows from Eq.~\eqref{eq:cvx-stab-3} that
\begin{equation}\label{eq:stab-sgd-1}
    \|\bw_{t+1}-\bw_{t+1}^{(i)}\|^2\leq \|\bw_t-\bw_t^{(i)}\|^2+2(\gamma^2+\eta^2)\|\bg_t-\bg_t^{(i)}\|^2-2 (\gamma+\eta)\langle \bw_t-\bw_t^{(i)}, \bg_t-\bg_t^{(i)}\rangle.
\end{equation}
Next, we bound $\langle \bw_t-\bw_t^{(i)}, \bg_t-\bg_t^{(i)}\rangle$. Recall that $\bg_t := \nabla\ell(\bw_t;\bz_{i_t})$ and $\bg_t^{(i)} := \nabla\ell(\bw_t^{(i)};\bz_{i_t}^{(i)})$, where $\bz_{i_t}^{(i)}$ denotes the $i_t$-th element from $S^{(i)}$. Let us first consider the case $i_t\neq i$ (with probability $1-1/n$), which implies $\bz_{i_t}^{(i)}=\bz_{i_t}$ and hence $\bg_t^{(i)}=\nabla \ell(\bw_t^{(i)};\bz_{i_t})$. Then by co-coercivity of $\ell$ (Lemma~\ref{lem:coercivity}), we have
\[
\langle \bw_t-\bw_t^{(i)}, \bg_t-\bg_t^{(i)}\rangle \ge \frac{1}{\alpha}\|\bg_t-\bg_t^{(i)}\|^2.
\]
Otherwise if $i_t=i$ (with probability $1/n$), then $\bz_{i_t}=\bz_i$ and $\bz_{i_t}^{(i)}=\bz_i^{\prime}$. By Young's inequality, for any $p>0$ we have (note $\gamma+\eta>0$)
\[
-2 \langle \bw_t-\bw_t^{(i)}, \bg_t-\bg_t^{(i)}\rangle \leq \frac{p}{\gamma+\eta}\|\bw_t-\bw_t^{(i)}\|^2+\frac{\gamma+\eta}{p}\|\nabla\ell(\bw_t; \bz_i)-\nabla\ell(\bw_t^{(i)} ; \bz_i')\|^2.
\]
Combining the above two cases, we obtain
\begin{align}
-2 \ebb[\langle \bw_t-\bw_t^{(i)}, \bg_t-\bg_t^{(i)}\rangle]= & -2 \ebb\big[\langle \bw_t-\bw_t^{(i)}, \bg_t-\bg_t^{(i)}\rangle|i_t\neq i\big]\text{Pr}\{i_t\neq i\}\notag\\
&-2 \ebb\big[\langle \bw_t-\bw_t^{(i)}, \bg_t-\bg_t^{(i)}\rangle|i_t= i\big]\text{Pr}\{i_t= i\}\notag\\
\leq & -\frac{2}{\alpha}\ebb\big[\|\bg_t-\bg_t^{(i)}\|^2|i_t\neq i\big]\frac{n-1}{n}+\frac{p}{n(\gamma+\eta)}\ebb\big[\|\bw_t-\bw_t^{(i)}\|^2\big]\notag\\
&+\frac{\gamma+\eta}{np}\ebb\big[\|\nabla\ell(\bw_t; \bz_i)-\nabla\ell(\bw_t^{(i)} ; \bz_i')\|^2\big].\label{eq:cvx-stab-n2}
\end{align}
Similarly, we also know that
\begin{equation}\label{eq:cvx-stab-n3}
  \ebb\big[\|\bg_t-\bg_t^{(i)}\|^2\big]=\frac{n-1}{n}\ebb\big[\|\bg_t-\bg_t^{(i)}\|^2|i_t\neq i\big]+\frac{1}{n}\ebb\big[\|\nabla\ell(\bw_t; \bz_i)-\nabla\ell(\bw_t^{(i)} ; \bz_i')\|^2\big].
\end{equation}
Taking expectations on both sides of Eq.~\eqref{eq:stab-sgd-1} and applying Eq.~\eqref{eq:cvx-stab-n2} and \eqref{eq:cvx-stab-n3} leads to
\begin{multline}\label{eq:stab-sgd-2}
    \ebb\big[\|\bw_{t+1}-\bw_{t+1}^{(i)}\|^2\big]\leq \Big(\frac{(\gamma+\eta)^2}{np}+\frac{2(\gamma^2+\eta^2)}{n}\Big)\ebb\big[\|\nabla\ell(\bw_t; \bz_i)-\nabla\ell(\bw_t^{(i)} ; \bz_i')\|^2\big]\\+2\Big(\gamma^2+\eta^2-\frac{\gamma+\eta}{\alpha}\Big)\ebb\big[\|\bg_t-\bg_t^{(i)}\|^2|i_t\neq i\big]\cdot\frac{n-1}{n}+\Big(1+\frac{p}{n}\Big)\ebb\big[\|\bw_t-\bw_t^{(i)}\|^2\big].
\end{multline}
Plugging $\beta=0$ in the step size condition Eq.~\eqref{eq:stab-lr} gives
\[
3\eta+\frac{3}{2}\gamma\leq \frac{1}{\alpha}
\Longrightarrow \eta-1/\alpha\leq 0\;\;\text{and}\;\;\gamma\leq 1/\alpha,
\]
which further implies that
\begin{equation}\label{eq:stab-sgd-3}
    \gamma^2 + \eta^2- \frac{\gamma+\eta}{\alpha} =\gamma(\gamma-1/\alpha)+\eta (\eta - 1/\alpha)\leq 0.
\end{equation}
By the elementary inequality $(a+b)^2\leq 2a^2+2b^2$ and the self-bounding property of $\ell$ (Lemma~\ref{lem:self-bounding}), we obtain
\begin{align}
    \e[\|\nabla \ell(\bw_t; \bz_{i})-\nabla \ell(\bw_t^{(i)}; \bz_i^{\prime})\|^2] &\leq \e\big[2\|\nabla \ell(\bw_t; \bz_i)\|^2+2 \| \nabla\ell(\bw_t^{(i)}; \bz_i^{\prime}) \|^2\big]\notag\\
    &\leq 4 \alpha\e[\ell(\bw_t; \bz_i)+\ell(\bw_t^{(i)}; \bz_i^{\prime})]\notag\\
    &=8\alpha\ebb[\ell(\bw_t;\bz_{i})]= 8\alpha\ebb[L_S(\bw_t)]\label{eq:cvx-stab-loss-w},
\end{align}
where the last equality holds due to the symmetry of the algorithm
\[
\ebb[\ell(\bw_t;\bz_{i})] = \frac{1}{n}\sum_{j=1}^n \ebb[\ell(\bw_t;\bz_{j})] = \ebb[L_S(\bw_t)].
\]
Therefore, applying Eq.~\eqref{eq:stab-sgd-3} and Eq.~\eqref{eq:cvx-stab-loss-w} to Eq.~\eqref{eq:stab-sgd-2} leads to
\[
    \ebb\big[\|\bw_{t+1}-\bw_{t+1}^{(i)}\|^2\big]\leq 8\alpha\Big(\frac{(\gamma+\eta)^2}{np}+\frac{2(\gamma^2+\eta^2)}{n}\Big)\ebb[L_S(\bw_t)]+\Big(1+\frac{p}{n}\Big)\ebb\big[\|\bw_t-\bw_t^{(i)}\|^2\big].
\]
Applying the above inequality recursively and noting that $\bw_1=\bw_1^{(i)}$ further gives
\begin{equation}\label{eq:stab-sgd-4}
    \ebb\big[\|\bw_{t+1}-\bw_{t+1}^{(i)}\|^2\big]\leq 8\alpha\Big(\frac{(\gamma+\eta)^2}{np}+\frac{2(\gamma^2+\eta^2)}{n}\Big)\sum_{k=1}^t\Big(1+\frac{p}{n}\Big)^{t-k}\ebb[L_S(\bw_k)].
\end{equation}
Let $p=n/t$, then we have
\[
\Big(1+\frac{p}{n}\Big)^{t-k}\leq\Big(1+\frac{p}{n}\Big)^t = \Big(1+\frac{1}{t}\Big)^t\leq e.
\]
Hence it follows from Eq.~\eqref{eq:stab-sgd-4} that
\[
    \ebb\big[\|\bw_{t+1}-\bw_{t+1}^{(i)}\|^2\big]\leq \Big(\frac{t(\gamma+\eta)^2}{n}+2(\gamma^2+\eta^2)\Big)\frac{8\alpha e}{n}\sum_{k=1}^t\ebb[L_S(\bw_k)],
\]
which is exactly the stability bound Eq.~\eqref{eq:cvx-stab-sgdm} with $\beta=0$.

We proceed to consider $0<\beta<1$ and start from Eq.~\eqref{eq:cvx-stab-3}.
For notational convenience, we define
\[
C_{t, k}=\frac{\gamma}{2} \beta^{t-k}+(1+1 / \delta)\Big(\frac{\gamma}{2}+\eta\Big) \sum_{l=k}^{t-1} \beta^{t-l}((1+\delta) \beta^2)^{l-k},\quad \forall 1\leq k\leq t-1.
\]
Then it follows from Eq.~\eqref{eq:cvx-stab-5} in Lemma \ref{lem:ip-w-m} that
\begin{equation}\label{eq:cvx-stab-6}
    \langle \bw_t-\bw_t^{(i)}, \bfm_t-\bfm_t^{(i)}\rangle \ge -\sum_{k=1}^{t-1} C_{t, k}\|\bg_k-\bg_k^{(i)}\|^2+\sum_{k=1}^t \beta^{t-k}\langle \bw_k-\bw_k^{(i)}, \bg_k-\bg_k^{(i)}\rangle.
\end{equation}
Plugging Eq.~\eqref{eq:m-square} and Eq.~\eqref{eq:cvx-stab-6} into Eq.~\eqref{eq:cvx-stab-3}, we obtain
\begin{multline}\label{eq:cvx-stab-7}
\|\bw_{t+1}-\bw_{t+1}^{(i)}\|^2 \leq \|\bw_t-\bw_t^{(i)}\|^2+2(1+1 / \delta) \eta^2 \sum_{k=1}^t((1+\delta) \beta^2)^{t-k}\|\bg_k-\bg_k^{(i)}\|^2+2\gamma^2\|\bg_t-\bg_t^{(i)}\|^2 \\
+2 \eta \sum_{k=1}^{t-1} C_{t, k}\|\bg_k-\bg_k^{(i)}\|^2-2 \eta \sum_{k=1}^t \beta^{t-k}\langle \bw_k-\bw_k^{(i)}, \bg_k-\bg_k^{(i)}\rangle-2 \gamma\langle \bw_t-\bw_t^{(i)}, \bg_t-\bg_t^{(i)}\rangle.
\end{multline}
For notational convenience, define two sequences $\{A_{t, k}\}$ and $\{B_{t, k}\}$ via
\begin{equation*}
A_{t,k} = \begin{cases}
(1+1/\delta)\eta^2 + \gamma^2, & \text{if } k = t \\
(1+1/\delta)\eta^2((1+\delta)\beta^2)^{t-k} + \eta C_{t,k}, & \text{if }k \leq t-1
\end{cases}, \quad
B_{t,k} = \begin{cases}
\eta + \gamma, & \text{if }k = t, \\
\eta \beta^{t-k}, & \text{if }k \leq t-1.
\end{cases}
\end{equation*}
Then we can rewrite Eq.~\eqref{eq:cvx-stab-7} as
\begin{equation}\label{eq:cvx-stab-8}
    \|\bw_{t+1}-\bw_{t+1}^{(i)}\|^2 \leq \|\bw_t-\bw_t^{(i)}\|^2+2\sum_{k=1}^t A_{t, k} \|\bg_k-\bg_k^{(i)}\|^2 - 2\sum_{k=1}^t B_{t, k}\langle \bw_k-\bw_k^{(i)}, \bg_k-\bg_k^{(i)}\rangle.
\end{equation}
Due to the independence among the indexes $\{i_k\}$, we know that Eq.~\eqref{eq:cvx-stab-n2} and Eq.~\eqref{eq:cvx-stab-n3} hold with $t$ replaced by any $k\in [t]$. Applying the resulting inequalities on Eq.~\eqref{eq:cvx-stab-8} gives
\begin{align}
\ebb[\|\bw_{t+1}-\bw_{t+1}^{(i)}\|^2] \leq &\ebb[\|\bw_t-\bw_t^{(i)}\|^2]+2 \sum_{k=1}^t(A_{t, k}-B_{t, k} / \alpha) \ebb[\|\bg_k-\bg_k^{(i)}\|^2 \mid i_k \neq i]\cdot \frac{n-1}{n} \notag\\
&+\frac{1}{n} \sum_{k=1}^t(2 A_{j,k}+(\gamma+\eta) B_{j,k}/p) \ebb[\|\nabla\ell(\bw_k; \bz_i)-\nabla\ell(\bw_k^{(i)} ; \bz_i')\|^2]\notag\\
&+\frac{p}{n(\gamma+\eta)} \sum_{k=1}^t B_{t, k} \ebb[\|\bw_k-\bw_k^{(i)}\|^2]\label{eq:cvx-stab-9}.
\end{align}
Next we estimate $A_{t,k}$ for $k\leq t-1$. To this end, note from the definition of $C_{t,k}$ that
\begin{align*}
C_{t,k} & = \frac{\gamma}{2} \beta^{t-k}+(1+1 / \delta)\Big(\frac{\gamma}{2}+\eta\Big) \sum_{l=k}^{t-1} \beta^{t-l}((1+\delta) \beta^2)^{l-k}\\
& =\frac{\gamma}{2} \beta^{t-k}+(1+1 / \delta)\Big(\frac{\gamma}{2}+\eta\Big) \sum_{l=k}^{t-1} (1+\delta)^{l-k} \beta^{t+l-2 k}\\
& =\frac{\gamma}{2} \beta^{t-k}+(1+1 / \delta)\Big(\frac{\gamma}{2}+\eta\Big)(1+\delta)^{-k} \beta^{t-2 k} \sum_{l=k}^{t-1}((1+\delta) \beta)^l.
\end{align*}
Let $\delta:=\frac{1}{2}\big(\frac{1}{\beta}-1\big)>0$, which implies that
\begin{equation}\label{eq:cvx-stab-n4}
(1+\delta)\beta=(\beta +1)/2<1\quad\text{and}\quad 1+\frac{1}{\delta}=1+\frac{2\beta}{1-\beta}=\frac{1+\beta}{1-\beta}.
\end{equation}
Hence, we can further obtain
\begin{align}
C_{t, k} & \leq \frac{\gamma}{2} \beta^{t-k}+(1+1 / \delta)\Big(\frac{\gamma}{2}+\eta\Big)(1+\delta)^{-k} \beta^{t-2 k} \frac{((1+\delta) \beta)^k}{1-(1+\delta) \beta}\notag\\
&=\frac{\gamma}{2} \beta^{t-k}+(1+1 / \delta)\Big(\frac{\gamma}{2}+\eta\Big)\frac{\beta^{t-k}}{1-(1+\delta)\beta} \notag\\
&= \Big(\frac{\gamma}{2}+\frac{(1+1 / \delta)(\gamma/2+\eta)}{1-(1+\delta)\beta}\Big)\beta^{t-k}\notag\\
&= \Big(\frac{\gamma}{2}+\frac{(1+\beta)(\gamma+2\eta)}{(1-\beta)^2}\Big)\beta^{t-k}\label{eq:Ck-bound},
\end{align}
where we apply Eq.~\eqref{eq:cvx-stab-n4} to get the last equality (note $1-(1+\delta)\beta=\frac{1-\beta}{2}$).
Then it follows from the definition of $A_{t,k}$, Eq.~\eqref{eq:cvx-stab-n4}, and Eq.~\eqref{eq:Ck-bound} that for any $k\leq t-1$
\begin{align}
    A_{t,k} &=(1+1/\delta)\eta^2((1+\delta)\beta^2)^{t-k} + \eta C_{t,k}
    \leq \frac{1+\beta}{1-\beta}\eta^2\beta^{t-k}+\eta C_{t,k}\notag\\
    &\leq \Big(\frac{1+\beta}{1-\beta}\eta+\frac{\gamma}{2}+\frac{(1+\beta)(\gamma+2\eta)}{(1-\beta)^2}\Big)\eta\beta^{t-k}\notag\\
    &= \Big[\Big(\frac{1+\beta}{1-\beta}+\frac{2(1+\beta)}{(1-\beta)^2}\Big)\eta+\Big(\frac{1}{2}+\frac{1+\beta}{(1-\beta)^2}\Big)\gamma\Big]\eta\beta^{t-k}\notag\\
    &= \Big[\frac{(1+\beta)(3-\beta)}{(1-\beta)^2}\eta+\frac{\beta^2+3}{2(1-\beta)^2}\gamma\Big]\eta\beta^{t-k},\label{eq:Ak-bound}
\end{align}
where the first inequality holds by noting that $(1+\delta)\beta^2\leq\beta$.
Then it follows from the step size condition Eq.~\eqref{eq:stab-lr} and Eq.~\eqref{eq:Ak-bound} that
\begin{equation}\label{eq:cvx-A-neg-1}
    A_{t, k}-B_{t, k} / \alpha\leq 0,\quad \forall k\leq t-1.
\end{equation}
Additionally, since $\gamma, \eta\ge 0$, it follows from Eq.~\eqref{eq:stab-lr} that
\[
\frac{(1+\beta)(3-\beta)}{(1-\beta)^2}\eta\leq \frac{1}{\alpha},\quad \frac{\beta^2+3}{2(1-\beta)^2}\gamma\leq \frac{1}{\alpha},
\]
which together with $\beta\in(0,1)$ further implies
\begin{align*}
    &\frac{1+\beta}{1-\beta}\eta^2\leq\frac{1-\beta}{(3-\beta)\alpha}\eta\leq\frac{\eta}{2\alpha}\leq \frac{\eta}{\alpha},\\
    &\gamma^2\leq\frac{2(1-\beta)^2}{(\beta^2+3)\alpha}\gamma\leq \frac{2\gamma}{3\alpha}\leq\frac{\gamma}{\alpha}.
\end{align*}
Using the above two inequalities, we obtain from Eq.~\eqref{eq:cvx-stab-n4} and the definition of $A_{t,t}$ and $B_{t,t}$ that
\[
A_{t, t}-B_{t, t} / \alpha =  \frac{1+\beta}{1-\beta}\eta^2 + \gamma^2 - \frac{\eta+\gamma}{\alpha}\leq 0,
\]
which together with Eq.~\eqref{eq:cvx-A-neg-1} gives
\begin{equation}\label{eq:cvx-A-neg}
    A_{t, k}-B_{t, k} / \alpha\leq 0,\quad \forall k\in [t].
\end{equation}
Therefore, applying Eq.~\eqref{eq:cvx-A-neg} and Eq.~\eqref{eq:cvx-stab-loss-w} with $t$ replaced by $k\in[t]$ to Eq.~\eqref{eq:cvx-stab-9} leads to
\begin{multline}\label{eq:cvx-stab-10}
\ebb[\|\bw_{t+1}-\bw_{t+1}^{(i)}\|^2] \leq \ebb[\|\bw_t-\bw_t^{(i)}\|^2]+\frac{8\alpha}{n} \sum_{k=1}^t(2 A_{t, k}+(\gamma+\eta) B_{t, k} / p) \ebb[L_S(\bw_k)]\\+\frac{p}{n(\gamma+\eta)} \sum_{k=1}^t B_{t, k} \ebb[\|\bw_k-\bw_k^{(i)}\|^2].
\end{multline}
Define
\[
\xi_t :=\max _{k \in[t]} \ebb[\|\bw_k-\bw_k^{(i)}\|^2].
\]
Then it follows from Eq.~\eqref{eq:cvx-stab-10} that
\[
\ebb[\|\bw_{t+1}-\bw_{t+1}^{(i)}\|^2] \leq \Big(1+\frac{p}{n(\gamma+\eta)} \sum_{k=1}^t B_{t, k}\Big)\xi_t+\frac{8\alpha}{n} \sum_{k=1}^t(2 A_{t, k}+(\gamma+\eta) B_{t, k} / p) \ebb[L_S(\bw_k)].
\]
Moreover, note that $\sum_{k=1}^t B_{t,k}=\eta+\gamma+\eta\sum_{k=1}^{t-1}\beta^{t-k}$ and $\xi_{t+1}=\max\Big\{\xi_t,~\ebb[\|\bw_{t+1}-\bw_{t+1}^{(i)}\|^2]\Big\}$, we further get
\begin{equation}\label{eq:cvx-stab-xi}
\xi_{t+1} \leq\Big(1+\frac{p}{n} +\frac{p\eta}{n(\gamma+\eta)}\sum_{k=1}^{t-1} \beta^{t-k}\Big) \xi_t+\frac{8\alpha}{n} \sum_{k=1}^t(2A_{t, k}+(\gamma+\eta) B_{t, k} / p) \ebb[L_S(\bw_k)].
\end{equation}
Note that $\sum_{k=1}^{t-1} \beta^{t-k}$ is an increasing function in terms of $t$ and $\xi_1 = 0$ due to $\bw_1 = \bw_1^{(i)}$, thus applying Eq.~\eqref{eq:cvx-stab-xi} recursively leads to
\begin{align}
\xi_{t+1} & \leq \frac{8 \alpha}{n} \sum_{j=1}^t\Big(1+\frac{p}{n} +\frac{p\eta}{n(\gamma+\eta)}\sum_{k'=1}^{t-1} \beta^{t-k'}\Big)^{t-j}\Big[\sum_{k=1}^j(2 A_{j,k}+(\gamma+\eta) B_{j,k}/p) \ebb[L_S(\bw_k)]\Big]\notag \\
& = \frac{8 \alpha}{n} \sum_{k=1}^t \sum_{j=k}^t\Big(1+\frac{p}{n} +\frac{p\eta}{n(\gamma+\eta)}\sum_{k'=1}^{t-1} \beta^{t-k'}\Big)^{t-j}(2 A_{j,k}+(\gamma+\eta) B_{j,k}/p)\ebb[L_S(\bw_k)],\label{eq:cvx-stab-11}
\end{align}
where the equality holds by exchanging the order of the two summations. Let $p=(1-\beta)n/t$, then we have
\[
\frac{p\eta}{n(\gamma+\eta)} \sum_{k'=1}^{t-1} \beta^{t-k'}\leq\frac{p}{n}\sum_{k'=1}^{t-1} \beta^{t-k'}
=\frac{1-\beta}{t} \sum_{j=1}^{t-1} \beta^j=\frac{1-\beta}{t} \cdot \frac{\beta(1-\beta^{t-1})}{1-\beta} \leq\frac{\beta}{t},
\]
which further implies
\[
\Big(1+\frac{p}{n} +\frac{p\eta}{n(\gamma+\eta)}\sum_{k'=1}^{t-1} \beta^{t-k'}\Big)^{t-j}\leq \Big(1+\frac{p}{n} +\frac{p\eta}{n(\gamma+\eta)}\sum_{k'=1}^{t-1} \beta^{t-k'}\Big)^t\leq \Big(1+\frac{1}{t}\Big)^t\leq e,~\forall j\leq t.
\]
Hence it follows from Eq.~\eqref{eq:cvx-stab-11} that
\begin{equation}\label{eq:cvx-stab-12}
    \xi_{t+1} \leq \frac{8 \alpha e}{n} \sum_{k=1}^t\Big(\sum_{j=k}^t(2 A_{j,k}+(\gamma+\eta) B_{j,k}/p)\Big) \ebb[L_S(\bw_k)].
\end{equation}
Additionally, for any $k\in [t]$, note from Eq.~\eqref{eq:cvx-stab-n4}, Eq.~\eqref{eq:Ak-bound}, and $p=(1-\beta)n/t$ that
\begin{align}
&\sum_{j=k}^t(2 A_{j,k}+(\gamma+\eta) B_{j,k}/p) =\frac{2(1+\beta)}{1-\beta}\eta^2+2\gamma^2+\frac{1}{p}(\gamma+\eta)^2+\sum_{j=k+1}^t\Big(2 A_{j,k}+\frac{\eta^2+\gamma\eta}{p} \beta^{j-k}\Big)\notag\\
\leq &\frac{2(1+\beta)}{1-\beta}\eta^2+2\gamma^2+\frac{1}{p}(\gamma+\eta)^2+ \Big(\frac{2(1+\beta)(3-\beta)}{(1-\beta)^2}\eta^2+\frac{\beta^2+3}{(1-\beta)^2}\gamma\eta+\frac{\eta^2+\gamma\eta}{p}\Big)\sum_{j=k+1}^t\beta^{j-k}\notag\\
\leq &\frac{2(1+\beta)}{1-\beta}\eta^2+2\gamma^2+\frac{1}{p}(\gamma^2+2\gamma\eta+\eta^2)+ \Big(\frac{2(1+\beta)(3-\beta)}{(1-\beta)^2}\eta^2+\frac{\beta^2+3}{(1-\beta)^2}\gamma\eta+\frac{\eta^2+\gamma\eta}{p}\Big)\frac{\beta}{1-\beta}\notag\\
= &\Big(\frac{2(1+\beta)}{1-\beta}+\frac{2\beta(1+\beta)(3-\beta)}{(1-\beta)^3}\Big)\eta^2+2\gamma^2+\frac{\beta(\beta^2+3)}{(1-\beta)^3}\gamma\eta+\frac{\eta^2}{(1-\beta)p}+\frac{(2-\beta)\gamma\eta}{(1-\beta)p}+\frac{\gamma^2}{p}\notag\\
= &\Big(\frac{2(1+\beta)}{1-\beta}+\frac{2\beta(1+\beta)(3-\beta)}{(1-\beta)^3}\Big)\eta^2+2\gamma^2+\frac{\beta(\beta^2+3)}{(1-\beta)^3}\gamma\eta+\frac{(\gamma+\eta)((1-\beta)\gamma+\eta)}{(1-\beta)p}\notag\\
= &\frac{2(1+\beta)^2}{(1-\beta)^3}\eta^2+2\gamma^2+\frac{\beta(\beta^2+3)}{(1-\beta)^3}\gamma\eta+\frac{(\gamma+\eta)((1-\beta)\gamma+\eta)t}{(1-\beta)^2 n}\label{eq:cvx-stab-coef},
\end{align}
where we have used $\sum_{j=k+1}^{t}\beta^{j-k}=\sum_{j=1}^{t-k}\beta^j\leq\frac{\beta}{1-\beta}$ to obtain the last inequality.
Applying Eq.~\eqref{eq:cvx-stab-coef} in Eq.~\eqref{eq:cvx-stab-12} further implies
\begin{align*}
&\ebb[\|\bw_{t+1}-\bw_{t+1}^{(i)}\|^2] \leq\xi_{t+1} \\
\leq &\Big(\frac{2(1+\beta)^2}{(1-\beta)^3}\eta^2+2\gamma^2+\frac{\beta(\beta^2+3)}{(1-\beta)^3}\gamma\eta+\frac{(\gamma+\eta)((1-\beta)\gamma+\eta)t}{(1-\beta)^2 n}\Big)\frac{8 \alpha e}{n} \sum_{k=1}^t \ebb[L_S(\bw_k)],
\end{align*}
which completes the proof.
\end{proof}

\subsection{Proofs of Optimization Error Bounds}\label{sec:proof-opt}
In this subsection, we provide omitted proofs for the results established in Section \ref{sec:opt}. We first prove the two auxiliary lemmas for the introduced sequence $\{\by_k\}$ defined in Eq.~\eqref{eq:y-def}.
\begin{proof}[of Lemma \ref{lem:y-gd}]
It suffices to verify that
\[
(1-\beta)(\by_{k+1}-\by_k)=-[(1-\beta)\gamma+\eta] \bg_k,~\forall k\ge 1.
\]
Let us first consider the case $k=1$. Note that $\bfm_1=\bg_1$ since $\bfm_0=\mathbf{0}$, hence we know that $\bw_2 - \bw_1 = - \gamma\bg_1-\eta \bg_1$ from Eq.~\eqref{eq:w-update}, which further implies that
\begin{align*}
(1-\beta)(\by_2-\by_1) & = \bw_2-\beta \bw_1+\beta \gamma \bg_1-(1-\beta) \bw_1=\bw_2-\bw_1+\beta \gamma \bg_1 \\
& =-\gamma \bg_1-\eta \bg_1+\beta \gamma \bg_1=-[(1-\beta)\gamma+\eta] \bg_1.
\end{align*}
For $k\ge 2$, it follows from Eq.~\eqref{eq:y-def}, together with the update rules Eq.~\eqref{eq:m-update}-\eqref{eq:w-update} that
\begin{align*}
(1-\beta)(\by_{k+1}-\by_k) & =\bw_{k+1}-\beta \bw_k+\beta \gamma \bg_k-(\bw_k-\beta \bw_{k-1}+\beta \gamma \bg_{k-1}) \\
& =\bw_{k+1}-\bw_k-\beta(\bw_k-\bw_{k-1})+\beta \gamma \bg_k-\beta \gamma \bg_{k-1} \\
& =-\gamma \bg_k-\eta \bfm_k+\beta \gamma \bg_{k-1}+\beta \eta \bfm_{k-1}+\beta \gamma \bg_k-\beta \gamma \bg_{k-1} \\
& =-(1-\beta) \gamma \bg_k-\eta(\bfm_k-\beta \bfm_{k-1}) \\
& =-(1-\beta) \gamma \bg_k-\eta \bg_k =-[(1-\beta) \gamma+\eta] \bg_k.
\end{align*}
The proof is now complete.
\end{proof}

\begin{proof}[of Lemma \ref{lem:dist-yw}]
    If $\beta=0$, then the result holds trivially since $\by_k=\bw_k$ for all $k\ge 1$ according to Eq.~\eqref{eq:y-def}. Hence we assume $\beta\in(0,1)$ in the following. We prove the result by showing that
    \begin{equation}\label{eq:dist-yw-1}
        \|\by_k-\bw_k\|^2 = \frac{\beta^2}{(1-\beta)^2} \eta^2 \|\bfm_{k-1}\|^2,~\forall k\ge 1.
    \end{equation}
    Suppose Eq.~\eqref{eq:dist-yw-1} holds for now. Note from the update rule Eq.~\eqref{eq:m-update} that $\bfm_k=\sum_{j=1}^k \beta^{k-j} \bg_j$, then it follows from Eq.~\eqref{eq:dist-yw-1} and the convexity of $\|\cdot\|^2$ that
    \begin{align*}
        \|\by_k-\bw_k\|^2 = & \frac{\beta^2}{(1-\beta)^2} \eta^2 \Big\|\sum_{j=1}^{k-1} \beta^{k-1-j} \bg_j\Big\|^2 = \frac{\beta^2}{(1-\beta)^2} \eta^2 \Big\|\Big(\sum_{j=1}^{k-1} \beta^{k-1-j}\Big)\sum_{j=1}^{k-1} \frac{\beta^{k-1-j}}{\sum_{j=1}^{k-1} \beta^{k-1-j}} \bg_j\Big\|^2\\
        \leq & \frac{\beta^2}{(1-\beta)^2} \eta^2\Big(\sum_{j=1}^{k-1} \beta^{k-1-j}\Big)^2 \sum_{j=1}^{k-1}\frac{\beta^{k-1-j}}{\sum_{j=1}^{k-1} \beta^{k-1-j}} \|\bg_j\|^2 \\
        = & \frac{\beta^2}{(1-\beta)^2} \eta^2\Big(\sum_{j=1}^{k-1} \beta^{k-1-j}\Big) \sum_{j=1}^{k-1}\beta^{k-1-j} \|\bg_j\|^2 \leq \frac{\beta^2}{(1-\beta)^3} \eta^2 \sum_{j=1}^{k-1} \beta^{k-1-j}\| \bg_j\|^2,
    \end{align*}
    where we apply $\sum_{j=1}^{k-1} \beta^{k-1-j}=\sum_{j=0}^{k-2} \beta^j=\frac{1-\beta^{k-1}}{1-\beta}\leq \frac{1}{1-\beta}$ to obtain the last inequality.

    Hence it remains to prove Eq.~\eqref{eq:dist-yw-1}. Since $\bfm_0=\mathbf{0}$ and $\by_1= \bw_1$, Eq.~\eqref{eq:dist-yw-1} holds for $k=1$. For $k\ge 2$, we have from the definition of $\by_k$ and the update rule Eq.~\eqref{eq:w-update} that
    \begin{align*}
        \|\by_k-\bw_k\|^2 = &\Big\|\frac{1}{1-\beta} \bw_k-\frac{\beta}{1-\beta} \bw_{k-1}+\frac{\beta \gamma}{1-\beta} \bg_{k-1}-\bw_k\Big\|^2\\
        = & \frac{\beta^2}{(1-\beta)^2}\|\bw_k-\bw_{k-1}+\gamma\bg_{k-1}\|^2\\
        = & \frac{\beta^2}{(1-\beta)^2}\eta^2\|\bfm_{k-1}\|^2,
    \end{align*}
    which completes the proof.
\end{proof}
Now we are ready to present the proof of Theorem \ref{thm:cvx-opt}, which provides optimization error bounds for the generalized SGDM.
\begin{proof}[of Theorem \ref{thm:cvx-opt}]
For notational convenience, denote the effective step size $\tilde{\gamma}:=\gamma+\frac{\eta}{1-\beta}$, then it follows from Lemma \ref{lem:y-gd} that
\[
\by_{k+1}=\by_k-\tilde{\gamma} \bg_k,~\forall k\ge 1.
\]
Using the above inequality, we can bound $\|\by_{k+1}-\bw\|^2$ for any $\bw\in \wcal$ as follows:
\begin{align}
&\|\by_{k+1}-\bw\|^2 = \|\by_k-\bw\|^2+\|\by_{k+1}-\by_k\|^2+2\langle \by_{k+1}-\by_k, \by_k-\bw\rangle \notag\\
= & \|\by_k-\bw\|^2+\tilde{\gamma}^2\|\bg_k\|^2-2 \tilde{\gamma}\langle \bg_k, \by_k-\bw_k\rangle+2 \tilde{\gamma}\langle \bg_k, \bw-\bw_k\rangle \notag\\
\leq & \|\by_k-\bw\|^2+\tilde{\gamma}^2\|\bg_k\|^2+\frac{1-\beta}{\beta}\|\by_k-\bw_k\|^2+\frac{\beta}{1-\beta}\tilde{\gamma}^2\|\bg_k\|^2+2 \tilde{\gamma}\langle \bg_k, \bw-\bw_k\rangle \notag\\
\leq & \|\by_k-\bw\|^2+\frac{1}{1-\beta}\tilde{\gamma}^2\|\bg_k\|^2+\frac{\beta}{(1-\beta)^2} \eta^2 \sum_{j=1}^{k-1} \beta^{k-1-j}\|\bg_j\|^2+2 \tilde{\gamma}\langle \bg_k, \bw-\bw_k\rangle \label{eq:cvx-opt-m1},
\end{align}
where we apply Young's inequality to obtain the first inequality, and the last inequality follows from Lemma \ref{lem:dist-yw}. Note that the resulting inequality in Eq.~\eqref{eq:cvx-opt-m1} holds for $\beta=0$ since $\by_k=\bw_k$ in this case.

To proceed, note from the unbiasedness of $\bg_k$ and convexity of $L_S$ that
\begin{equation}\label{eq:cvx-opt-1}
    \ebb_A[\langle \bg_k, \bw-\bw_k\rangle]=\ebb_A[\langle\nabla L_S(\bw_k), \bw-\bw_k\rangle] \leq \ebb_A[L_S(\bw)-L_S(\bw_k)].
\end{equation}
Additionally, from the self-bounding property of $\ell$ (Lemma~\ref{lem:self-bounding}), we obtain
\begin{equation}\label{eq:cvx-opt-2}
    \ebb_A[\|\bg_k\|^2]\leq 2\alpha \ebb_A[\ell(\bw_k;\bz_{i_k})]=2\alpha \ebb_A[L_S(\bw_k)].
\end{equation}
Taking expectations $\ebb_A[\cdot]$ on both sides of Eq.~\eqref{eq:cvx-opt-m1}, and applying Eq.~\eqref{eq:cvx-opt-1} and Eq.~\eqref{eq:cvx-opt-2} gives
\begin{multline*}
\ebb_A[\|\by_{k+1}-\bw\|^2-\|\by_k-\bw\|^2]\leq \frac{2\alpha}{1-\beta}\tilde{\gamma}^2 \ebb_A[L_S(\bw_k)]\\
+\frac{2\alpha\beta}{(1-\beta)^2} \eta^2 \sum_{j=1}^{k-1} \beta^{k-1-j} \ebb_A[L_S(\bw_j)]+2 \tilde{\gamma} \ebb_A[L_S(\bw)-L_S(\bw_k)].
\end{multline*}
Summing over the above inequality for $k=1,\dots, t$ and noting that $\by_1=\bw_1$ further leads to
\begin{multline}\label{eq:cvx-opt-m2}
    \ebb_A[\|\by_{t+1}-\bw\|^2]-\|\bw_1-\bw\|^2\leq \Big(\frac{2\alpha}{1-\beta} \tilde{\gamma}^2-2 \tilde{\gamma}\Big) \sum_{k=1}^t \ebb_A[L_S(\bw_k)]\\
    +\frac{2\alpha\beta}{(1-\beta)^2} \eta^2 \sum_{k=2}^t \sum_{j=1}^{k-1} \beta^{k-1-j} \ebb_A[L_S(\bw_j)]+2 \tilde{\gamma} t L_S(\bw).
\end{multline}
Note that
\begin{align*}
\sum_{k=2}^t \sum_{j=1}^{k-1} \beta^{k-1-j} \ebb_A[L_S(\bw_j)] = &\sum_{k=1}^{t-1} \sum_{j=1}^k \beta^{k-j} \ebb_A[L_S(\bw_j)]=\sum_{j=1}^{t-1}\Big(\sum_{k=j}^{t-1} \beta^{k-j}\Big) \ebb_A[L_S(\bw_j)] \\
\leq & \frac{1}{1-\beta} \sum_{j=1}^{t-1} \ebb_A[L_S(\bw_j)]\leq\frac{1}{1-\beta} \sum_{k=1}^t \ebb_A[L_S(\bw_k)],
\end{align*}
where we exchange the order of the two summations to obtain the second equality, and the first inequality follows from $\sum_{k=j}^{t-1} \beta^{k-j}=\sum_{k=0}^{t-j-1} \beta^k=\frac{1-\beta^{t-j}}{1-\beta}\leq \frac{1}{1-\beta}$ (note we use the convention $0^0=1$ in case $\beta=0$). Plugging the above inequality into Eq.~\eqref{eq:cvx-opt-m2} leads to
\begin{align}
&\ebb_A[\|\by_{t+1}-\bw\|^2]-\|\bw_1-\bw\|^2\notag\\
\leq & \Big(\frac{2\alpha}{1-\beta} \tilde{\gamma}^2-2 \tilde{\gamma}\Big) \sum_{k=1}^t \ebb_A[L_S(\bw_k)]+\frac{2\alpha\beta}{(1-\beta)^3} \eta^2 \sum_{k=1}^t \ebb_A[L_S(\bw_k)]+2 \tilde{\gamma} t L_S(\bw)\notag\\
= & \Big(\frac{2\alpha}{1-\beta}\tilde{\gamma}^2+\frac{2\alpha\beta}{(1-\beta)^3} \eta^2-2\tilde{\gamma}\Big) \sum_{k=1}^t \ebb_A[L_S(\bw_k)]+2 \tilde{\gamma} t L_S(\bw)\label{eq:cvx-opt-m3}.
\end{align}
Then it follows from Eq.~\eqref{eq:cvx-opt-m3} that
\begin{multline}\label{eq-1121}
-\|\bw_1-\bw\|^2\leq\ebb_A[\|\by_{t+1}-\bw\|^2]-\|\bw_1-\bw\|^2 \\
\leq\Big(\frac{2\alpha}{1-\beta}\tilde{\gamma}^2+\frac{2\alpha\beta}{(1-\beta)^3} \eta^2-2\tilde{\gamma}\Big) \sum_{k=1}^t \ebb_A[L_S(\bw_k)-L_S(\bw)]+ \Big(\frac{2\alpha}{1-\beta}\tilde{\gamma}^2+\frac{2\alpha\beta}{(1-\beta)^3} \eta^2\Big) t L_S(\bw).
\end{multline}

We consider two cases for the term $\sum_{k=1}^t \ebb_A[L_S(\bw_k)-L_S(\bw)]$. Firstly, if $\sum_{k=1}^t \ebb_A[L_S(\bw_k)-L_S(\bw)]\leq 0$, then it follows from the convexity of $L_S$ that
\[
\ebb_A[L_S(\bar{\bw}_t)]\leq \frac{1}{t}\sum_{k=1}^t \ebb_A[L_S(\bw_k)]\leq \ebb_A[L_S(\bw)],
\]
which leads to the stated bound in the theorem directly.

We now assume $\sum_{k=1}^t \ebb_A[L_S(\bw_k)-L_S(\bw)]> 0$. From the step size condition Eq.~\eqref{eq:opt-lr} and $\tilde{\gamma}=\gamma+\eta/(1-\beta)$, we know that
\begin{align}
\frac{2\alpha}{1-\beta}\tilde{\gamma}^2+\frac{2\alpha\beta}{(1-\beta)^3} \eta^2 &= \frac{2\alpha}{1-\beta} \gamma^2+\frac{4\alpha}{(1-\beta)^2}\gamma \eta+\frac{2\alpha}{(1-\beta)^3}\eta^2 + \frac{2\alpha\beta}{(1-\beta)^3} \eta^2 \notag\\
&= \frac{2\alpha}{1-\beta} \gamma^2+\frac{4\alpha}{(1-\beta)^2}\gamma \eta+\frac{2\alpha(1+\beta)}{(1-\beta)^3}\eta^2 \notag\\
& = \frac{2\alpha}{1-\beta}\Big(\gamma^2+\frac{2}{1-\beta}\gamma\eta+\frac{1+\beta}{(1-\beta)^2}\eta^2\Big)\notag\\
&\leq \frac{4(1-\beta)\gamma+4\eta}{3(1-\beta)} = \frac{4\tilde{\gamma}}{3}\label{eq:cvx-opt-m4}.
\end{align}
Hence, applying Eq.~\eqref{eq:cvx-opt-m4} on Eq.~\eqref{eq-1121} further gives
\begin{equation}
-\|\bw_1-\bw\|^2\leq  -\frac{2\tilde{\gamma}}{3} \sum_{k=1}^t \ebb_A[L_S(\bw_k)-L_S(\bw)]+\Big(\frac{2\alpha}{1-\beta}\tilde{\gamma}^2+\frac{2\alpha\beta}{(1-\beta)^3} \eta^2\Big) t L_S(\bw)\label{eq:cvx-opt-m5}.
\end{equation}
Rearranging the terms in Eq.~\eqref{eq:cvx-opt-m5} and dividing both sides by $2\tilde{\gamma} t/3$, we obtain
\begin{align}
    &\frac{1}{t}\sum_{k=1}^t \ebb_A[L_S(\bw_k)]-L_S(\bw)
    \leq \frac{3\|\bw_1-\bw\|^2}{2\tilde{\gamma}t}+ \Big(\frac{3\alpha}{1-\beta}\tilde{\gamma} +\frac{3\alpha\beta\eta^2}{(1-\beta)^3\tilde{\gamma}}\Big) L_S(\bw)\notag\\
    =& \frac{3(1-\beta)\|\bw_1-\bw\|^2}{2\big((1-\beta)\gamma+\eta\big) t}+ \Big(\frac{3\alpha\big((1-\beta)\gamma+\eta\big)}{(1-\beta)^2} +\frac{3\alpha\beta\eta^2}{(1-\beta)^2\big((1-\beta)\gamma+\eta\big)}\Big) L_S(\bw),\label{eq:opt-bound}
\end{align}
where we plug in $\tilde{\gamma}=\gamma+\eta/(1-\beta)$ to get the equality. Then the proof is complete from the convexity of $L_S$, i.e., $L_S(\bar{\bw}_t)\leq\frac{1}{t}\sum_{k=1}^t L_S(\bw_k)$.
\end{proof}

\subsection{Proofs of Excess Population Risk Bounds}\label{sec:proof-epr}
In this subsection, we present the omitted proofs for the results established in Section \ref{sec:epr} regarding excess population risk (EPR) bounds. We start with the proof of Theorem \ref{thm:epr-cvx}, which provides the EPR bounds for the generalized SGDM.
\begin{proof}[of Theorem~\ref{thm:epr-cvx}]
It follows from Theorem \ref{thm:stab-cvx} and the definition of $C(\beta, \gamma, \eta, t)$ that (the dependency on $\alpha$ is absorbed in the notation $\lesssim$)
\begin{align}
    &\ebb[\|\bw_{t+1}-\bw_{t+1}^{(i)}\|^2] \notag \\
\leq &\Big(\frac{2(1+\beta)^2}{(1-\beta)^3}\eta^2+2\gamma^2+\frac{\beta(\beta^2+3)}{(1-\beta)^3}\gamma\eta+\frac{(\gamma+\eta)((1-\beta)\gamma+\eta)t}{(1-\beta)^2 n}\Big)\frac{8 \alpha e}{n} \sum_{k=1}^t \ebb[L_S(\bw_k)]\notag\\
\lesssim &\frac{C(\beta, \gamma, \eta, t)}{n} \sum_{k=1}^t \ebb[L_S(\bw_k)].\label{eq:epr-cvx1}
\end{align}
Note that the RHS of Eq.~\eqref{eq:epr-cvx1} is an increasing function in terms of $t$. Hence, we know from the convexity of $\|\cdot\|^2$ that
\begin{equation}\label{eq:epr-cvx2}
    \e[\|\bar{\bw}_t-\bar{\bw}_t^{(i)}\|^2]\leq\frac{1}{t}\sum_{k=1}^t\e[\|\bw_k-\bw_k^{(i)}\|^2] \lesssim \frac{C(\beta, \gamma, \eta, t)}{n}\sum_{k=1}^t \ebb[L_S(\bw_k)].
\end{equation}
Applying Lemma \ref{lem:general-stablity} and plugging in Eq.~\eqref{eq:epr-cvx2} leads to
\begin{align}
    \e[L(\bar{\bw}_t)-L_S(\bar{\bw}_t)] \leq &\frac{\alpha}{\rho} \e[L_S(\bar{\bw}_t)]+\frac{\alpha+\rho}{2 n} \sum_{i=1}^n \e[\|\bar{\bw}_t-\bar{\bw}_t^{(i)}\|^2]\notag\\
    \lesssim &\frac{1}{\rho} \e[L_S(\bar{\bw}_t)]+\frac{(1+\rho) C(\beta, \gamma, \eta, t)}{n} \sum_{k=1}^t \ebb[L_S(\bw_k)]\label{eq:epr-cvx3}.
\end{align}
Next we bound the term involving $\e[L_S(\bar{\bw}_t)]$ and $\sum_{k=1}^t \e[L_S(\bw_k)]$.
To this end, applying Eq.~\eqref{eq:opt-bound} with $\bw=\bw^*$ from the proof of Theorem \ref{thm:cvx-opt} leads to (the dependency on $\|\bw_1-\bw^*\|_2$ is absorbed in the notation $\lesssim$)
\begin{align}
\e[L_S(\bar{\bw}_t)] \leq & L(\bw^*)+\frac{3(1-\beta)\|\bw_1-\bw^*\|^2}{2\big((1-\beta)\gamma+\eta\big) t}+ \Big((1-\beta)\gamma+\eta +\frac{\beta\eta^2}{(1-\beta)\gamma+\eta}\Big)\frac{3\alpha L(\bw^*)}{(1-\beta)^2}\notag\\
\lesssim & L(\bw^*)+\frac{1-\beta}{\big((1-\beta)\gamma+\eta\big) t} + D(\beta, \gamma, \eta) L(\bw^*) \label{eq:epr-cvx4}
\end{align}
and similarly
\begin{equation}\label{eq:epr-cvx5}
\sum_{k=1}^t \e[L_S(\bw_k)] \lesssim t L(\bw^*)+\frac{1-\beta}{(1-\beta)\gamma+\eta}+ t D(\beta, \gamma, \eta) L(\bw^*).
\end{equation}
Plugging Eq.~\eqref{eq:epr-cvx4} and Eq.~\eqref{eq:epr-cvx5} into Eq.~\eqref{eq:epr-cvx3} gives
\begin{multline}\label{eq:epr-cvx-gen}
   \e[L(\bar{\bw}_t)-L_S(\bar{\bw}_t)] \lesssim \frac{1}{\rho}\Big(L(\bw^*)+\frac{1-\beta}{\big((1-\beta)\gamma+\eta\big) t}+ D(\beta, \gamma, \eta) L(\bw^*)\Big) \\+ \frac{(1+\rho) C(\beta, \gamma, \eta, t)}{n} \Big(t L(\bw^*)+\frac{1-\beta}{(1-\beta)\gamma+\eta}+ t D(\beta, \gamma, \eta) L(\bw^*)\Big).
\end{multline}
Combining the generalization error in Eq.~\eqref{eq:epr-cvx-gen} and optimization error in Eq.~\eqref{eq:epr-cvx4} leads to
\begin{align*}
& \e[L(\bar{\bw}_t)]-L(\bw^*) = \e[L(\bar{\bw}_t)-L_S(\bar{\bw}_t)] + \e[L_S(\bar{\bw}_t)-L(\bw^*)]\notag\\
\lesssim & \frac{1-\beta}{\big((1-\beta)\gamma+\eta\big) t}+ D(\beta, \gamma, \eta) L(\bw^*) + \frac{1}{\rho}\Big(L(\bw^*)+\frac{1-\beta}{\big((1-\beta)\gamma+\eta\big) t}+ D(\beta, \gamma, \eta) L(\bw^*)\Big) \\
&+ \frac{(1+\rho) C(\beta, \gamma, \eta, t)}{n} \Big(t L(\bw^*)+\frac{1-\beta}{(1-\beta)\gamma+\eta}+ t D(\beta, \gamma, \eta) L(\bw^*)\Big),
\end{align*}
which completes the proof.
\end{proof}
We proceed to the proof of Theorem \ref{thm:epr-polyak}, which establishes general EPR bounds for SGD with Polyak's momentum \eqref{alg:sgd-polyak}. The proof builds heavily on Theorem \ref{thm:epr-cvx}.
\begin{proof}[of Theorem \ref{thm:epr-polyak}]
    Setting $\gamma=0$ in the definitions of $C(\beta, \gamma, \eta, t)$ and $D(\beta, \gamma, \eta)$ gives
    \[
    C(\beta, 0, \eta, t):=\frac{\eta^2}{(1-\beta)^2}\Big(\frac{1}{1-\beta}+\frac{t}{n}\Big),\quad D(\beta, 0, \eta):=\frac{(1+\beta)\eta}{(1-\beta)^2}.
    \]
    Plugging the above two quantities into the EPR bound established in Theorem \ref{thm:epr-cvx} leads to
    \begin{multline*}
\e[L(\bar{\bw}_t)]-L(\bw^*) \lesssim \frac{1-\beta}{\eta t}+ \frac{(1+\beta)\eta}{(1-\beta)^2} L(\bw^*) + \frac{1}{\rho}\Big(L(\bw^*)+\frac{1-\beta}{\eta t}+ \frac{(1+\beta)\eta}{(1-\beta)^2} L(\bw^*)\Big) \\
+ \frac{1+\rho}{n} \frac{\eta^2}{(1-\beta)^2}\Big(\frac{1}{1-\beta}+\frac{t}{n}\Big) \Big(t L(\bw^*)+\frac{1-\beta}{\eta}+ \frac{(1+\beta)\eta}{(1-\beta)^2} t  L(\bw^*)\Big),
\end{multline*}
which leads to the desired result after omitting the constant $1+\beta$. Additionally, setting $\gamma=0$ in Eq.~\eqref{eq:epr-lr1} gives
\[
\frac{(1+\beta)(3-\beta)}{(1-\beta)^2}\eta \leq \frac{1}{\alpha},
\]
which is equivalent to the given step size condition $\eta\leq\frac{(1-\beta)^2}{\alpha(1+\beta)(3-\beta)}$. Hence, it remains to verify Eq.~\eqref{eq:epr-lr2}. To this end, note from $0\leq\beta<1$ that
\[
\frac{1+\beta}{(1-\beta)^2}\eta^2\leq \frac{1+\beta}{(1-\beta)^2}\frac{(1-\beta)^2}{\alpha(1+\beta)(3-\beta)}\eta=\frac{\eta}{\alpha(3-\beta)}\leq \frac{\eta}{2\alpha}\leq\frac{2\eta}{3\alpha},
\]
which corresponds to Eq.~\eqref{eq:epr-lr2} with $\gamma=0$. The proof is complete.
\end{proof}
Next, we prove Corollary \ref{co:epr-polyak}, which establishes optimal EPR bounds for \eqref{alg:sgd-polyak} with appropriate choices of parameters.
\begin{proof}[of Corollary \ref{co:epr-polyak}]
Since for both cases we set $t\asymp n/(1-\beta)$, it follows from Theorem~\ref{thm:epr-polyak} that
\begin{multline}\label{eq:epr-polyak-1}
\e[L(\bar{\bw}_t)]-L(\bw^*) \lesssim \frac{(1-\beta)^2}{\eta n}+ \frac{\eta}{(1-\beta)^2} L(\bw^*) + \frac{1}{\rho}\Big(L(\bw^*)+\frac{(1-\beta)^2}{\eta n}+ \frac{\eta}{(1-\beta)^2} L(\bw^*)\Big) \\
+ \frac{1+\rho}{n} \frac{\eta^2}{(1-\beta)^3} \Big(\frac{n L(\bw^*)}{1-\beta}+\frac{1-\beta}{\eta}+ \frac{\eta n}{(1-\beta)^3}  L(\bw^*)\Big).
\end{multline}

We first consider the case $L(\bw^*) \ge \frac{1}{n}$. From the condition $\eta\asymp \frac{(1-\beta)^2}{\sqrt{n L(\bw^*)}}$, we know that
\begin{align}
    L(\bw^*)+\frac{(1-\beta)^2}{\eta n}&+ \frac{\eta}{(1-\beta)^2} L(\bw^*)
    \asymp  L(\bw^*)+\frac{\sqrt{n L(\bw^*)}}{n}+ \frac{L(\bw^*)}{\sqrt{n L(\bw^*)}}\notag\\
    &\asymp  L(\bw^*)+\sqrt{\frac{L(\bw^*)}{n}}\leq L(\bw^*) + L(\bw^*) \asymp L(\bw^*), \label{eq:epr-polyak-2}
\end{align}
where we use the assumption $L(\bw^*) \ge \frac{1}{n}$ to obtain the last inequality. It then follows that
\begin{align}
    &\frac{n L(\bw^*)}{1-\beta}+\frac{1-\beta}{\eta}+ \frac{\eta n}{(1-\beta)^3}  L(\bw^*)\notag\\
    =&\frac{n}{1-\beta} \Big(L(\bw^*)+\frac{(1-\beta)^2}{\eta n}+ \frac{\eta}{(1-\beta)^2} L(\bw^*)\Big) \lesssim \frac{n L(\bw^*)}{1-\beta}\label{eq:epr-polyak-3}.
\end{align}
Combining Eq.~\eqref{eq:epr-polyak-2} and \eqref{eq:epr-polyak-3}, together with the conditions $\rho = \sqrt{n L(\bw^*)}$ and $\eta\asymp\frac{(1-\beta)^2}{\sqrt{n L(\bw^*)}}$ further leads to
\begin{align*}
    & \frac{(1-\beta)^2}{\eta n}\asymp \frac{\sqrt{n L(\bw^*)}}{n}= \sqrt{\frac{L(\bw^*)}{n}},\quad
    \frac{\eta}{(1-\beta)^2} L(\bw^*)\asymp \frac{L(\bw^*)}{\sqrt{n L(\bw^*)}}= \sqrt{\frac{L(\bw^*)}{n}},\\
    & \frac{1}{\rho}\Big(L(\bw^*)+\frac{(1-\beta)^2}{\eta n}+ \frac{\eta}{(1-\beta)^2} L(\bw^*)\Big) \lesssim \frac{L(\bw^*)}{\sqrt{n L(\bw^*)}}=\sqrt{\frac{L(\bw^*)}{n}},\\
    & \frac{1+\rho}{n} \frac{\eta^2}{(1-\beta)^3} \Big(\frac{n L(\bw^*)}{1-\beta}+\frac{1-\beta}{\eta}+ \frac{\eta n}{(1-\beta)^3}  L(\bw^*)\Big) \lesssim \frac{\sqrt{nL(\bw^*)}}{n} \frac{1-\beta}{n L(\bw^*)} \frac{n L(\bw^*)}{1-\beta}=\sqrt{\frac{L(\bw^*)}{n}}.
\end{align*}
Plugging the above results into Eq.~\eqref{eq:epr-polyak-1} gives
\[\ebb[L(\bar{\bw}_t)-L(\bw^*)]\lesssim \sqrt{L(\bw^*)/n}.\]

We proceed to the case $L(\bw^*) < \frac{1}{n}$. In this case, it follows from the condition $\eta\asymp (1-\beta)^2$ that
\begin{equation}\label{eq:epr-polyak-4}
    L(\bw^*)+\frac{(1-\beta)^2}{\eta n}+ \frac{\eta}{(1-\beta)^2} L(\bw^*)\asymp L(\bw^*)+\frac{1}{n}+ L(\bw^*)
    \lesssim \frac{1}{n}.
\end{equation}
Consequently, we have
\begin{align}
    &\frac{n L(\bw^*)}{1-\beta}+\frac{1-\beta}{\eta}+ \frac{\eta n}{(1-\beta)^3}  L(\bw^*)
    =\frac{n}{1-\beta} \Big(L(\bw^*)+\frac{(1-\beta)^2}{\eta n}+ \frac{\eta}{(1-\beta)^2} L(\bw^*)\Big) \lesssim \frac{1}{1-\beta}\label{eq:epr-polyak-5}.
\end{align}
Combining Eq.~\eqref{eq:epr-polyak-4} and \eqref{eq:epr-polyak-5}, together with the conditions $\rho = 1$ and $\eta\asymp(1-\beta)^2$ further leads to
\begin{align*}
    & \frac{(1-\beta)^2}{\eta n}\asymp \frac{1}{n},\quad \frac{\eta}{(1-\beta)^2} L(\bw^*)\asymp L(\bw^*)< \frac{1}{n},\\
    & \frac{1}{\rho}\Big(L(\bw^*)+\frac{(1-\beta)^2}{\eta n}+ \frac{\eta}{(1-\beta)^2} L(\bw^*)\Big) \lesssim \frac{1}{n},\\
    & \frac{1+\rho}{n} \frac{\eta^2}{(1-\beta)^3} \Big(\frac{n L(\bw^*)}{1-\beta}+\frac{1-\beta}{\eta}+ \frac{\eta n}{(1-\beta)^3}  L(\bw^*)\Big) \lesssim \frac{2}{n} (1-\beta) \frac{1}{1-\beta}\asymp\frac{1}{n}.
\end{align*}
Plugging the above results into Eq.~\eqref{eq:epr-polyak-1} gives
\[\ebb[L(\bar{\bw}_t)-L(\bw^*)]\lesssim 1/n.\]
The proof is now complete.
\end{proof}
Finally, we present the proof of Theorem \ref{thm:epr-nesterov}, which establishes general EPR bounds for SGD with Nesterov's momentum \eqref{alg:sgd-nesterov}.
\begin{proof}[of Theorem \ref{thm:epr-nesterov}]
    Setting $\eta=\beta\gamma$ into the definition of $C(\beta, \gamma, \eta, t)$ and $D(\beta, \gamma, \eta)$ gives (note $(1-\beta)\gamma+\eta=\gamma$)
    \begin{align*}
        &C(\beta, \gamma, \beta\gamma, t):=\Big(\frac{2\beta^2+(1-\beta)^3}{(1-\beta)^3}+\frac{(1+\beta) t}{(1-\beta)^2 n}\Big)\gamma^2\lesssim \Big(\frac{1}{1-\beta}+\frac{t}{n}\Big)\frac{\gamma^2}{(1-\beta)^2},\\
        &D(\beta, \gamma, \beta\gamma):=\frac{1+\beta^3}{(1-\beta)^2}\gamma\lesssim \frac{\gamma}{(1-\beta)^2}.
    \end{align*}
    Plugging $\gamma=0$ and the above two quantities into the EPR bound established in Theorem \ref{thm:epr-cvx} leads to
    \begin{multline*}
\e[L(\bar{\bw}_t)]-L(\bw^*)\lesssim \frac{1-\beta}{\gamma t}+ \frac{\gamma}{(1-\beta)^2} L(\bw^*) + \frac{1}{\rho}\Big(L(\bw^*)+\frac{1-\beta}{\gamma t}+ \frac{\gamma}{(1-\beta)^2} L(\bw^*)\Big) \\
+ \frac{1+\rho}{n} \frac{\gamma^2}{(1-\beta)^2} \Big(\frac{1}{1-\beta}+\frac{t}{n}\Big) \Big(t L(\bw^*)+\frac{1-\beta}{\gamma}+ \frac{\gamma t}{(1-\beta)^2} L(\bw^*)\Big),
\end{multline*}
which is the desired result.
Moreover, it is straightforward to verify that $\gamma\leq\frac{2(1-\beta)^2}{\alpha(3+6\beta+5\beta^2-2\beta^3)}$ is equivalent to the step size condition Eq.~\eqref{eq:epr-lr1} with $\eta=\beta\gamma$. Hence, it remains to show that Eq.~\eqref{eq:epr-lr2} is also satisfied. To this end, setting $\eta=\beta\gamma$ shows that Eq.~\eqref{eq:epr-lr2} is equivalent to
\[
\gamma^2\Big(1+\frac{2\beta}{1-\beta}+\frac{\beta^2(1+\beta)}{(1-\beta)^2}\Big)\leq \frac{2\gamma}{3\alpha}\Longleftrightarrow\gamma\leq\frac{2(1-\beta)^2}{3\alpha(1+\beta^3)}.
\]
Then it suffices to verify that
\begin{align*}
    \frac{2(1-\beta)^2}{\alpha(3+6\beta+5\beta^2-2\beta^3)}\leq \frac{2(1-\beta)^2}{3\alpha(1+\beta^3)}&\Longleftrightarrow 3+6\beta+5\beta^2-2\beta^3\ge 3+ 3\beta^3\\
    &\Longleftrightarrow 6+5\beta-5\beta^2\ge 0,
\end{align*}
where the last inequality clearly holds due to $\beta\in[0,1)$. The proof is thus complete.
\end{proof}

\section{Experiments}\label{sec:experiments}
In this section, we conduct empirical studies to investigate the algorithmic stability of SGDM. We examine two important momentum variants: SGD with Polyak's momentum \eqref{alg:sgd-polyak} and SGD with Nesterov's momentum \eqref{alg:sgd-nesterov}, and analyze how different parameters affect their stability properties.
Specifically, we consider the binary logistic regression problem with the loss function:
\[
\ell(\bw; \bz):=\log(1+\exp(-y \langle\bw, \bx\rangle)).
\]
Here, $\bz:=(\bx, y)\in \rbb^d\times \{-1, 1\}$ represents the data sample, where $\bx\in\rbb^d$ corresponds to the feature vector and $y\in\{-1,1\}$ represents the label.
We use $8$ real-world datasets from the LIBSVM library \citep{chang2008libsvm}, whose information is summarized in Table \ref{tab:dataset}. For datasets with multiple classes, we convert them to binary classification by assigning the first half of class labels to positive ($+1$) and the second half to negative ($-1$). For each dataset, we randomly select $80\%$ of the data as the training set $S$, and we perturb a single example in $S$ to generate a neighboring dataset $S'$. We then apply the algorithms with identical parameters and initialization to both $S$ and $S'$, producing two sequences of iterates $\{\bw_t\}_{t \geq 1}$ and $\{\bw'_t\}_{t \geq 1}$. The stability is quantified by computing the Euclidean distance $d_t:= \|\bw_t - \bw'_t\|_2$ at each iteration $t$.

We design experiments to test the effect of step sizes $\gamma\ge 0$, $\eta>0$, and the momentum parameter $\beta\in[0,1)$ on stability.

\noindent\textbf{Test on effect of step sizes:}
\begin{itemize}
    \item For \eqref{alg:sgd-polyak}, fix $\beta = 0.9$ and vary $\eta \in \{0.001, 0.005, 0.025, 0.1\}$.
    \item For \eqref{alg:sgd-nesterov}, fix $\beta = 0.9$ and vary $\gamma \in \{0.001, 0.005, 0.025, 0.1\}$.
\end{itemize}

\noindent\textbf{Test on effect of momentum parameter $\beta$:}
\begin{itemize}
    \item For \eqref{alg:sgd-polyak}, fix $\eta = 0.01$ and vary $\beta \in \{0, 0.5, 0.9, 0.99\}$.
    \item For \eqref{alg:sgd-nesterov}, fix $\gamma = 0.01$ and vary $\beta \in \{0, 0.5, 0.9, 0.99\}$.
\end{itemize}
To ensure robustness and statistical significance, we conduct 100 independent repetitions for each configuration on each dataset. We report the average and standard deviation of the distance $d_t$ against the number of epochs (defined as $t/n$, with $n$ being the training set size).

\begin{table}
\centering\addtolength{\tabcolsep}{-0.2em}
\begin{tabular}{|c|c|c|c|c|c|c|c|c|c|}
\hline
Dataset & $a9a$ & $connect$-4 & $dna$ & $gisette$ & $mnist$ & $mushrooms$ & $phishing$ & $covtype$ \\ \hline
$n$ & 32561 & 67557 & 2000 & 6000 & 60000 & 8124 & 11055 & 581012 \\ \hline
$d$ & 123 & 126 & 180 & 5000 & 780 & 112 & 68 & 54 \\ \hline
\end{tabular}
\caption{Summary of datasets used in the experiments. Here, `$n$' stands for the sample size and `$d$' denotes the number of features.}
\label{tab:dataset}
\end{table}
Figure \ref{fig:test1} reports the evolution of $d_t$ for the logistic loss under varying step sizes, while Figure \ref{fig:test2} presents the corresponding results for different momentum parameters. For both \eqref{alg:sgd-polyak} and \eqref{alg:sgd-nesterov}, the results show that $d_t$ increases with the step size and with the momentum parameter $\beta$. These observations align with the stability bounds established in Theorems \ref{thm:stab-cvx-polyak} and \ref{thm:stab-cvx-nesterov}.

\begin{figure}
    \centering
    \includegraphics[width=1\linewidth]{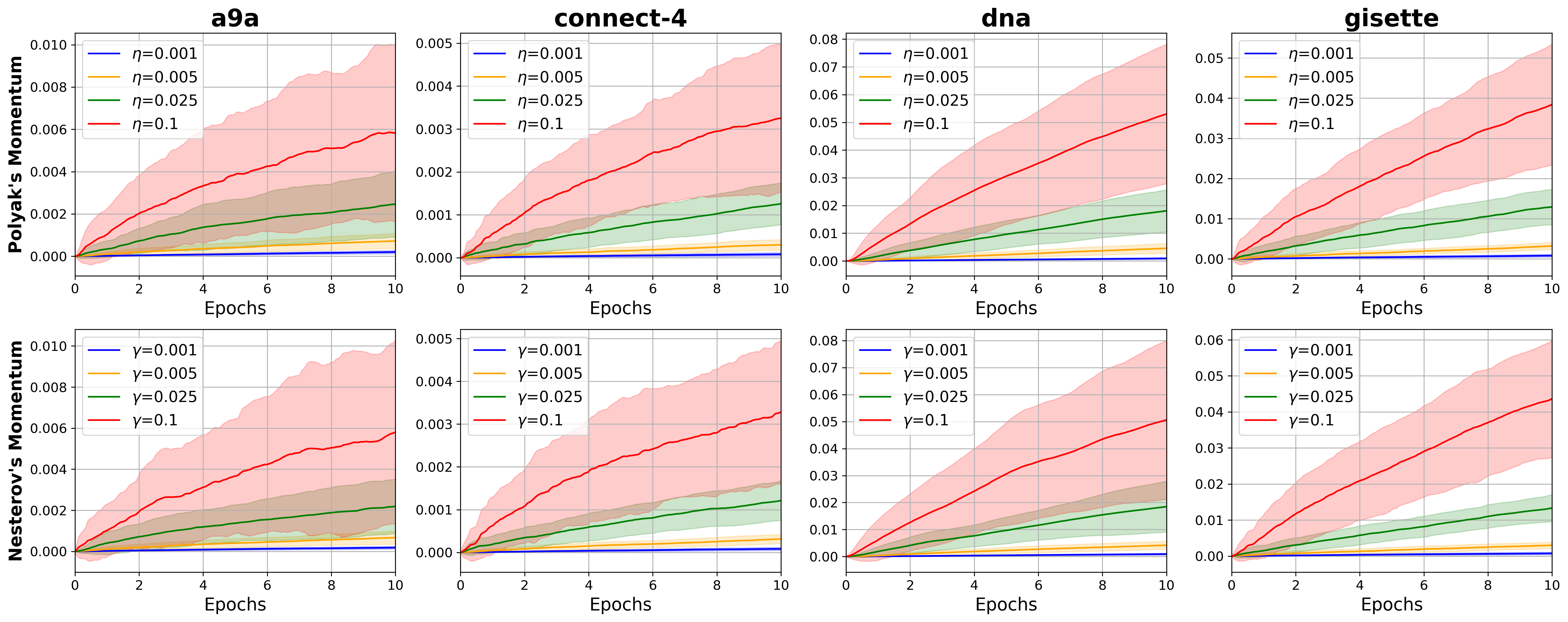}
    \caption{Tests on different step sizes for \eqref{alg:sgd-polyak} (top row) and \eqref{alg:sgd-nesterov} (bottom row) with fixed momentum parameter $\beta=0.9$. The y-axis records the Euclidean distance $\|\bw_t-\bw_t'\|_2$.}
    \label{fig:test1}
\end{figure}

\begin{figure}
    \centering
    \includegraphics[width=1\linewidth]{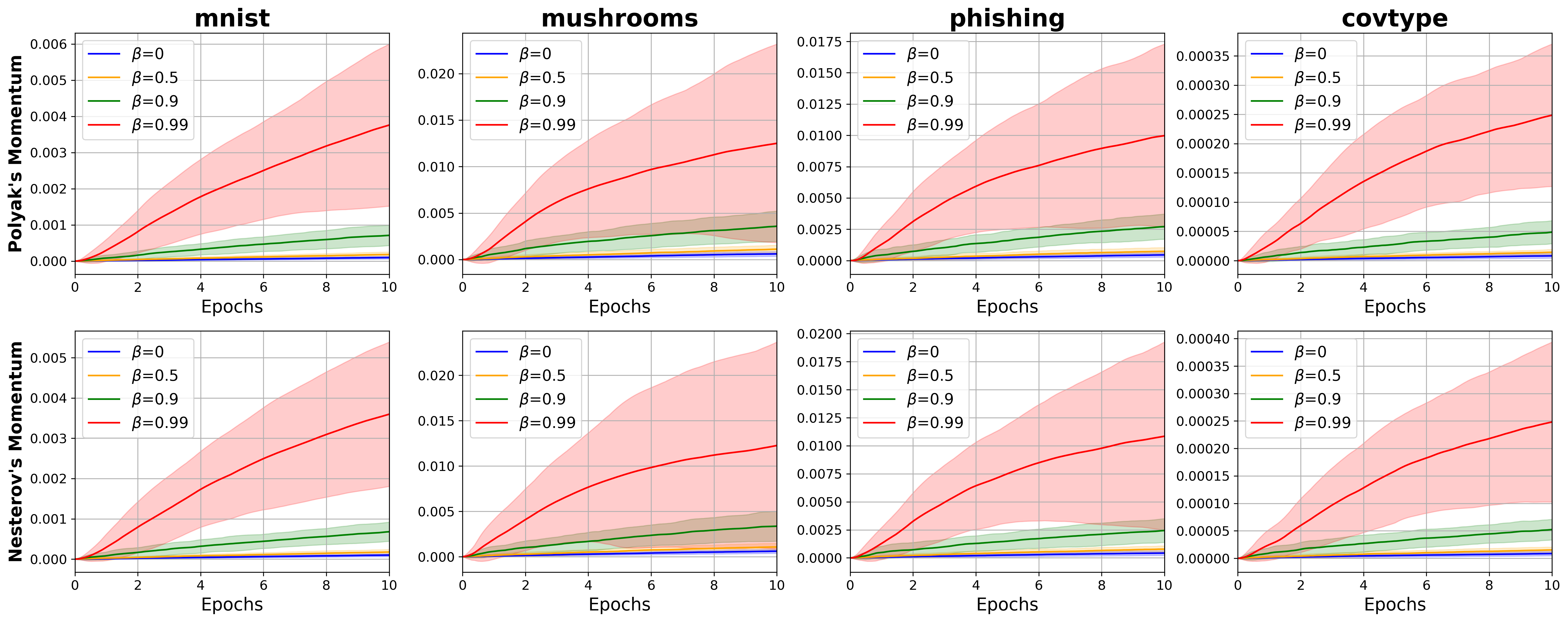}
    \caption{Tests on different momentum parameters for \eqref{alg:sgd-polyak} (top row) and \eqref{alg:sgd-nesterov} (bottom row) with fixed step sizes $\eta=\gamma=0.01$. The y-axis is the Euclidean distance $\|\bw_t-\bw_t'\|_2$.}
    \label{fig:test2}
\end{figure}

\section{Conclusion}\label{sec:conclusion}
In this paper, we presented a comprehensive stability and generalization analysis of stochastic gradient descent with momentum (SGDM). To this end, we introduced a generalized SGDM framework that unifies both SGD with Polyak’s momentum and SGD with Nesterov’s momentum, and established tight on-average model stability bounds for smooth and convex problems. These results yield stability bounds for both momentum variants as direct consequences. To the best of our knowledge, these represent the first tight stability bounds for both SGD with Polyak's momentum and SGD with Nesterov's momentum under comparable assumptions. Our analysis clarifies the role of the momentum parameter $\beta$: it may degrade stability by a factor of $O(1/(1-\beta)^{3/2})$. Importantly, we removed the commonly assumed Lipschitzness condition, and the obtained bounds apply to all $\beta\in[0,1)$ and are data-dependent in the sense that they benefit from small optimization errors.

By combining our stability bounds with optimization error analysis, we obtained optimal EPR bounds of order $O(1/\sqrt{n})$ for both momentum methods, where $n$ is the sample size. Additionally, we developed several new proof techniques, including the use of weighted gradient summations to control momentum terms and the construction of an auxiliary sequence that simplifies optimization analysis. These techniques may facilitate future analyses of momentum-based algorithms.

\section*{Acknowledgements}
The work of Y. Lei was supported by the Hong Kong Research Grants Council under the GRF projects 17302624 and 17305425. The work of Z. Wang was supported by the Wallenberg AI, Autonomous Systems and Software Program (WASP) funded by the Knut and Alice Wallenberg Foundation. The work of X. Yuan was supported by the Hong Kong Research Grants Council under the GRF project 17309824.

\setlength{\bibsep}{0.111cm}
\bibliographystyle{abbrvnat}
\small
\bibliography{learning}
\end{document}